\theoremstyle{plain}
\newtheorem{theorem}{Theorem}[section]
\newtheorem{proposition}[theorem]{Proposition}
\newtheorem{lemma}[theorem]{Lemma}
\theoremstyle{definition}
\newtheorem{definition}[theorem]{Definition}
\theoremstyle{remark}
\newtheorem{remark}[theorem]{Remark}
\definecolor{softpurple}{RGB}{128, 70, 170}
\renewcommand\paragraph{\@startsection{paragraph}{4}{\z@}%
  {0ex \@plus 0.05ex \@minus 0.05ex}%
  {-2em}%
  {\normalfont\normalsize\bfseries}}
\title{Sequential Subspace Noise Injection \\ Prevents Accuracy Collapse in Certified Unlearning}
\author{
Polina Dolgova$^{1,2}$ \quad
Sebastian U. Stich$^{1}$\\
$^1$CISPA Helmholtz Center for Information Security \quad
$^2$Universität des Saarlandes\\
Saarbrücken, Germany\\
\texttt{\{polina.dolgova, stich\}@cispa.de}
}
\begin{document}

\maketitle

\begin{abstract}
    Certified unlearning via differential privacy provides formal guarantees but remains impractical due to the severe accuracy collapse observed in current noisy fine-tuning (NFT) approaches. We propose Block-wise Noisy Fine-Tuning, which utilizes sequential subspace noise injection to partition the parameter space into orthogonal blocks and update only one block per step. This modification redistributes the noise budget over time; by restricting perturbations to lower-dimensional subspaces, we significantly reduce per-step distortion and prevent noise from overwhelming the training signal.
    We extend the privacy analysis to our block-wise schedule, proving that the same $(\varepsilon,\delta)$ privacy budget is retained.
    Empirical results on image classification benchmarks demonstrate that our approach prevents accuracy collapse and remains robust to membership inference attacks, bridging the gap between rigorous guarantees and practical utility.
\end{abstract}

\section{Introduction} 

Machine unlearning aims to remove the influence of specified training data from a trained model, motivated by legal requirements such as the GDPR~\citep{european_commission_regulation_2016} and practical needs such as removing sensitive or malicious data~\citep{nguyen2024surveymachineunlearning}. Retraining on the retained data is the gold standard but is often computationally prohibitive. Existing methods are commonly grouped into exact, certified, and empirical approaches~\citep{pmlr-v119-guo20c,pmlr-v132-neel21a,fan2024salun,jia2023model}: empirical methods can preserve utility but lack formal removal guarantees, while certified methods provide guarantees at the cost of greater conservatism.

This paper focuses on the certified setting, where the goal is to obtain a formal distributional guarantee rather than only strong empirical forgetting scores. Certified unlearning is typically more conservative than empirical unlearning, and its practical scalability remains limited in very large models or under tight privacy budgets. Among certified approaches, differential-privacy-inspired methods are prominent \citep{10.1561/0400000042,pmlr-v108-balle20a,liu2023certified,allouah2025the}. Recent noisy fine-tuning (NFT) with gradient clipping \citep{koloskova2025certified} provides an $(\varepsilon,\delta)$ certificate for arbitrary models and losses, but its Gaussian noise can severely degrade utility. On \textsc{CIFAR-10} with ViT-Tiny, for example, NFT drops from $88\%$ to below $20\%$ test accuracy during unlearning and does not recover after subsequent fine-tuning (Figure~\ref{fig:cifar18_drop}).

\begin{figure}
    \centering
    \includegraphics[width=0.7\linewidth]{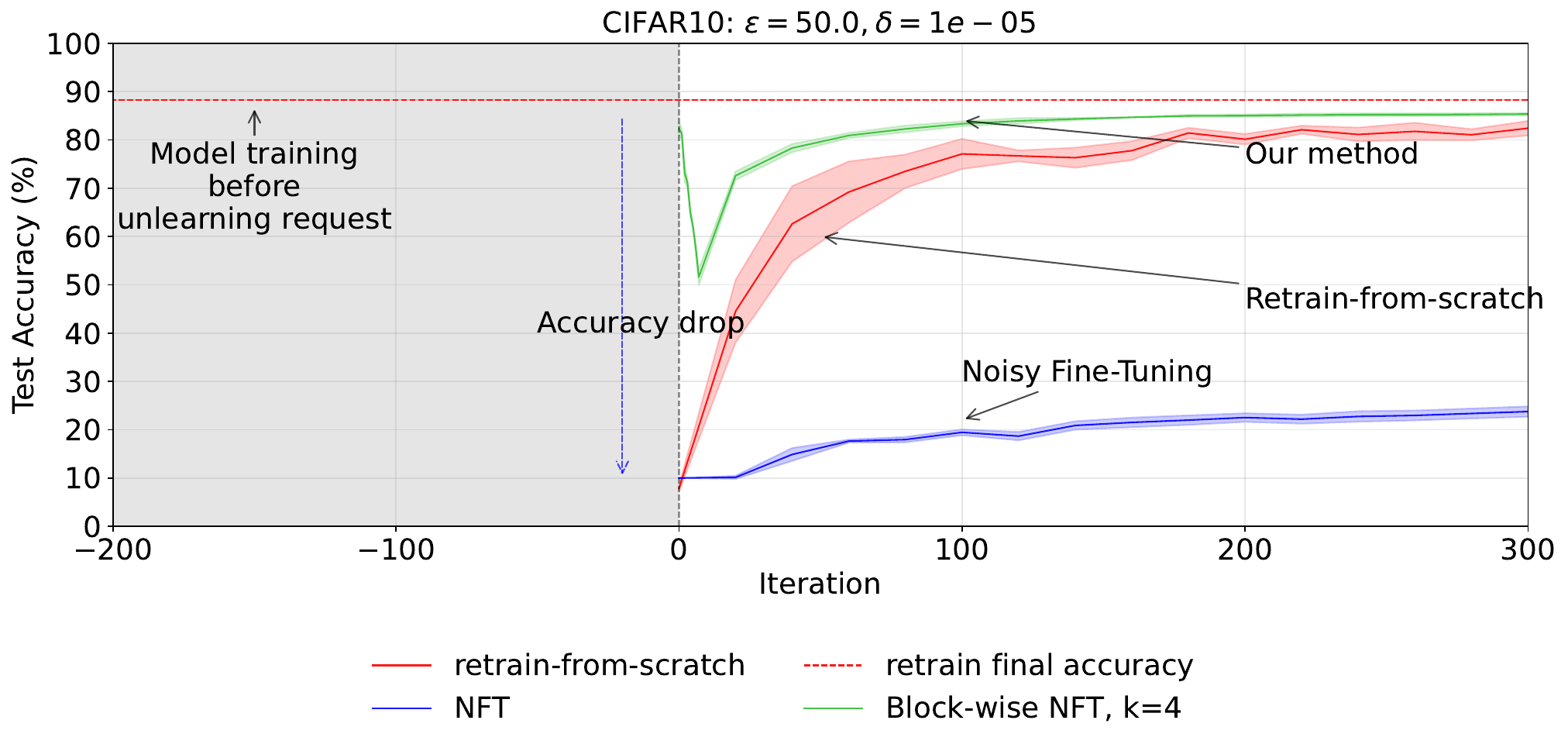}
    \caption{\textbf{Severe accuracy drop under noisy fine-tuning.} On \textsc{CIFAR-10} with ViT-Tiny, standard noisy fine-tuning (NFT, \citealp{koloskova2025certified}) test accuracy drops sharply from $88\%$ to below $20\%$ once the unlearning begins, and does not recover even after 300 subsequent fine-tuning steps.}
    \label{fig:cifar18_drop}
\end{figure}

To address this limitation, we propose \emph{Block-wise Noisy Fine-Tuning} (Block-wise NFT), a variant of NFT that improves utility while preserving certified guarantees. We partition the parameter space into mutually orthogonal subspaces and apply the noisy updates sequentially, injecting noise only in the active subspace while freezing the rest. The scalar noise variance required by the certificate is not reduced; the gain comes from perturbing only a lower-dimensional block at each step rather than the full parameter vector.
In our experiments, we use random orthogonal blocks, while other structured partitions (for example, layer-wise groupings) are possible.

We also reduce a second source of pessimism: worst-case model clipping. Standard NFT protects against arbitrary starting points, whereas certified unlearning compares the full-data model to the corresponding retained-data reference. We formalize this comparison by a high-probability proximity parameter $\Delta(\rho)$ under a fixed coupling of the two training runs. Replacing the worst-case radius by $\Delta(\rho)$ yields a certificate tailored to the relevant retraining comparison. Thus, our method addresses two NFT bottlenecks: full-space noise injection and worst-case initialization clipping.

The central challenge in certified unlearning is to design methods that \emph{simultaneously} preserve rigorous differential-privacy-based guarantees and avoid the severe accuracy degradation caused by heavy clipping and noise in current DP-inspired approaches.
Our work makes the following contributions:
\begin{itemize}[topsep=0pt,itemsep=0pt,leftmargin=12pt]
  \item \textbf{Method.} We introduce Block-wise NFT, which applies certified noisy updates sequentially across orthogonal parameter blocks, reducing per-step distortion while retaining full-parameter coverage over the schedule.
  
  \item \textbf{Certification.}
   We extend the NFT certification analysis to the
  block-wise schedule and prove that the same $(\varepsilon,\delta)$ unlearning
  guarantee is preserved. We further show that worst-case model clipping can be replaced by the calibration parameter $\Delta(\rho)$.
  
   \item \textbf{Experiments.} On \textsc{MNIST} and \textsc{CIFAR-10}, Block-wise NFT substantially reduces the accuracy collapse of standard NFT under the same $\Delta(\rho)$ calibration for both random and class-wise deletions, while remaining consistent with standard forgetting audits. We additionally report empirical unlearning
   baselines as auxiliary reference points.
    
  \item \textbf{Formal Bottleneck.} We show why worst-case, training-agnostic certification can create a utility-efficiency tension: under initialization-agnostic indistinguishability, maintaining high utility in substantially fewer steps than retraining would imply a faster retraining procedure. This explains why DP-based certified unlearning analyses can be overly conservative and motivates our proximity-based formulation.
  
\end{itemize}
In summary, our approach preserves formal certified unlearning guarantees while substantially mitigating the utility loss that has so far limited differentially private methods in practice.

\section{Preliminaries and Problem Statement} 

In this section, we establish notation and recall the standard definitions used throughout the paper.

\textbf{Setup.}
A (possibly randomized) learning algorithm $\mathcal{A}$ maps a dataset $\mathcal{D}$ to model parameters $\hat{\mathbf{x}}\in\mathbb{R}^d$, i.e., $\hat{\mathbf{x}}=\mathcal{A}(\mathcal{D})$. A deletion request specifies a subset $\mathcal{D}_f\subseteq\mathcal{D}$ to be removed; the retained data are $\mathcal{D}_r:=\mathcal{D}\setminus\mathcal{D}_f$. 
An unlearning mechanism $\mathcal{U}$ takes $(\hat{\mathbf{x}},\mathcal{D},\mathcal{D}_f)$ and, using randomness, outputs updated parameters
$\tilde{\mathbf{x}}=\mathcal{U}(\hat{\mathbf{x}},\mathcal{D},\mathcal{D}_f)$.

We adopt the definition from \citep{koloskova2025certified}, where the notion of certified approximate unlearning is introduced, building on an analogy with differential privacy.

\begin{definition}[($\varepsilon, \delta$)-unlearning \citep{koloskova2025certified}]~\label{def:unlearn}
Let $\varepsilon \geq 0$, $\delta \in [0,1]$.
We say that $\mathcal{U}$ is an $(\varepsilon, \delta)$-unlearning algorithm for $\mathcal{A}$ if there exists a certifying algorithm $\bar{\mathcal{A}}$ such that, for any forget dataset $\mathcal D_f \subset \mathcal D$ and any observation $O \subset \mathbb{R}^d$,
\begin{equation*}
\begin{aligned}
\Pr[\mathcal{U}(\mathcal{A}(\mathcal{D}), \mathcal{D}, \mathcal{D}_f) \in O] \leq e^{\varepsilon} \Pr[\bar{\mathcal{A}}(\mathcal{D} \setminus \mathcal{D}_f) \in O] + \delta,\\
\Pr[\bar{\mathcal{A}}(\mathcal{D} \setminus \mathcal{D}_f) \in O] \leq e^{\varepsilon} \Pr[\mathcal{U}(\mathcal{A}(\mathcal{D}), \mathcal{D}, \mathcal{D}_f) \in O] + \delta.
\end{aligned}
\end{equation*}
\end{definition}

Thus, throughout the paper, ``certified'' refers to closeness to the retained-data reference distribution, not to passing a particular empirical audit. Following \cite{koloskova2025certified}, we take the certifying algorithm to be $\bar{\mathcal{A}}(\mathcal{D} \setminus \mathcal{D}_f) = \mathcal{U}(\mathcal{A}(\mathcal{D}_r), \mathcal{D}_r, \varnothing)$, applying the same unlearning procedure to a retrained model. Thus, the guarantee compares the unlearned model to this retained-data reference distribution. It certifies indistinguishability, but not high utility.

We build upon the noisy fine-tuning method introduced by \citet{koloskova2025certified}, which is inspired by the standard DP-SGD algorithm \citep{abadi2016deep} and applied only to the retained data $D_r$. The method combines gradient clipping with Gaussian noise injection and is defined as follows:

\begin{definition}[Noisy fine-tuning \citep{koloskova2025certified}]~\label{def:noisyft}
\begin{subequations}\label{eq:nft}
\begin{align}
\mathbf{x}_0 &= \Pi_{C_0}(\hat{\mathbf{x}}), \quad
\mathbf{x}_{t+1} = \mathbf{x}_t - \gamma\big(\Pi_{C_1}(g_t)+\lambda \mathbf{x}_t\big)
                   + \boldsymbol{\xi}_{t+1}. 
\end{align}
\end{subequations}

where $\mathbf{x}_t$ are the parameters at iteration $t$, $g_t$ is the gradient at step $t$ (computed on $\mathcal{D}_r$), $\gamma > 0$ is the learning rate, $\lambda \geq 0$ is the weight decay parameter, $\boldsymbol{\xi}_{t+1} \sim \mathcal{N}(0, \sigma^2 I_d)$ is Gaussian noise, and  $\Pi_{C_0}, \Pi_{C_1}$ are clipping operators with radii $C_0, C_1 > 0$, defined as $\Pi_C(\mathbf{v}) := \mathbf{v} \cdot \min\bigl\{ \tfrac{C}{\|\mathbf{v}\|}, 1 \bigr\}.$
\end{definition}

For comparison, we also introduce notation for retrained models. Let $\hat{\textbf{x}}'=\mathcal{A}(\mathcal{D}_r)$ denote the model parameters obtained by training on the retained dataset $\mathcal{D}_r = \mathcal{D} \setminus \mathcal{D}_f$ with the same algorithm and a suitably \emph{fixed} coupling of randomness 
(e.g., matched random seeds or noise schedules). Accordingly, let $\textbf{x}'_t$ denote the iterates produced by applying the updates from Definition~\ref{def:noisyft} to the model $\hat{\textbf{x}}'$.

\section{Algorithm Motivation}

In this section, we identify two reasons why accuracy may degrade under certified unlearning via noisy fine-tuning (NFT), and propose two corresponding remedies. We first motivate \emph{block-wise} noise injection to reduce the destructive effect of isotropic noise, and then motivate using a \emph{proximity} parameter $\Delta(\rho)$ in place of worst-case model clipping.

\subsection{Necessity of block-wise noise scheduling}

A central challenge in certified unlearning based on noisy fine-tuning is the severe degradation of model accuracy observed during \textit{the unlearning phase}. Empirically, test accuracy often drops sharply once unlearning begins and cannot be fully recovered by subsequent fine-tuning. The reason is that the injected noise is large enough to dominate the gradient signal across all parameters.

We now formalize this limitation with a lower bound on the per-step noise level.
The bound is expressed in terms of $(q,\varepsilon^{\text{r\'enyi}})$-R\'enyi
Differential Privacy (RDP)~\citep{Mironov_2017}, where
$\varepsilon^{\text{r\'enyi}}$ denotes the privacy loss at order $q>1$. As
standard, an RDP guarantee can be converted into an
$(\varepsilon,\delta)$-guarantee (Definition~\ref{def:unlearn}) via
\[
  \varepsilon
  = \varepsilon^{\text{r\'enyi}} + \tfrac{\log(1/\delta)}{q-1}.
\]

\begin{theorem}[Per-step noise lower bound]\label{thm:noise-lb}
Let $\gamma>0$ be the learning rate and $\lambda \ge 0$ the weight decay
parameter, with $\gamma\lambda < 1$. Consider Noisy Fine-Tuning with gradient
clipping radii $C_0,C_1>0$ and Gaussian perturbations, certified via Rényi DP.
Then, within this shifted-Rényi certification analysis, any noise scale $\sigma$ that enables $(\varepsilon,\delta)$-unlearning
must satisfy
\begin{equation*}
        \sigma^2 \!
        \begin{cases}
            \ge\; \gamma(2 - \gamma \lambda)  \tfrac{2 q} {\varepsilon^{\text{r\'enyi}}} \Bigl(2 - \tfrac{\lambda C_0}{C_1}\Bigr) C_0C_1, & \text{if } \tfrac{\lambda C_0}{C_1} \in (0, 1), \\[1em]
            \;\;>\; \gamma(2 - \gamma \lambda) \tfrac{2 q} {\varepsilon^{\text{r\'enyi}}}  \tfrac{C^2_1}{\lambda}, & \text{if } \tfrac{\lambda C_0}{C_1} \in [1, \infty).
        \end{cases}
\end{equation*}
This inequality holds for \emph{any} number of unlearning steps $T$. The full dependence of the minimal step count $T(\sigma^2)$ on the noise level is derived in Appendix~\ref{app:lower-bound}. In particular, in the first regime $\tfrac{\lambda C_0}{C_1}\in(0,1)$ and for the
minimal feasible noise $\sigma^2_{\min}$, the required number of steps is
\[
  T(\sigma^2_{\min})
  \;=\;
  \frac{\log\!\bigl(1 - \tfrac{\lambda C_0}{C_1}\bigr)}{\log(1-\gamma\lambda)}.
\]
\end{theorem}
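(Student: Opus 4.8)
I would prove this as a converse: exhibit an admissible instance of the learning/unlearning problem on which the R\'enyi divergence between the laws of $\mathcal U(\mathcal A(\mathcal D),\mathcal D,\mathcal D_f)$ and of $\bar{\mathcal A}(\mathcal D_r)=\mathcal U(\mathcal A(\mathcal D_r),\mathcal D_r,\varnothing)$ is as large as possible, and then impose that it not exceed $\varepsilon^{\text{r\'enyi}}$. Both mechanisms run the identical update of Definition~\ref{def:noisyft} on the same retained set $\mathcal D_r$, so the only adversarial freedom is the pair of pretrained models $\hat{\mathbf x}=\mathcal A(\mathcal D)$, $\hat{\mathbf x}'=\mathcal A(\mathcal D_r)$ and the loss on $\mathcal D_r$ (hence the gradient field $g_t$); since the NFT certificate is claimed for arbitrary models and losses, all of these may be chosen freely. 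Certification via R\'enyi DP means the mechanism is $(q,\varepsilon^{\text{r\'enyi}})$-RDP, so for \emph{every} admissible instance $D_q\big(\mathrm{law}(\mathbf x_T)\,\|\,\mathrm{law}(\mathbf x'_T)\big)\le\varepsilon^{\text{r\'enyi}}$, where $\mathbf x_T,\mathbf x'_T$ are the $T$-th iterates of the two coupled runs; one bad instance therefore forces a lower bound on $\sigma$.

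\paragraph{The hard instance.}
I take $d=1$ (the bound is dimension-free). Put $\hat x=C_0$, $\hat x'=-C_0$, so after the model-clipping step $x_0=C_0$ and $x_0'=-C_0$, and let the loss on $\mathcal D_r$ be the concave quadratic $x\mapsto-\tfrac{M}{2}x^2$ with $M$ large, giving $g_t(x)=-Mx$ and $\Pi_{C_1}(g_t(x))=-C_1\sign(x)$ once $|x|>C_1/M$. While $x_t>0$ (resp.\ $x'_t<0$) the update becomes the time-homogeneous affine recursion $x_{t+1}=(1-\gamma\lambda)x_t+\gamma C_1+\xi_{t+1}$ (resp.\ with $-\gamma C_1$), and by symmetry $\mathrm{law}(x'_t)=\mathrm{law}(-x_t)$. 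Unrolling, $x_T\sim\mathcal N(m,v)$ and $x'_T\sim\mathcal N(-m,v)$ with
\[
m=(1-\gamma\lambda)^TC_0+\tfrac{C_1}{\lambda}\big(1-(1-\gamma\lambda)^T\big),\qquad v=\sigma^2\,\frac{1-(1-\gamma\lambda)^{2T}}{\gamma\lambda(2-\gamma\lambda)} .
\]
Using the exact R\'enyi identity for equal-covariance Gaussians, $D_q(\mathcal N(m,v)\|\mathcal N(-m,v))=q(2m)^2/(2v)$. Writing $u:=(1-\gamma\lambda)^T\in(0,1)$, so that $m=\tfrac{C_1}{\lambda}+\big(C_0-\tfrac{C_1}{\lambda}\big)u$ and $1-(1-\gamma\lambda)^{2T}=1-u^2$, the requirement $D_q\le\varepsilon^{\text{r\'enyi}}$ rearranges to $\sigma^2\ge L(u):=\tfrac{2q}{\varepsilon^{\text{r\'enyi}}}\gamma\lambda(2-\gamma\lambda)\,m^2/(1-u^2)$.

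\paragraph{Optimizing over the step count.}
Since the certificate must hold whatever $T$ the algorithm runs, the statement valid for \emph{all} $T$ is $\sigma^2\ge\inf_{u\in(0,1)}L(u)$. Set $a:=C_0$, $b:=C_1/\lambda$, $h(u):=m^2/(1-u^2)$ with $m=b+(a-b)u$; then $h'(u)\propto\big(b+(a-b)u\big)\big((a-b)+bu\big)$. If $a<b$, i.e.\ $\lambda C_0/C_1\in(0,1)$, the only critical point in $(0,1)$ is the minimum $u^\star=1-a/b=1-\lambda C_0/C_1$, with $h(u^\star)=a(2b-a)=\tfrac{C_0C_1}{\lambda}\big(2-\tfrac{\lambda C_0}{C_1}\big)$; substituting into $L$ gives exactly the first case of the theorem, and $u^\star=(1-\gamma\lambda)^{T^\star}$ yields $T^\star=\log(1-\lambda C_0/C_1)/\log(1-\gamma\lambda)$. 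Since $L(u)$ attains this minimal value only at $u=u^\star$, that is the claimed minimal-noise step count; solving $L(u)\le\sigma^2$ for $u$, hence for $T$, produces the general profile $T(\sigma^2)$ of Appendix~\ref{app:lower-bound}. If $a\ge b$, i.e.\ $\lambda C_0/C_1\in[1,\infty)$, then $h$ is nondecreasing on $(0,1)$ and strictly greater than $h(0^+)=b^2=C_1^2/\lambda^2$ for every finite $T$, so $\sigma^2>\gamma(2-\gamma\lambda)\tfrac{2q}{\varepsilon^{\text{r\'enyi}}}\,C_1^2/\lambda$, the strict second case.

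\paragraph{Main obstacle.}
The delicate point is rigor in the hard-instance step: the reduction to the affine recursion is exact only while the iterates stay on the correct half-line, and Gaussian noise crosses the origin with small positive probability (after which the clipped gradient reverses and pulls the iterate back, which only increases the separation but perturbs the exact law). I would close this either by a data-processing/restriction argument, lower-bounding $D_q(\mathrm{law}(\mathbf x_T)\|\mathrm{law}(\mathbf x'_T))$ by the contribution of the sign-consistent event whose complement is negligible exactly in the regime where the bound binds, or by a limiting construction in which the drift-to-noise ratio tends to infinity. Everything else is the exact Gaussian R\'enyi identity together with a one-variable optimization.
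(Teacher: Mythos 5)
Your proposal takes a genuinely different route from the paper, and while the algebra lands on the same formulas, the two arguments are not proving the same thing, and your version has a gap you have not closed.

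The paper's proof stays entirely inside the Koloskova et al.\ analysis: it takes the sufficient condition of their Theorem A.9 (existence of nonnegative shift magnitudes $a_0,\dots,a_{T-1}$ solving the recursion that bounds $D_q(x_T\|x'_T)$), applies Cauchy--Schwarz to find the optimal $a_t$ for a given noise level, substitutes $x=(1-\gamma\lambda)^T$, and reduces the question of whether \emph{any} $T$ works to whether a certain quadratic in $x$ has a root in $(0,1]$. The discriminant condition is the claimed lower bound on $\sigma^2$, and the root $x_{\max}$ at the minimal feasible noise gives $T(\sigma^2_{\min})$. This is exactly what the paper means by ``proof-technique limited'': it is a necessary condition for the Koloskova-style certificate to exist, not an information-theoretic converse. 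Your proposal instead constructs an adversarial one-dimensional instance and computes the actual R\'enyi divergence between the two output laws, which, if correct, would establish the strictly stronger information-theoretic statement that the paper explicitly disclaims. The fact that both routes produce the same formula is itself interesting (it suggests the Koloskova bound is tight on your instance), but it also means your result would need to supersede the paper's weaker claim, so the burden of rigor is correspondingly higher.

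The gap you flag at the end is real and your proposed fix does not work as stated. You hope to ``lower-bound $D_q(\mathrm{law}(\mathbf x_T)\,\|\,\mathrm{law}(\mathbf x'_T))$ by the contribution of the sign-consistent event'' via a data-processing argument, but data processing runs the wrong way: restriction (or any post-processing) gives $D_q(f(X)\|f(Y))\le D_q(X\|Y)$, an \emph{upper} bound on the processed divergence, not a lower bound on the original. Once noise can push iterates across zero, the laws are genuinely mixtures, not Gaussians, and the exact divergence could be smaller than $q(2m)^2/(2v)$; you would then obtain a weaker lower bound on $\sigma^2$, not the clean formula. Near the minimal feasible noise this is precisely the regime that matters, and scaling the whole problem does not help since the crossing probability is scale-invariant. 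If you want to close this you would need either (i) a careful quantitative estimate showing the crossing contribution is $o(1)$ as $\gamma\lambda\to 0$ with the remaining quantities fixed, and then pass to the limit; or (ii) a smarter $d$-dimensional instance where one can exhibit a deterministic linear functional whose image is exactly Gaussian and already saturates the bound (so that a single application of data processing in the \emph{allowed} direction gives the lower bound). As written, the one-dimensional construction plus restriction is not a proof. Everything upstream --- the affine recursion, the Gaussian R\'enyi identity, the optimization of $m^2/(1-u^2)$ in $u$, the two regimes $a<b$ and $a\ge b$, and the extraction of $T^\star$ --- is correct and matches the paper's final formulas.
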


\textbf{Interpretation.}
Theorem~\ref{thm:noise-lb} is a \emph{necessary condition within the proof
framework of \citep{koloskova2025certified}}. If $\sigma^2$ falls below the
stated threshold, the divergence bound (Theorem~A.9 in their work) cannot be
satisfied, and the mechanism cannot be certified as
$(\varepsilon,\delta)$-unlearning by this analysis. This does not rule out that
other algorithms or analyses might achieve valid guarantees with smaller noise:\
the result is proof-technique limited, not an information-theoretic
impossibility.

This observation reveals a practical bottleneck of NFT in over-parameterized networks: within this framework, a non-negligible amount of Gaussian noise must be injected at \emph{every} unlearning step. The key remaining choice is therefore how to distribute the noise across coordinates during optimization.

\textbf{From global noise to blocks.}
To avoid perturbing all parameters simultaneously, we use a block-wise (subspace-wise) schedule: at each step, we inject noise and apply the clipped update only within one orthogonal block, keeping the others fixed. Redistributing the same certified noise budget across blocks and time reduces per-step distortion and makes subsequent fine-tuning more stable.

\textbf{Intuition.}
Even at the minimal feasible noise level $\sigma_{\min}$, every coordinate receives Gaussian noise at each step, so the noise vector typically has $\ell_2$-norm about $\sigma_{\min}\sqrt{d}$. For networks with millions of parameters, 
this perturbation causes the additive noise to dominate the \emph{clipped} update across many coordinates, explaining the sharp accuracy degradation observed in practice.
The two regimes
$\tfrac{\lambda C_0}{C_1}<1$ versus $\tfrac{\lambda C_0}{C_1}\ge1$
reflect whether gradient clipping or weight decay dominates the dynamics.

Taken together, the theorem highlights why NFT struggles in over-parameterized models: certification forces the injection of noise into \emph{all} coordinates at each step. This motivates our adaptation based on \emph{sequential subspace injection}, where the same noise budget is redistributed across orthogonal subspaces instead of being applied globally.

\subsection{Retrained model localization}

\label{sec:C0}

Independent of how noise is injected, certified unlearning methods whose guarantees are formulated in a \emph{training-agnostic, worst-case} manner must account for arbitrary model initializations, which can make it impossible to guarantee \emph{both} good performance and faster-than-retraining unlearning.

We illustrate this limitation through a simple thought experiment in the context of NFT, which motivates our later replacement of the model clipping radius~$C_0$ with a tighter closeness parameter~$\Delta(\rho)$.

\textbf{Setup.}
Let $T_{\mathrm{retrain}}$ denote the minimal number of training steps required to retrain a model from scratch on the retain set $D_r$ so as to achieve the same performance guarantee as that certified for NFT after $T$ unlearning steps. By construction, any procedure that attains this guarantee in fewer than $T_{\mathrm{retrain}}$ steps would constitute a faster retraining algorithm, contradicting the definition of $T_{\mathrm{retrain}}$.

In the standard NFT analysis, the number of unlearning steps $T$ is fixed in advance as a function of the privacy budget $(\varepsilon,\delta)$, the learning
rate, and the clipping parameters. Crucially, $T$ does \emph{not} depend on the initialization of the model parameters. Moreover, the certified guarantee is
formulated in the worst case:\ it requires NFT to produce indistinguishable outcomes for \emph{any} two initializations.

\textbf{Thought experiment.}
Suppose that NFT, when initialized at the fully trained model $\hat{\textbf{x}}$, reaches performance at least $\alpha$ with probability $p$ after $T$ steps, where $T \ll T_{\mathrm{retrain}}$. Since the guarantee is initialization-agnostic and holds for any model, we may instead initialize NFT from a randomly initialized model $\mathbf{x}_{\mathrm{init}}$. Moreover, by the $(\varepsilon,\delta)$ guarantee, the probabilities of unfavorable performance events under these two initializations are comparable:

\begin{proposition}~\label{prop:thought-exp}
Fix a performance threshold $\alpha$. Let $\mathrm{Perf}:\mathbb{R}^d\to\mathbb{R}$ denote the performance of a model (as a function of its parameters), and define the unfavorable event
\[
\mathcal{E}_{\mathrm{fail}} := \{\, \mathbf{x}\in\mathbb{R}^d : \mathrm{Perf}(\mathbf{x}) < \alpha \,\}.
\]
If $\mathcal{U}$ satisfies the worst-case, initialization-agnostic certification~\eqref{eq:init-agnostic}, then
\[
\Pr\!\big[\mathcal{U}(\mathbf{x}_{\mathrm{init}})\in \mathcal{E}_{\mathrm{fail}}\big]
\;\le\;
e^{\varepsilon}\Pr\!\big[\mathcal{U}(\hat{\mathbf{x}})\in \mathcal{E}_{\mathrm{fail}}\big] + 2\delta.
\]
\end{proposition}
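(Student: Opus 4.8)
The plan is to prove the bound by a ``triangle inequality'' through the retrained reference distribution $\bar{\mathcal{A}}(\mathcal{D}_r)=\mathcal{U}(\mathcal{A}(\mathcal{D}_r),\mathcal{D}_r,\varnothing)$, which by construction depends only on the retain set $\mathcal{D}_r$ and therefore serves as a common comparison point for both the random initialization $\mathbf{x}_{\mathrm{init}}$ and the fully trained model $\hat{\mathbf{x}}$. Concretely, I would first fix the observation set $O:=\mathcal{E}_{\mathrm{fail}}$ (a sublevel set of $\mathrm{Perf}$, hence an admissible $O\subset\mathbb{R}^d$), then invoke the worst-case, initialization-agnostic certification~\eqref{eq:init-agnostic} with initialization $\mathbf{x}_{\mathrm{init}}$ to obtain $\Pr[\mathcal{U}(\mathbf{x}_{\mathrm{init}})\in\mathcal{E}_{\mathrm{fail}}]\le e^{\varepsilon}\Pr[\bar{\mathcal{A}}(\mathcal{D}_r)\in\mathcal{E}_{\mathrm{fail}}]+\delta$, and finally invoke the certification (equivalently Definition~\ref{def:unlearn}) for the genuine unlearning run started at $\hat{\mathbf{x}}=\mathcal{A}(\mathcal{D})$, in the direction that bounds $\Pr[\bar{\mathcal{A}}(\mathcal{D}_r)\in\mathcal{E}_{\mathrm{fail}}]$ in terms of $\Pr[\mathcal{U}(\hat{\mathbf{x}})\in\mathcal{E}_{\mathrm{fail}}]$. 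Substituting the third step into the second and collecting the two additive error terms yields the claim, with a single multiplicative factor $e^{\varepsilon}$ and total additive slack $2\delta$.

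Two routine points need to be dispatched along the way, and I do not expect either to cause trouble. First, $\mathbf{x}_{\mathrm{init}}$ is itself random, whereas~\eqref{eq:init-agnostic} is stated for a fixed initialization; since that bound is uniform over all $\mathbf{z}\in\mathbb{R}^d$, it passes through the expectation over the law of $\mathbf{x}_{\mathrm{init}}$, so $\Pr[\mathcal{U}(\mathbf{x}_{\mathrm{init}})\in O]=\mathbb{E}_{\mathbf{z}}\,\Pr[\mathcal{U}(\mathbf{z})\in O]$ still obeys the inequality. Second, one should observe that the initial model-clipping step $\Pi_{C_0}$ in Definition~\ref{def:noisyft} maps every initialization into the same ball of radius $C_0$, which is exactly why the divergence analysis underlying~\eqref{eq:init-agnostic} applies verbatim to a run started from $\mathbf{x}_{\mathrm{init}}$ with the same constants $(\varepsilon,\delta)$: no new analysis is required, only the observation that the certified statement was never specific to the particular initialization $\mathcal{A}(\mathcal{D})$.

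The only place that requires genuine care is the bookkeeping of constants: a naive chaining of two $(\varepsilon,\delta)$-indistinguishability statements would produce $e^{2\varepsilon}$ and $(e^{\varepsilon}+1)\delta$, which is weaker than claimed. To land exactly at $e^{\varepsilon}$ and $2\delta$ one must use the two hops asymmetrically, so that the multiplicative factor is incurred only once; depending on the precise form of~\eqref{eq:init-agnostic}, this either is immediate (if~\eqref{eq:init-agnostic} already directly compares two arbitrary initializations, in which case a single application suffices and $\delta\le 2\delta$ is pure slack) or follows by noting that the hop between the $\mathbf{x}_{\mathrm{init}}$-run and the retrained reference can be controlled at the level of statistical distance alone — the injected noise swamps the (bounded, post-clipping) difference in initializations under a shared coupling of the noise schedule — leaving the genuine unlearning hop, reference versus $\mathcal{U}(\hat{\mathbf{x}})$, as the sole source of the $e^{\varepsilon}$ factor. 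I expect verifying this asymmetric accounting to be the main obstacle. The symmetric lower bound, if needed, follows by the identical argument with the roles of $\mathbf{x}_{\mathrm{init}}$ and $\hat{\mathbf{x}}$ exchanged.
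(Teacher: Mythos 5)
Your primary plan is the wrong one, and you correctly diagnose it yourself: chaining $\mathcal{U}(\mathbf{x}_{\mathrm{init}}) \leftrightarrow \bar{\mathcal{A}}(\mathcal{D}_r) \leftrightarrow \mathcal{U}(\hat{\mathbf{x}})$ naively gives $e^{2\varepsilon}$ and $(e^{\varepsilon}+1)\delta$, which is not what the proposition claims. The escape route you label (a) — a \emph{single} application of the initialization-agnostic bound, plus the pure slack $\delta \le 2\delta$ — is in fact the paper's argument, and it \emph{is} available: the paper's \eqref{eq:init-agnostic} is stated directly for any two fixed initializations of the same mechanism $\mathcal{U}$ (in particular $\hat{\mathbf{x}}$ and $\mathbf{x}_{\mathrm{init}}$), so one application of its second inequality to $O=\mathcal{E}_{\mathrm{fail}}$ already yields $\Pr[\mathcal{U}(\mathbf{x}_{\mathrm{init}})\in\mathcal{E}_{\mathrm{fail}}] \le e^{\varepsilon}\Pr[\mathcal{U}(\hat{\mathbf{x}})\in\mathcal{E}_{\mathrm{fail}}] + \delta$. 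The retrained reference $\bar{\mathcal{A}}(\mathcal{D}_r)$ never needs to appear; there is no triangle to close. Your escape route (b) — controlling one hop at the level of total variation because "the injected noise swamps the bounded post-clipping difference in initializations" — should be dropped: no such TV bound is provided by, or even implicit in, the clipping-plus-Gaussian analysis (if it were, you would not need the differential-privacy machinery for that hop at all), so this branch of your argument is speculative and unsupported.

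Two smaller notes. First, you treat the extra $\delta$ as free slack; the paper's proof actually earns its $2\delta$ by applying \eqref{eq:init-agnostic} a second time to $O=\mathbb{R}^d\setminus\mathcal{E}_{\mathrm{full}}$, the complement of the achievable set of $\mathcal{U}(\hat{\mathbf{x}})$, which it needs for the remainder of the thought experiment (to argue that the $\mathbf{x}_{\mathrm{init}}$-run also lands in the region where the retraining-time comparison is meaningful); for the bare proposition as stated, your single application with $\delta\le 2\delta$ is sufficient. Second, your handling of the random initialization $\mathbf{x}_{\mathrm{init}}$ (uniform-over-initializations bound passes through the mixture) is fine, but in this paper both $\hat{\mathbf{x}}$ and $\mathbf{x}_{\mathrm{init}}$ are themselves random quantities and the worst-case guarantee is uniform over fixed starting points, so the same marginalization is implicitly being used on both sides.

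Bottom line: commit to your fallback (a) as the actual proof, discard the two-hop scaffolding and option (b), and the argument coincides with the paper's.
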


We provide the proof in App.~\ref{app:dp-perf}. Thus, after the same $T$ steps, this procedure must produce a model whose performance is $(\varepsilon, \delta)$-close to that of NFT started from $\hat{\textbf{x}}$. This, in turn, suggests that retraining from scratch can reach accuracy close to $\alpha$ in just $T$ steps. This directly contradicts the definition of $T_{\mathrm{retrain}}$ as the minimal number of steps required for retraining to the same certified performance level. Hence, NFT cannot at the
same time (i) guarantee good performance and (ii) be strictly faster than
retraining under the worst-case clipping analysis. The intuition behind the full thought experiment is illustrated in
Fig.~\ref{fig:illustration}.

\begin{figure}[t]
    \centering
    \includegraphics[width=0.6\linewidth]{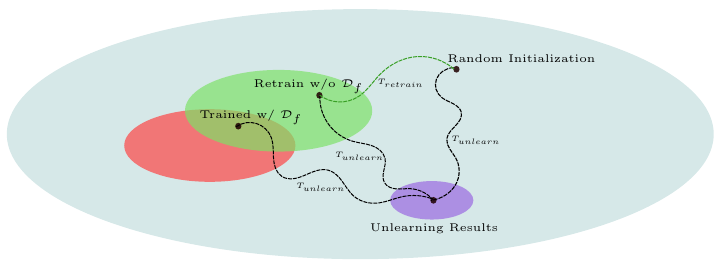}
    \caption{\textbf{Illustration of the initialization-agnostic bottleneck for Noisy Fine-Tuning.} 
    The illustration shows fully-trained model $\textbf{x}_0$, retrained model $\textbf{x}'_0$ and $\textbf{x}_{\rm init}$ after model clipping. Unlearning trajectories require $T < T_{\rm retrain}$ steps. However, we need at least $T_{\rm retrain}$ steps of unlearning on the $\textbf{x}_{\rm init}$ to reach good quality (the \textcolor{green!60!black}{green region}). Therefore, results of unlearning ($T$ steps from $\textbf{x}_{\rm init}$) cannot obtain a good quality.}
    \label{fig:illustration}
\end{figure}

\begin{remark}
Although we present this argument for NFT, the same limitation applies to any training-agnostic unlearning algorithm whose number of update steps $T$ does not depend on the model’s initialization. The argument further relies on the retraining time required to match slightly weakened performance guarantees being comparable to that for the original guarantee; we formalize this general statement and discuss its limitations in Appendix~\ref{app:dp-perf-2}.
\end{remark}

\textbf{Implication.}
The contradiction highlights that the model clipping-based guarantee is \emph{too
strong}:\ it enforces indistinguishability across all possible initializations,
even completely random ones, which is not required by the definition of
certified unlearning. In practice, the distance between the fully trained model
$\hat{\textbf{x}}$ and the retrained model $\hat{\textbf{x}}'$ is much smaller than the clipping
diameter $2C_0$. We therefore replace the initial clipping with the clipping radius $C_0$ by a high-probability bound
$\Delta(\rho)/2$ on this distance, yielding significantly tighter and more
practical guarantees in the subsequent analysis.

\begin{definition}[High-Probability Initial Discrepancy]
\label{def:initial-discrepancy}
For any failure probability $\rho \in (0,1]$, 
we define the \emph{initial discrepancy} $\Delta(\rho)$ between the fully trained model
$\hat{\textbf{x}}$ and the retrained model $\hat{\textbf{x}}'$ as the smallest value satisfying
\begin{equation}
\Pr\bigl[\|\hat{\textbf{x}}-\hat{\textbf{x}}'\|\le \Delta(\rho)\bigr]\ge 1-\rho.
\end{equation}
\end{definition}

\begin{remark}
Discrepancy $\Delta(\rho)$ for $\rho = 1$ is connected to the sensitivity \citep{10.1561/0400000042} of the model’s output. However, we are only considering a particular joint distribution $(X, X’)$ while sensitivity takes supremum over all possible $X$ and $X'$.
\end{remark}

\begin{remark}
\label{rem:coupling}
Different couplings may yield different values of $\Delta(\rho)$; 
we take the coupling to be part of the definition of the certifying 
procedure and hold it fixed throughout the analysis.
\end{remark}
\textbf{Discussion.}
Removing model clipping and calibrating noise using $\Delta(\rho)$ instead of the worst-case bound $2C_{0}$ yields smaller noise and sharper guarantees whenever the two trained starting points are close. Conceptually, this replaces a uniform guarantee over all initialization pairs in the $C_{0}$-ball with a guarantee for the specific coupled distribution of initializations induced by full-data training and retained-data training, which is exactly what the certified unlearning definition requires.

In this work, we treat $\Delta(\rho)$ as a \textit{conditioning parameter} and interpret both the analysis and the experiments for a given instantiated value of $\Delta(\rho)$. The unlearning procedure remains certified under the stated proximity condition. Empirically, we instantiate $\Delta(\rho)$ using practical estimation procedures (rather than assuming an oracle bound), and show that under this calibration our method remains competitive with existing empirical (uncertified) unlearning approaches.

\section{Algorithm}
\vspace*{-0.5ex}

\subsection{Block-wise noisy fine-tuning}

In our approach, we fix the number of subspaces (blocks) $k \in \mathbb{N}$ and partition the weight space $\mathbb{R}^d$ into $k$ mutually orthogonal components. To construct this partition, choose integers $r_1,\dots,r_k\ge 0$ with $\sum_{i=1}^k r_i=d$ and build matrices $A_i\in\mathbb{R}^{d\times r_i}$ whose columns are orthonormal. Define
\begin{equation}\label{eq:matr-a}
A \;:=\; [\,A_1\;\cdots\;A_k\,] \in \mathbb{R}^{d\times d},
\qquad\text{so that } A^\top A = I_d .
\end{equation}
Then the subspaces are $V_i := \mathrm{span}(A_i)$; they are mutually orthogonal and
$\mathbb{R}^d = \bigoplus_{i=1}^k V_i$.

Note that this construction is fully general: the subspaces $V_i$ may be chosen \emph{arbitrarily} subject to orthogonality. Moreover, since $A=[A_1\cdots A_k]$ is orthogonal, every $W\in\mathbb{R}^d$ decomposes uniquely as
$W=\sum_{i=1}^k A_iB_i$, where $B_i=A_i^\top W$; see Appendix~\ref{app:theory-minor}. Algorithm~\ref{alg:block-noisy-finetuning} then applies noisy fine-tuning sequentially to these blocks and finally performs standard fine-tuning on the full model.

\begin{algorithm}[tb]
\caption{Block-wise noisy fine-tuning for unlearning}
\label{alg:block-noisy-finetuning}
\begin{algorithmic}[1]

\REQUIRE model $\hat{\textbf{x}}$, parameters $\gamma, \lambda, \Delta(\rho), C_1$, number of blocks $k$ and privacy budget parameters $(\varepsilon, \delta)$.

\STATE Define projection matrices $A_1, \dots, A_k$.
\STATE Decompose the model weights $\hat{\textbf{x}}$ as $\hat{\textbf{x}} = \sum_{i=1}^k A_i B_i$ (the $A_i$ are fixed and not trained).
\FOR{$i = 1, \dots, k$}
    \STATE Freeze all parameters except $B_i$.
    \STATE Calculate the noise variance $\sigma^2$ and the number of steps $T$ using the formula from Theorem~\ref{thm:noise-lb} with $C_0$ replaced by $\Delta(\rho)/2$.
    \STATE Apply noisy fine-tuning to $B_i$ without the initial model-clipping step $C_0$.
\ENDFOR
\STATE Run several standard fine-tuning steps with all model parameters unfrozen.
\end{algorithmic}
\end{algorithm}

\begin{remark}
At each sequential step, noise is injected only into a single block of dimension $r_i$ (rather than all $d$ parameters). For equal-size blocks $r_i=d/k$, each step perturbs only a $1/k$ fraction of coordinates. Hence, while the noise variance $\sigma^2$ is unchanged, the per-step perturbation is smaller, helping preserve utility.
\end{remark}

\subsection{Theoretical guarantees}

We also prove that our algorithm preserves theoretical unlearning guarantees. The same statements hold in the unconditional clipping-based formulation (radius $C_0$).

\begin{theorem}\label{thm:block-guarantee}
For the decomposition $W = \sum_{i=1}^{k} A_i B_i$ and noisy fine-tuning algorithm with privacy budget parameters $(\varepsilon_i, \delta)$ there is an $(\varepsilon, \delta)$ unlearning guarantee, where~
\begin{equation}
            \varepsilon = \sum_{i=1}^{k} \varepsilon^{\text{r\'enyi}}_i + \frac{\log(1/\delta)}{q - 1}.
\end{equation}
\end{theorem}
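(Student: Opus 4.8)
The plan is to reduce the statement to Rényi-DP composition. Write $B_i=A_i^\top W$ for the $i$-th block of a vector $W$; since $A^\top A=I_d$, the map $W\mapsto(B_1,\dots,B_k)$ is an orthogonal change of coordinates, so the squared initial discrepancy splits as $\|\hat{\mathbf{x}}-\hat{\mathbf{x}}'\|^2=\sum_{i=1}^k\|A_i^\top(\hat{\mathbf{x}}-\hat{\mathbf{x}}')\|^2$. Hence on the event of Definition~\ref{def:initial-discrepancy} every block satisfies $\|A_i^\top(\hat{\mathbf{x}}-\hat{\mathbf{x}}')\|\le\Delta(\rho)$, which is exactly the proximity parameter the per-block noise calibration in Algorithm~\ref{alg:block-noisy-finetuning} (Theorem~\ref{thm:noise-lb} with $C_0$ replaced by $\Delta(\rho)/2$) is designed for. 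Restricted to block $i$ and written in the $B_i$-coordinates, the $i$-th phase of Algorithm~\ref{alg:block-noisy-finetuning} is precisely noisy fine-tuning without model clipping (Definition~\ref{def:noisyft}) in dimension $r_i$ with noise $\boldsymbol{\xi}_{t+1}\sim\mathcal{N}(0,\sigma^2 I_{r_i})$, so the single-block analysis underlying Theorem~\ref{thm:noise-lb} certifies that this phase is $(q,\varepsilon_i^{\text{r\'enyi}})$-RDP with respect to the $\Delta(\rho)$-proximity of the two coupled initializations, all at a common order $q$ chosen in the calibration.

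The second step is to view the whole algorithm as an adaptive composition of the $k$ block phases, organizing the output correctly. Let $Y_i$ be the value of block $i$ at the \emph{end} of phase $i$ (after which it is frozen), so the final model is the concatenation $(Y_1,\dots,Y_k)$ in the $A$-basis. Phase $i$ produces $Y_i$ from the state at the start of phase $i$, which in the $A$-basis equals $(Y_1,\dots,Y_{i-1},A_i^\top\hat{\mathbf{x}},A_{i+1}^\top\hat{\mathbf{x}},\dots,A_k^\top\hat{\mathbf{x}})$: the already-processed blocks $Y_{<i}$ are outputs of earlier mechanisms, and the as-yet-unprocessed blocks enter only through the \emph{clipped} gradients, whose sensitivity is bounded by $2C_1$ no matter how different the two runs' parameters are. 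Thus phase $i$, as a mechanism in the initialization with $Y_{<i}$ held fixed, is $(q,\varepsilon_i^{\text{r\'enyi}})$-RDP, and the RDP composition theorem of \citet{Mironov_2017} at order $q$ gives that $(Y_1,\dots,Y_k)$ is $\bigl(q,\sum_{i=1}^k\varepsilon_i^{\text{r\'enyi}}\bigr)$-RDP. The concluding standard fine-tuning steps on $\mathcal{D}_r$ are the same randomized map applied under the fixed coupling to both the unlearning output and the reference $\bar{\mathcal{A}}(\mathcal{D}_r)$, so by the post-processing property of RDP they do not increase the Rényi divergence.

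Finally I convert back: combining the composed $\bigl(q,\sum_i\varepsilon_i^{\text{r\'enyi}}\bigr)$-RDP bound with the Rényi-to-$(\varepsilon,\delta)$ conversion $\varepsilon=\varepsilon^{\text{r\'enyi}}+\log(1/\delta)/(q-1)$ recalled above yields $(\varepsilon,\delta)$-unlearning (Definition~\ref{def:unlearn}) with $\varepsilon=\sum_{i=1}^k\varepsilon_i^{\text{r\'enyi}}+\log(1/\delta)/(q-1)$, the first displayed equality; the failure probability $\rho$ of Definition~\ref{def:initial-discrepancy} is carried by the proximity conditioning exactly as elsewhere in the paper. The second equality is bookkeeping: since each block is calibrated so that $\varepsilon_i=\varepsilon_i^{\text{r\'enyi}}+\log(1/\delta)/(q-1)$, substituting $\varepsilon_i^{\text{r\'enyi}}=\varepsilon_i-\log(1/\delta)/(q-1)$ gives $\sum_i\varepsilon_i^{\text{r\'enyi}}+\log(1/\delta)/(q-1)=\sum_i\varepsilon_i-(k-1)\log(1/\delta)/(q-1)$.

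The main obstacle I expect is not the composition, which is essentially off the shelf once the bookkeeping is right, but making the per-block reduction rigorous: one must check that the divergence/contraction argument of \citet{koloskova2025certified} behind Theorem~\ref{thm:noise-lb} goes through with the same constants when (i) the ``neighboring'' relation is the $\Delta(\rho)$-proximity of $\hat{\mathbf{x}}$ and $\hat{\mathbf{x}}'$ restricted to the subspace $V_i$ rather than a dataset-neighboring relation, and (ii) the complementary coordinates are frozen at possibly different values in the two coupled runs, so that one has to verify these frozen and unprocessed coordinates influence the block dynamics only through the clipped-gradient term whose per-step contribution is already controlled by $C_1$ (together with the weight-decay contraction $1-\gamma\lambda$), leaving each per-block RDP constant $\varepsilon_i^{\text{r\'enyi}}$ unchanged.
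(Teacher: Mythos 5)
Your proof is correct in outline but takes a genuinely different route from the paper's. The paper does \emph{not} argue via per-block RDP plus Mironov-style composition. Instead, it extends the shifted-Rényi-divergence machinery of \citet{koloskova2025certified} to a \emph{vector-valued} shift $z\in\mathbb{R}_+^k$ (one coordinate per block), redefines the $\infty$-Wasserstein distance and shifted Rényi divergence accordingly (Definitions 5--6), and proves block-wise versions of the Lipschitz post-processing lemma and the Shift Reduction Lemma (Lemma~\ref{lem:shift-reduction}). It then tracks $D_q^{(z)}(x_t\,\|\,x'_t)$ through the \emph{entire} trajectory, starting from a nonzero shift in every coordinate and driving the coordinates to zero one at a time, with each phase contributing an increment $\varepsilon_i^{\text{r\'enyi}}$. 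The final $\varepsilon$ comes from a single RDP-to-$(\varepsilon,\delta)$ conversion applied to the accumulated bound; the paper explicitly emphasizes that each block is \emph{not} a standalone $(\varepsilon_i,\delta)$ mechanism, consistent with your handling of the algebraic rewriting.

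What you propose is a modular alternative: treat each block phase as its own $(q,\varepsilon_i^{\text{r\'enyi}})$-RDP mechanism (with the single-block Koloskova analysis instantiated with $C_0\to\Delta(\rho)/2$), then apply adaptive RDP composition \citep{Mironov_2017} across the $k$ phases, treat the final unfrozen fine-tuning as post-processing under the fixed coupling, and convert once at the end. This works and gives the same constant, because the composition theorem for Rényi divergence is at fixed order $q$ and the conversion is applied only to the composed bound, exactly as the paper insists. The main technical obligation you correctly flag --- that the frozen/unprocessed blocks influence block $i$ only through the clipped gradient, whose worst-case per-step contribution is already the $2C_1$ used in the single-block contraction $\|\psi(x')-\psi(x)\|\le(1-\gamma\lambda)\|x'-x\|+2\gamma C_1$ --- is exactly the point the paper absorbs into its decomposed Lemma A7, so you are not missing anything the paper has to do; you are just packaging it differently. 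The trade-off: your route is arguably more elementary and reuses off-the-shelf composition, at the cost of having to formally state the per-phase RDP claim (with the correct neighboring relation on initializations and the auxiliary-input conditioning) before invoking composition; the paper's route avoids composition entirely but requires introducing and proving properties of the new decomposed-shift objects. Both discharge the cross-block interaction via the boundedness of the clipped gradient, and both produce $\varepsilon=\sum_i\varepsilon_i^{\text{r\'enyi}}+\log(1/\delta)/(q-1)$.
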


\subsection{Proof sketch of Theorem~\ref{thm:block-guarantee}}

In this subsection, we provide a brief proof sketch and clarify the role of the main technical components; full details are given in Appendix~\ref{app:certification-blocks}. We extend the proof of \cite{koloskova2025certified}, which relies on a sequence of privacy amplification inequalities with shifted R\'enyi divergence~\citep{pmlr-v108-balle20a} as the key tool. The block decomposition induces a \emph{multi-dimensional} notion of shift, and our proof is organized around adapting the shifted-divergence framework to this setting.

Specifically, we generalize Wasserstein distance and shifted R\'enyi divergence to a vector-valued (block-wise) shift. In this setting, we adapt two key steps: the Shift Reduction Lemma~\citep{Feldman_2018} (Lemma~\ref{lem:shift-reduction}, proved in the main text) and a decomposed version of Lemma~A.7 from \cite{koloskova2025certified} (proved in Appendix~\ref{app:certification-blocks}). Together, these decomposed tools allow us to proceed block by block in the R\'enyi divergence inequalities: each block update contributes an incremental privacy loss $\varepsilon_i^{\text{r\'enyi}}$ that depends only on the parameters used for this block. Summing these per-block increments yields the overall $(\sum_{i=1}^k \varepsilon_i^{\text{r\'enyi}}, q)$-RDP guarantee and, consequently, Theorem~\ref{thm:block-guarantee}. 

\begin{definition}[Decomposition gap]\label{def:decomp-gap}
    Let $A_i$ for $i = 1, \ldots, k$ be a fixed set of matrices as defined in~(\ref{eq:matr-a}). Let
    \[
      \textstyle  W = \sum_{i=1}^k A_i B_i,
        \qquad 
      \textstyle  W' = \sum_{i=1}^k A_i B_i'.
    \]
    We define the \emph{decomposition gap} between $W$ and $W'$ as
    \[
        \mathcal{G}(W,W') := (z_1, \ldots, z_k), 
        \qquad 
        z_i := \|B_i - B_i'\|.
    \]
    For two such vectors $z^{(1)}$ and $z^{(2)}$, we write 
    $z^{(1)} \preceq z^{(2)}$ if the inequality holds coordinate-wise.
    \end{definition}

This leads to the definitions of the $\infty$-Wasserstein distance and the shifted R\'enyi divergence.
    
    \begin{definition}[Decomposed Wasserstein distance]
        We say that $W_d(\mu, \mu') \preceq (z_1, \ldots, z_k)$ if there exists a coupling $w \in \Gamma(\mu, \mu')$ such that, almost surely for $w \sim (x, x')$,
        \[
            \mathcal{G}(x,x') \preceq z.
        \]
    \end{definition}

    \begin{definition}[Decomposed shifted Rényi divergence]
        For any $z \in \mathbb{R}_{+}^{k}$, $q \ge 1$, and two distributions 
        $\mu,\nu$ defined on $\mathbb{R}^d$, we define
        \begin{equation}
            D_q^{(z)}(\mu \,\|\, \nu) 
            := \inf_{\mu' : W_d(\mu',\mu) \preceq z} 
               D_q(\mu' \,\|\, \nu).
            \label{eq:shifted-renyi}
        \end{equation}
    \end{definition}

The new divergence retains many properties of the original. In particular, with zero shift, it reduces to the standard R\'enyi divergence. Moreover, the Shift Reduction Lemma can be adapted to bound the divergence before and after adding Gaussian noise.

\begin{lemma}[Decomposed Shift Reduction Lemma for Gaussians]
\label{lem:shift-reduction}
Let $q \ge 1$, $z,a \ge 0$, and $X,Y$ be arbitrary random variables and matrix $A_i$ as described in Definition~\ref{def:decomp-gap}.
If $\xi,\xi' \sim \mathcal{N}(0,\sigma^2 I_{r_i})$ with $\sigma > 0$, then
\begin{equation}
    D_q^{(z)}(X + A_i\xi \,\|\, Y + A_i\xi')
    \le
    D_q^{(z + ae_i)}(X \,\|\, Y) 
    + \frac{q a^2}{2\sigma^2}.
    \label{eq:lemma20-decomposed}
\end{equation}
\end{lemma}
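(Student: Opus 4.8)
The plan is to mirror the proof of the classical Shift Reduction Lemma (Lemma 20 in the terminology of \citet{Feldman_2018}, as used by \citet{koloskova2025certified}), but replacing scalar shifts by the coordinate-wise vector shift $z \in \mathbb{R}_+^k$ and tracking the perturbation only in the $i$-th block. First I would unfold the definition of the decomposed shifted R\'enyi divergence: we must exhibit, for an arbitrary $\eta > 0$, a distribution $\mu'$ with $W_d(\mu', X + A_i\xi) \preceq z$ such that $D_q(\mu' \,\|\, Y + A_i\xi') \le D_q^{(z + ae_i)}(X \,\|\, Y) + \tfrac{qa^2}{2\sigma^2} + \eta$, and then let $\eta \to 0$. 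By definition of $D_q^{(z+ae_i)}(X \,\|\, Y)$ there is a distribution $\tilde X$ with $W_d(\tilde X, X) \preceq z + ae_i$ and $D_q(\tilde X \,\|\, Y) \le D_q^{(z+ae_i)}(X \,\|\, Y) + \eta$. Fix the corresponding coupling $(\tilde X, X)$ achieving the decomposition-gap bound almost surely.

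Next I would construct the candidate $\mu'$ explicitly. The coupling gives, almost surely, $\|\tilde B_i - B_i\| \le z_i + a$ while $\|\tilde B_j - B_j\| \le z_j$ for $j \ne i$, writing $B_j = A_j^\top X$, $\tilde B_j = A_j^\top \tilde X$ (the block coordinates, which exist and are unique by Proposition~\ref{prop:decomposition-existence}). I would take $\mu'$ to be the law of $\tilde X + A_i \xi$ under an appropriate coupling of $\xi$ with everything else — essentially the standard trick of writing $\tilde X + A_i\xi = X + A_i\big(\xi + A_i^\top(\tilde X - X)\big) + \text{(off-block part of }\tilde X - X)$. The key point is that only the $i$-th block is touched by $A_i\xi$, so all block coordinates $j \ne i$ of $\tilde X + A_i\xi$ versus $X + A_i\xi$ differ by exactly $\tilde B_j - B_j$, whose norm is $\le z_j$; and the $i$-th block differs by $\tilde B_i - B_i$ whose norm is $\le z_i + a$, but that excess $a$ will be absorbed by the noise. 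Concretely, I would apply the coupling/post-processing characterization: $\tilde X + A_i\xi$ and $X + A_i\xi'$ can be coupled so that they agree on blocks $j \ne i$ up to $z_j$, and on block $i$ one has $(\tilde B_i + \xi) $ versus $(B_i + \xi')$ with $\|\tilde B_i - B_i\| \le z_i + a$. Decomposing $\tilde B_i - B_i = (\text{part of size} \le z_i) + (\text{part of size} \le a)$ and pushing the size-$z_i$ part into the coupling witnessing $W_d(\mu', X + A_i\xi)\preceq z$, the residual size-$a$ shift in block $i$ is handled by the Gaussian: $D_q(\mathcal{N}(u,\sigma^2 I_{r_i}) \,\|\, \mathcal{N}(v,\sigma^2 I_{r_i})) = \tfrac{q\|u-v\|^2}{2\sigma^2} \le \tfrac{qa^2}{2\sigma^2}$, combined with the quasi-convexity / data-processing behaviour of R\'enyi divergence under this conditioning (exactly as in the scalar Shift Reduction Lemma proof, e.g. via the "composition of a post-processing and a shift" argument or the joint-convexity bound $D_q(P \ast \mathcal N \,\|\, Q \ast \mathcal N') \le D_q(P\,\|\,Q) + D_q(\mathcal N \,\|\, \mathcal N')$ applied blockwise on block $i$ while blocks $j\ne i$ pass through untouched).

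The main obstacle I anticipate is making the "blocks $j \ne i$ pass through untouched while block $i$ absorbs the extra shift" step rigorous in the presence of the coupling, rather than hand-waving it. In the scalar case this is the content of \citet{Feldman_2018}'s lemma and uses that adding independent Gaussian noise and then measuring R\'enyi divergence admits a clean triangle-like inequality; here I need the analogue where the "shift budget" is a vector and the Gaussian only lives in the sub-block $A_i\mathbb{R}^{r_i}$. The cleanest route is probably: (i) reduce to the case where $X, \tilde X, Y$ are coupled on a common space with $\tilde X - X$ supported, blockwise, within the stated radii; (ii) write $A_i^\top(\tilde X - X) = s + r$ with $\|s\| \le z_i$, $\|r\| \le a$ (e.g. radial truncation); (iii) define $\mu'$ as the law of $X + A_i(A_i^\top X + s) \;\dots$ — more carefully, as the law of the random variable obtained from $X + A_i\xi$ by shifting the $i$-block by $s$ and shifting blocks $j\ne i$ by $\tilde B_j - B_j$, which by construction lies within decomposition gap $z$ of $X + A_i\xi$, hence $W_d(\mu', X+A_i\xi) \preceq z$; (iv) observe $\mu'$ equals the law of $\tilde X + A_i(\xi - r)$, so $D_q(\mu' \,\|\, Y + A_i\xi')$ compares $\tilde X + A_i(\xi - r)$ to $Y + A_i \xi'$, and since $\tilde X$ has the off-block parts matching (by how $\mu'$ was built) the only discrepancy beyond $D_q(\tilde X \,\|\, Y)$ is the deterministic $\ell_2$-shift $r$ of magnitude $\le a$ inside the Gaussian block, contributing at most $\tfrac{qa^2}{2\sigma^2}$. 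Steps (iii)–(iv) are where the bookkeeping of which coupling witnesses which $\preceq$ must be done with care; everything else is a direct transcription of the one-dimensional argument with $z$, $a$, $e_i$ in place of scalars and $A_i \xi$ in place of $\xi$. Finally I would send $\eta \to 0$ to conclude.
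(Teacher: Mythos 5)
Your proposal is correct and follows essentially the same route as the paper: you reconstruct the Feldman-style shift-reduction argument, replacing the scalar truncation by a block-wise radial truncation of the $i$-th block coordinate (splitting $A_i^\top(\tilde X - X)$ into a part of norm $\le z_i$ carried by the Wasserstein coupling and a residual of norm $\le a$ absorbed by the Gaussian in that block via post-processing $f(x,y) = x + A_i y$), which is exactly the modification the paper makes to the original Shift Reduction Lemma. The coupling bookkeeping you flag as delicate is the same step the paper dispatches by "the remainder follows the original argument," so there is no substantive gap.
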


\begin{proof}
The original proof for the unshifted case is adapted by modifying the first step of the inequality. In particular, instead of considering $(X + W, -W + \xi)$ and $(Y, \xi')$, we consider $(X + A_i W, -W + \xi)$ and $(Y, \xi')$, so as to ensure the shift $(0,\ldots,a,\ldots,0) = a e_i$. Indeed, $X + A_i\xi$ and $Y + A_i\xi'$ can be obtained from $(X + A_iW, -W + \xi)$ and $(Y, \xi')$ by post-processing $f(x, y) = x + A_iy$. 

For the non-zero shift $z$, 
we adapt the proof by redefining $W_1$. Rather than the original choice $W_1 = h_z(W)$ (with $h_z(x)=x$ if $|x|\le z$ and $h_z(x)=\tfrac{x}{|x|}z$ otherwise), we set
\[
    W_1 = \sum A_i h_{z_i}(B_i).
\]

In this case, we observe that $W_1$ satisfies $G(W_1, 0) \preceq z$.  
Moreover, $G(W, W_1) \preceq a e_i$ whenever $G(W, 0) \preceq z + a e_i$.  The remainder of the proof then follows directly from the original argument.
\end{proof}

\subsection{Subspace design strategies}
\label{subsec:applications}

The certification in Theorem~\ref{thm:block-guarantee} applies to any
admissible orthogonal decomposition, so the block construction can be chosen
independently of the proof. In our experiments, we use a layer-wise random
orthogonal decomposition: for each layer, we sample a Gaussian matrix,
orthogonalize it, and split the resulting basis into $k$ approximately equal
blocks. This gives balanced random blocks without relying on a hand-designed
architectural partition, but requires quadratic memory in the decomposed layer
dimension. Other admissible decompositions trade off structure and cost: random
permutations reduce this overhead to linear memory, while layer-wise grouping
has no additional projection cost but relies more strongly on architectural
structure.
For nearly equal blocks, isotropic noise decomposes into independent block-wise noises with the same scalar variance. Hence assigning
each block a R\'enyi budget $\varepsilon^{\mathrm{renyi}}/k$ preserves the same
overall $(\varepsilon,\delta)$ budget after composition. Balanced random blocks
also have update norms that scale roughly as $1/\sqrt{k}$, so the clipping
bounds scale approximately as $C_0/\sqrt{k}$ and $C_1/\sqrt{k}$. Thus, each
noisy update perturbs only one block while keeping the per-block certification
comparable to the baseline.

In the resulting protocol, we run $T$ noisy updates per block, for a total of
$kT$ block-wise noisy updates, followed by the same post-unlearning fine-tuning
phase as in NFT. Although this increases the number of noisy-update steps, this
phase is short in our experiments; the total runtime is largely dominated by
post-unlearning fine-tuning. Thus, for small $T$, the block-wise schedule stays
close to the baseline runtime while producing more stable and higher-accuracy
trajectories.

\textbf{Relation to fixed-subspace PEFT.}
Our method should not be viewed as fixed-subspace PEFT. Since forget-set information may be spread across all coordinates, permanently freezing a
complementary subspace would require an additional cleanliness argument. Our schedule instead updates one block at a time while covering the full parameter
space over the unlearning trajectory.

\begin{figure*}[t]
    \centering
    \includegraphics[width=1.\linewidth]{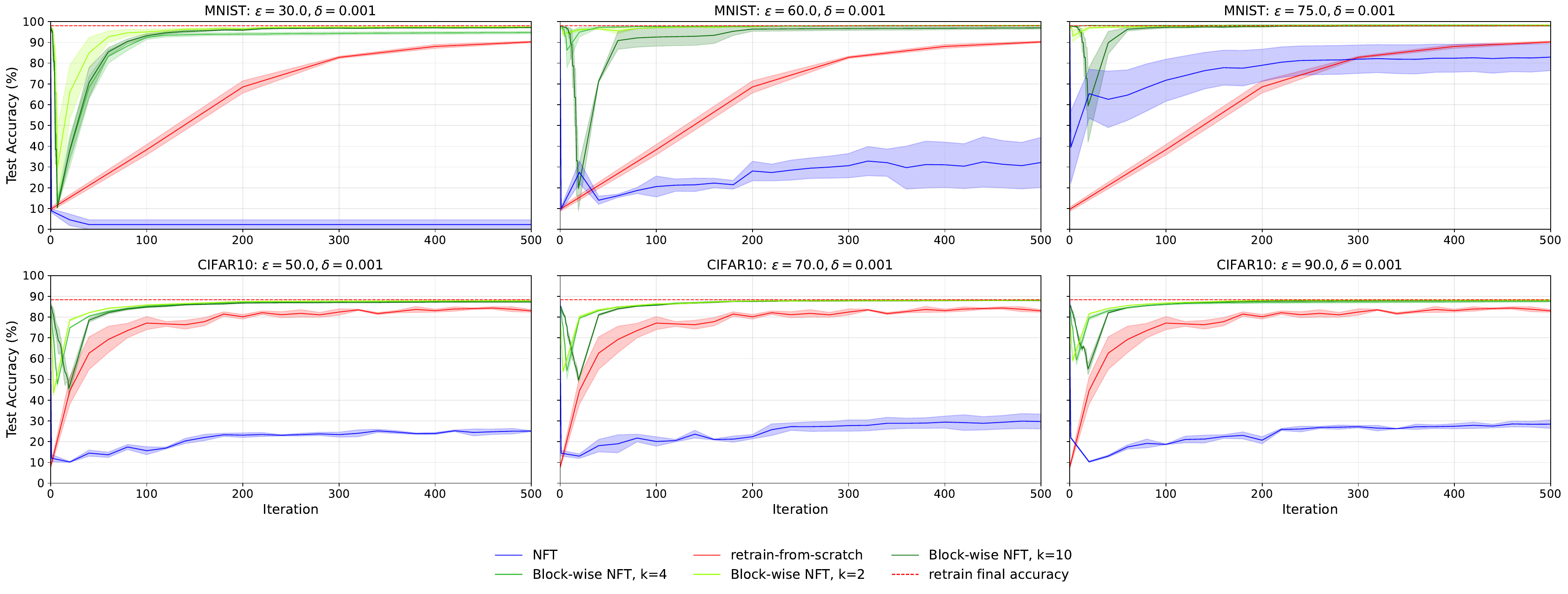}
    \caption{\textbf{Random 10\% deletion on \textsc{MNIST} and \textsc{CIFAR-10}.} We compare
    $\Delta(\rho)$-calibrated Noisy Fine-Tuning (NFT)
    with Block-wise NFT ($k=2,4,10$) with the final retrain accuracy shown for reference. Across privacy budgets Block-wise NFT shows smoother, more stable unlearning and better post–fine-tuning recovery.}
    \label{fig:block-nft-vs-nft}
\end{figure*}

\subsection{Calibration and re-certification}

\textbf{Calibrating $\Delta(\rho)$.}
We treat $\Delta(\rho)$ as a regime-level calibration parameter, fixed for a given model class, training pipeline, coupling of randomness, and deletion setting. In our experiments, we instantiate it empirically by running multiple coupled training pairs (10 in our experiments) with shared initialization, seeds, data order, and optimization hyperparameters. One run uses the original training stream, while the other uses a controlled perturbation of it: for random $10\%$ deletion, we remove a random
$10\%$ subset; for classwise deletion, we remove all examples of a target class. Removed samples are replaced by retained samples so that the mini-batch structure remains coupled across the two runs. We then measure the resulting parameter deviations and choose a conservative aggregate value, such as a high quantile, as $\Delta(\rho)$. Appendix~\ref{app:delta-estimation} provides the full calibration protocol, calibration plots, and an ablation on the sensitivity of Block-wise NFT to the chosen $\Delta(\rho)$.

This calibration is not universal: if deletion regime changes substantially, $\Delta(\rho)$ should be recalibrated. Such setting dependence is common in practical unlearning evaluations, where method-specific hyperparameters are typically selected for
the considered deletion scenario \citep{ebrahimpourboroojeny2025amunadversarialmachineunlearning}. We do not claim that our empirical instantiation is optimal; alternative couplings or theoretical bounds may yield smaller values and tighter budgets. Appendix~\ref{app:delta-upper-bound} discusses theoretical bounds based on argument stability.

\textbf{Re-certification.} Because $\Delta(\rho)$ is instantiated empirically, its estimate may change as more calibration evidence becomes available. Importantly, this does not necessarily require re-running unlearning: the optimization trajectory is fully determined by $(\lambda,\gamma,C_1,\sigma^2,T)$, and remains unchanged as long as these parameters are fixed. Thus, if a previously used estimate of $\Delta(\rho)$ is later deemed too optimistic, one can conservatively \emph{recalculate} the corresponding certified \emph{budget} for the same run without modifying it. We provide the formal statement and its proof in Appendix~\ref{app:budget-recalculation}.

\section{Experiments}
We empirically evaluate Block-wise NFT against retraining from scratch and a $\Delta(\rho)$-calibrated NFT baseline, in which \emph{the initial clipping is replaced by the discrepancy $\Delta(\rho)$}, then compare it with empirical unlearning methods using membership inference attack (MIA) auditing.

\textbf{Benchmarks, models, and scenarios.} We evaluate on \textsc{MNIST} \citep{lecun1998mnist} with a fully connected network of 4.36M parameters (architecture in Appendix~\ref{model:linear-arch}) and on \textsc{CIFAR-10} \citep{krizhevsky2009cifar10} using standard vision backbones: ResNet-18~\citep{he2016resnet} (commonly used in unlearning benchmarks) and a transformer-based architecture (ViT-Tiny)~\citep{DBLP:journals/corr/abs-2010-11929} to verify that our method applies beyond CNNs.
We consider two deletion settings: \emph{random 10\%} and \emph{classwise} deletion, with additional results in Appendix~\ref{app:experiments}. All remaining hyperparameters are selected via a small grid search over a predefined set on a held-out split of $\mathcal{D}_r$. Details are in Appendix~\ref{app:hyperparams}; code is in Appendix~\ref{app:code}. Although our experiments use this simple empirical selection procedure, Appendix~\ref{app:hyperparameter-selection} shows how the certificate can also provide a theory-motivated initialization for $\sigma^2$ and $C_1$. 

\textbf{Procedure.}
For NFT-based methods, we fix $(\varepsilon,\delta)$ per plot with $\delta=10^{-5}$ and vary $\varepsilon$ across plots. For each setting, we compute the minimal feasible noise scale $\sigma$ (Theorem~\ref{thm:noise-lb}), derive the corresponding step budget $T$, and run unlearning for $T$ steps (Block-wise NFT applies this sequentially across blocks), followed by standard fine-tuning.
We cap the \emph{total} number of unlearning+fine-tuning iterations at 1000 ($\le$1.5 epochs at batch size 64 on 90\% of the data). In practice, methods typically reach their peak accuracy well before the end of fine-tuning. We use \emph{random blocks}; other designs are deferred to the Appendix. Results are averaged over 5 runs to ensure statistical reliability.

\subsection{Block-wise NFT vs.\ NFT}

Figure~\ref{fig:block-nft-vs-nft} shows results for the random 10\% deletion task.
Block-wise NFT is consistently more stable: the accuracy drop during the unlearning phase is smaller, and recovery under fine-tuning is stronger. For instance, at the tighter budgets on \textsc{MNIST}, NFT fails to recover even after fine-tuning, while Block-wise NFT retains non-trivial accuracy.

The trajectories are smoother, suggesting that distributing noise across subspaces better preserves retained data. Varying the number of blocks ($k\in{2,4,10}$) yields comparable behavior, indicating that the gains are not strongly sensitive to $k$ in this range. Additional ablations over $k$, block construction, runtime, and sensitivity to the instantiated $\Delta(\rho)$ are provided in Appendix~\ref{app:ablations}.

Our main baseline is standard NFT, as our objective is to improve certified
unlearning without giving up formal guarantees. Empirical methods are included
only as auxiliary reference points under standard unlearning metrics and MIA
auditing; they are not directly comparable to certified methods and are not the
primary target of our comparison.

\subsection{Auxiliary comparison to empirical methods}\label{sec:approx-baselines}

\begin{table}[t]
\begin{center}
\caption{\textbf{Class 5 deletion on \textsc{CIFAR-10}.} 
Reported metrics: UA, RA, TA, MIA, RTE. Block-wise NFT matches Retrain on UA and MIA, while remaining competitive on RA and TA. Baseline results are taken from the official SaLUN repository~\citep{fan2024salun}. Empirical baselines are shown as auxiliary reference points; unlike Block-wise NFT,
they do not provide certified unlearning guarantees.
} 

\label{tab:emp-comparison}
\scalebox{0.75}{
\begin{tabular}{lccccc}
\multicolumn{1}{c}{\bf Method} & \multicolumn{1}{c}{\bf UA} & \multicolumn{1}{c}{\bf RA} & \multicolumn{1}{c}{\bf TA} & \multicolumn{1}{c}{\bf MIA} & \multicolumn{1}{c}{\bf RTE} 
\\ \toprule 
Retrain & 
100.00 (+0.00) & 
100.00 (+0.00) & 
86.14 (+0.00) & 
100 & 
46.37 \\
\midrule
FT  & 
47.43 ($-$52.57) & 
99.96 ($-$0.04) & 
95.57 (+9.43) & 
47.4 & 
2.6 \\
GA  & 
94.09 ($-$5.91) & 
92.20 ($-$7.80) & 
87.03 (+0.89) & 
94.09 & 
0.15 \\
IU  & 
98.98 ($-$1.02) & 
98.18 ($-$1.82) & 
93.42 (+7.28) & 
98.98 & 
0.5 \\
SalUN  & 
100.00 (+0.00) & 
99.81 ($-$0.19) & 
95.10 (+8.96) & 
100 & 
2.76 \\
$\ell_1$-sparse & 
100.00 (+0.00) & 
91.79 ($-$8.21) & 
89.08 (+2.94) & 
100 & 
2.55 \\
AMUN & 
100.00 (+0.00) & 
99.32 ($-$0.68) &
85.79 (+1.65) & 
100 & 
2.86 \\
\midrule
Block-wise NFT & 
100.00 (+0.00) & 
96.18 ($-$3.82) & 
83.37 ($-$2.77) & 
100 & 
0.85 \\
\end{tabular}
}
\end{center}

\end{table}

Table~\ref{tab:emp-comparison} reports results for the task of forgetting class~$5$; results for other target classes are provided in Appendix~\ref{app:classwise-exps}. We compare Block-wise NFT to retraining, fine-tuning (FT)~\citep{warnecke2023machineunlearningfeatureslabels}, gradient ascent (GA)~\citep{thudi2022unrollingsgdunderstandingfactors}, influence unlearning (IU)~\citep{pmlr-v70-koh17a}, AMUN~\citep{ebrahimpourboroojeny2025amunadversarialmachineunlearning} and sparsity-based approaches (SaLUN~\citep{fan2024salun}, $\ell_2$-sparsity~\citep{jia2023model}). Baselines were reproduced using the official SaLUN repository.

Evaluation uses standard metrics: unlearned accuracy (UA, $1-\text{Acc}(\mathcal{D}_f)$), test and retain accuracy (TA and RA), membership inference attack score (MIA)~\citep{jia2023model} (see Appendix~\ref{app:mia} for details), and run-time efficiency (RTE, minutes), with all results interpreted relative to retraining.

Block-wise NFT achieves UA=100\% and MIA=100\%, matching retraining on these
forgetting metrics. This is consistent with the certified guarantee and
shows no evidence of leakage under this audit. While some empirical methods obtain
higher utility, they do not provide certified unlearning guarantees. Among methods
with $UA=100\%$, Block-wise NFT also has a low training cost (RTE).

\section{Conclusion}

We studied utility limitations in perturbation-based certified unlearning. In particular, we showed that, for a predefined number of unlearning steps, the standard clipping-based analysis of NFT is overly conservative: it creates a tension between maintaining accuracy and improving over full retraining. This motivates our reformulation in terms of the proximity parameter $\Delta(\rho)$, which captures the practically relevant distance between the full-data model and its retained-data retraining counterpart.

We further proposed Block-wise Noisy Fine-Tuning, which applies certified noisy updates sequentially across orthogonal parameter blocks. Under our instantiated calibration of $\Delta(\rho)$, this block-wise schedule stabilizes unlearning and substantially reduces the transient accuracy drop relative to standard NFT, while remaining consistent with membership-inference auditing. Overall, our results suggest that NFT-style certified unlearning can be made substantially less destructive in the regimes we study, while retaining formal certified-unlearning guarantees.

More broadly, our findings suggest that worst-case certified-unlearning analyses can be misaligned with the practical comparison between full-data and retained-data training. Future work should develop certificates that better reflect training dynamics while retaining formal guarantees, and further study how calibration, block decompositions, and auditing protocols interact across different deletion regimes.

\section*{Acknowledgements}

We gratefully acknowledge funding from the European Research Council (ERC) under the Horizon Europe Framework Programme (HORIZON) for proposal number 101170430 CollectiveMinds. Views and opinions expressed are however those of the authors only and do not necessarily reflect those of the European Union or the European Research Council. Neither the European Union nor the granting authority can be held responsible for them.
The authors would like to thank Anastasia Koloskova for helpful discussions.

\bibliography{references}

@inproceedings{
koloskova2025certified,
title={Certified Unlearning for Neural Networks},
author={Anastasia Koloskova and Youssef Allouah and Animesh Jha and Rachid Guerraoui and Sanmi Koyejo},
booktitle={Forty-second International Conference on Machine Learning},
year={2025},
url={https://openreview.net/forum?id=3rWQlV3s1I}
}

@inproceedings{
fan2024salun,
title={SalUn: Empowering Machine Unlearning via Gradient-based Weight Saliency in Both Image Classification and Generation},
author={Chongyu Fan and Jiancheng Liu and Yihua Zhang and Eric Wong and Dennis Wei and Sijia Liu},
booktitle={The Twelfth International Conference on Learning Representations},
year={2024},
url={https://openreview.net/forum?id=gn0mIhQGNM}
}

@misc{nguyen2024surveymachineunlearning,
      title={A Survey of Machine Unlearning}, 
      author={Thanh Tam Nguyen and Thanh Trung Huynh and Zhao Ren and Phi Le Nguyen and Alan Wee-Chung Liew and Hongzhi Yin and Quoc Viet Hung Nguyen},
      year={2024},
      eprint={2209.02299},
      archivePrefix={arXiv},
      primaryClass={cs.LG},
      url={https://arxiv.org/abs/2209.02299}, 
}

@misc{european_commission_regulation_2016,
  abstract = {EU Datenschutzgrundverordnung (kurz: DS-GVO / auch: DSGVO) im Orginal mit allen Sprachversionen},
  added-at = {2020-08-20T11:33:21.000+0200},
  author = {{European Commission}},
  biburl = {https://www.bibsonomy.org/bibtex/243a2175512dc8b9d8855fa7a763cdc3e/zotero},
  interhash = {c7b667cac6031282160a9e94d5a118f8},
  intrahash = {43a2175512dc8b9d8855fa7a763cdc3e},
  keywords = {ddm en m1.3 open_access},
  publisher = {European Commission},
  timestamp = {2020-08-20T11:33:21.000+0200},
  title = {Regulation ({EU}) 2016/679 of the {European} {Parliament} and of the {Council} of 27 {April} 2016 on the protection of natural persons with regard to the processing of personal data and on the free movement of such data, and repealing {Directive} 95/46/{EC} ({General} {Data} {Protection} {Regulation}) ({Text} with {EEA} relevance)},
  url = {https://eur-lex.europa.eu/eli/reg/2016/679/oj},
  year = 2016
}

@inproceedings{abadi2016deep,
  title={Deep learning with differential privacy},
  author={Abadi, Martin and Chu, Andy and Goodfellow, Ian and McMahan, H Brendan and Mironov, Ilya and Talwar, Kunal and Zhang, Li},
  booktitle={Proceedings of the 2016 ACM SIGSAC conference on computer and communications security},
  pages={308--318},
  year={2016}
}

@inproceedings{Feldman_2018,
   title={Privacy Amplification by Iteration},
   url={http://dx.doi.org/10.1109/FOCS.2018.00056},
   DOI={10.1109/focs.2018.00056},
   booktitle={2018 IEEE 59th Annual Symposium on Foundations of Computer Science (FOCS)},
   publisher={IEEE},
   author={Feldman, Vitaly and Mironov, Ilya and Talwar, Kunal and Thakurta, Abhradeep},
   year={2018},
   month=oct, pages={521–532} }

@InProceedings{pmlr-v108-balle20a,
  title = 	 {Hypothesis Testing Interpretations and Renyi Differential Privacy},
  author =       {Balle, Borja and Barthe, Gilles and Gaboardi, Marco and Hsu, Justin and Sato, Tetsuya},
  booktitle = 	 {Proceedings of the Twenty Third International Conference on Artificial Intelligence and Statistics},
  pages = 	 {2496--2506},
  year = 	 {2020},
  editor = 	 {Chiappa, Silvia and Calandra, Roberto},
  volume = 	 {108},
  series = 	 {Proceedings of Machine Learning Research},
  month = 	 {26--28 Aug},
  publisher =    {PMLR},
  pdf = 	 {http://proceedings.mlr.press/v108/balle20a/balle20a.pdf},
  url = 	 {https://proceedings.mlr.press/v108/balle20a.html},
  abstract = 	 {Differential privacy is a de facto standard in data privacy, with applicationsin the public and private sectors. One way of explaining differential privacy,which is particularly appealing to statistician and social scientists, is bymeans of its statistical hypothesis testing interpretation. Informally, onecannot effectively test whether a specific individual has contributed her databy observing the output of a private mechanism—any test cannot have bothhigh significance and high power.In this paper, we identify some conditions under which a privacy definition given in terms of a statistical divergence satisfies a similar interpretation.These conditions are useful to analyze the distinguishing power of divergencesand we use them to study the hypothesis testing interpretation of somerelaxations of differential privacy based on Renyi divergence. Ouranalysis also results in an improved conversion rule between these definitionsand differential privacy.}
}

@InProceedings{pmlr-v119-guo20c,
  title = 	 {Certified Data Removal from Machine Learning Models},
  author =       {Guo, Chuan and Goldstein, Tom and Hannun, Awni and Van Der Maaten, Laurens},
  booktitle = 	 {Proceedings of the 37th International Conference on Machine Learning},
  pages = 	 {3832--3842},
  year = 	 {2020},
  editor = 	 {III, Hal Daumé and Singh, Aarti},
  volume = 	 {119},
  series = 	 {Proceedings of Machine Learning Research},
  month = 	 {13--18 Jul},
  publisher =    {PMLR},
  pdf = 	 {http://proceedings.mlr.press/v119/guo20c/guo20c.pdf},
  url = 	 {https://proceedings.mlr.press/v119/guo20c.html},
  abstract = 	 {Good data stewardship requires removal of data at the request of the data’s owner. This raises the question if and how a trained machine-learning model, which implicitly stores information about its training data, should be affected by such a removal request. Is it possible to “remove” data from a machine-learning model? We study this problem by defining certified removal: a very strong theoretical guarantee that a model from which data is removed cannot be distinguished from a model that never observed the data to begin with. We develop a certified-removal mechanism for linear classifiers and empirically study learning settings in which this mechanism is practical.}
}

@inproceedings{SekhariAKS21,
 author = {Sekhari, Ayush and Acharya, Jayadev and Kamath, Gautam and Suresh, Ananda Theertha},
 booktitle = {Advances in Neural Information Processing Systems 34 (NeurIPS 2021)},
 month = {December},
 pages = {18075--18086},
 publisher = {Curran Associates, Inc.},
 series = {NeurIPS '21},
 title = {Remember What You Want to Forget: Algorithms for Machine Unlearning},
 url = {https://arxiv.org/abs/2103.03279},
 year = {2021}
}

@inproceedings{zhang2024towards,
  title={Towards Certified Unlearning for Deep Neural Networks},
  author={Zhang, Binchi and Dong, Yushun and Wang, Tianhao and Li, Jundong},
  booktitle={Forty-first International Conference on Machine Learning},
  year={2024}
}

@InProceedings{pmlr-v132-neel21a,
  title = 	 {Descent-to-Delete: Gradient-Based Methods for Machine Unlearning},
  author =       {Neel, Seth and Roth, Aaron and Sharifi-Malvajerdi, Saeed},
  booktitle = 	 {Proceedings of the 32nd International Conference on Algorithmic Learning Theory},
  pages = 	 {931--962},
  year = 	 {2021},
  editor = 	 {Feldman, Vitaly and Ligett, Katrina and Sabato, Sivan},
  volume = 	 {132},
  series = 	 {Proceedings of Machine Learning Research},
  month = 	 {16--19 Mar},
  publisher =    {PMLR},
  pdf = 	 {http://proceedings.mlr.press/v132/neel21a/neel21a.pdf},
  url = 	 {https://proceedings.mlr.press/v132/neel21a.html},
  abstract = 	 {We study the data deletion problem for convex models. By leveraging techniques from convex optimization and reservoir sampling, we give the first data deletion algorithms that are able to handle an arbitrarily long sequence of adversarial updates while promising both per-deletion run-time and steady-state error that do not grow with the length of the update sequence. We also introduce several new conceptual distinctions: for example, we can ask that after a deletion, the entire state maintained by the optimization algorithm is statistically indistinguishable from the state that would have resulted had we retrained, or we can ask for the weaker condition that only the observable output is statistically indistinguishable from the observable output that would have resulted from retraining. We are able to give more efficient deletion algorithms under this weaker deletion criterion.}
}

@inproceedings{
chien2024langevin,
title={Langevin Unlearning: A New Perspective of Noisy Gradient Descent for Machine Unlearning},
author={Eli Chien and Haoyu Peter Wang and Ziang Chen and Pan Li},
booktitle={The Thirty-eighth Annual Conference on Neural Information Processing Systems},
year={2024},
url={https://openreview.net/forum?id=3LKuC8rbyV}
}

@inproceedings{
liu2023certified,
title={Certified Minimax Unlearning with Generalization Rates and Deletion Capacity},
author={Jiaqi Liu and Jian Lou and Zhan Qin and Kui Ren},
booktitle={Thirty-seventh Conference on Neural Information Processing Systems},
year={2023},
url={https://openreview.net/forum?id=6H8Md75kAw}
}

@inproceedings{
allouah2025the,
title={The Utility and Complexity of In- and Out-of-Distribution Machine Unlearning},
author={Youssef Allouah and Joshua Kazdan and Rachid Guerraoui and Sanmi Koyejo},
booktitle={The Thirteenth International Conference on Learning Representations},
year={2025},
url={https://openreview.net/forum?id=HVFMooKrHX}
}

@article{10.1561/0400000042,
author = {Dwork, Cynthia and Roth, Aaron},
title = {The Algorithmic Foundations of Differential Privacy},
year = {2014},
issue_date = {Aug 2014},
publisher = {Now Publishers Inc.},
address = {Hanover, MA, USA},
volume = {9},
number = {3–4},
issn = {1551-305X},
url = {https://doi.org/10.1561/0400000042},
doi = {10.1561/0400000042},
abstract = {The problem of privacy-preserving data analysis has a long history spanning multiple disciplines. As electronic data about individuals becomes increasingly detailed, and as technology enables ever more powerful collection and curation of these data, the need increases for a robust, meaningful, and mathematically rigorous definition of privacy, together with a computationally rich class of algorithms that satisfy this definition. Differential Privacy is such a definition.After motivating and discussing the meaning of differential privacy, the preponderance of this monograph is devoted to fundamental techniques for achieving differential privacy, and application of these techniques in creative combinations, using the query-release problem as an ongoing example. A key point is that, by rethinking the computational goal, one can often obtain far better results than would be achieved by methodically replacing each step of a non-private computation with a differentially private implementation. Despite some astonishingly powerful computational results, there are still fundamental limitations — not just on what can be achieved with differential privacy but on what can be achieved with any method that protects against a complete breakdown in privacy. Virtually all the algorithms discussed herein maintain differential privacy against adversaries of arbitrary computational power. Certain algorithms are computationally intensive, others are efficient. Computational complexity for the adversary and the algorithm are both discussed.We then turn from fundamentals to applications other than queryrelease, discussing differentially private methods for mechanism design and machine learning. The vast majority of the literature on differentially private algorithms considers a single, static, database that is subject to many analyses. Differential privacy in other models, including distributed databases and computations on data streams is discussed.Finally, we note that this work is meant as a thorough introduction to the problems and techniques of differential privacy, but is not intended to be an exhaustive survey — there is by now a vast amount of work in differential privacy, and we can cover only a small portion of it.},
journal = {Found. Trends Theor. Comput. Sci.},
month = aug,
pages = {211–407},
numpages = {197}
}

@inproceedings{WarPirWreRie20,
    title={Machine Unlearning of Features and Labels},
    author={Alexander Warnecke and Lukas Pirch and Christian Wressnegger and Konrad Rieck},
    year={2023},
    booktitle={Proc. of the 30th Network and Distributed System Security (NDSS)}
  }

@misc{thudi2022unrollingsgdunderstandingfactors,
      title={Unrolling SGD: Understanding Factors Influencing Machine Unlearning}, 
      author={Anvith Thudi and Gabriel Deza and Varun Chandrasekaran and Nicolas Papernot},
      year={2022},
      eprint={2109.13398},
      archivePrefix={arXiv},
      primaryClass={cs.LG},
      url={https://arxiv.org/abs/2109.13398}, 
}

@inproceedings{
kurmanji2023towards,
title={Towards Unbounded Machine Unlearning},
author={Meghdad Kurmanji and Peter Triantafillou and Jamie Hayes and Eleni Triantafillou},
booktitle={Thirty-seventh Conference on Neural Information Processing Systems},
year={2023},
url={https://openreview.net/forum?id=OveBaTtUAT}
}

@inproceedings{
jia2023model,
title={Model Sparsity Can Simplify Machine Unlearning},
author={Jinghan Jia and Jiancheng Liu and Parikshit Ram and Yuguang Yao and Gaowen Liu and Yang Liu and Pranay Sharma and Sijia Liu},
booktitle={Thirty-seventh Conference on Neural Information Processing Systems},
year={2023},
url={https://openreview.net/forum?id=0jZH883i34}
}

@inproceedings{Mironov_2017,
   title={Rényi Differential Privacy},
   url={http://dx.doi.org/10.1109/CSF.2017.11},
   DOI={10.1109/csf.2017.11},
   booktitle={2017 IEEE 30th Computer Security Foundations Symposium (CSF)},
   publisher={IEEE},
   author={Mironov, Ilya},
   year={2017},
   month=aug, pages={263–275} }

@misc{warnecke2023machineunlearningfeatureslabels,
      title={Machine Unlearning of Features and Labels}, 
      author={Alexander Warnecke and Lukas Pirch and Christian Wressnegger and Konrad Rieck},
      year={2023},
      eprint={2108.11577},
      archivePrefix={arXiv},
      primaryClass={cs.LG},
      url={https://arxiv.org/abs/2108.11577}, 
}

@InProceedings{pmlr-v70-koh17a,
  title = 	 {Understanding Black-box Predictions via Influence Functions},
  author =       {Pang Wei Koh and Percy Liang},
  booktitle = 	 {Proceedings of the 34th International Conference on Machine Learning},
  pages = 	 {1885--1894},
  year = 	 {2017},
  editor = 	 {Precup, Doina and Teh, Yee Whye},
  volume = 	 {70},
  series = 	 {Proceedings of Machine Learning Research},
  month = 	 {06--11 Aug},
  publisher =    {PMLR},
  pdf = 	 {http://proceedings.mlr.press/v70/koh17a/koh17a.pdf},
  url = 	 {https://proceedings.mlr.press/v70/koh17a.html},
  abstract = 	 {How can we explain the predictions of a black-box model? In this paper, we use influence functions — a classic technique from robust statistics — to trace a model’s prediction through the learning algorithm and back to its training data, thereby identifying training points most responsible for a given prediction. To scale up influence functions to modern machine learning settings, we develop a simple, efficient implementation that requires only oracle access to gradients and Hessian-vector products. We show that even on non-convex and non-differentiable models where the theory breaks down, approximations to influence functions can still provide valuable information. On linear models and convolutional neural networks, we demonstrate that influence functions are useful for multiple purposes: understanding model behavior, debugging models, detecting dataset errors, and even creating visually-indistinguishable training-set attacks.}
}

@article{lecun1998mnist,
  title        = {The MNIST database of handwritten digits},
  author       = {LeCun, Yann and Cortes, Corinna and Burges, Christopher JC},
  journal      = {http://yann.lecun.com/exdb/mnist},
  year         = {1998}
}

@techreport{krizhevsky2009cifar10,
  title        = {Learning multiple layers of features from tiny images},
  author       = {Krizhevsky, Alex},
  institution  = {University of Toronto},
  year         = {2009}
}

@inproceedings{he2016resnet,
  title        = {Deep Residual Learning for Image Recognition},
  author       = {He, Kaiming and Zhang, Xiangyu and Ren, Shaoqing and Sun, Jian},
  booktitle    = {Proceedings of the IEEE Conference on Computer Vision and Pattern Recognition (CVPR)},
  pages        = {770--778},
  year         = {2016}
}

@inproceedings{
ebrahimpour-boroojeny2025not,
title={Not All Wrong is Bad: Using Adversarial Examples for Unlearning},
author={Ali Ebrahimpour-Boroojeny and Hari Sundaram and Varun Chandrasekaran},
booktitle={Forty-second International Conference on Machine Learning},
year={2025},
url={https://openreview.net/forum?id=BkrIQPREkn}
}

@InProceedings{pmlr-v70-liu17c,
  title = 	 {Algorithmic Stability and Hypothesis Complexity},
  author =       {Tongliang Liu and G{\'a}bor Lugosi and Gergely Neu and Dacheng Tao},
  booktitle = 	 {Proceedings of the 34th International Conference on Machine Learning},
  pages = 	 {2159--2167},
  year = 	 {2017},
  editor = 	 {Precup, Doina and Teh, Yee Whye},
  volume = 	 {70},
  series = 	 {Proceedings of Machine Learning Research},
  month = 	 {06--11 Aug},
  publisher =    {PMLR},
  pdf = 	 {http://proceedings.mlr.press/v70/liu17c/liu17c.pdf},
  url = 	 {https://proceedings.mlr.press/v70/liu17c.html},
  abstract = 	 {We introduce a notion of algorithmic stability of learning algorithms—that we term <em>hypothesis stability</em>—that captures stability of the hypothesis output by the learning algorithm in the normed space of functions from which hypotheses are selected. The main result of the paper bounds the generalization error of any learning algorithm in terms of its hypothesis stability. The bounds are based on martingale inequalities in the Banach space to which the hypotheses belong. We apply the general bounds to bound the performance of some learning algorithms based on empirical risk minimization and stochastic gradient descent.}
}

@InProceedings{pmlr-v48-hardt16,
  title = 	 {Train faster, generalize better: Stability of stochastic gradient descent},
  author = 	 {Hardt, Moritz and Recht, Ben and Singer, Yoram},
  booktitle = 	 {Proceedings of The 33rd International Conference on Machine Learning},
  pages = 	 {1225--1234},
  year = 	 {2016},
  editor = 	 {Balcan, Maria Florina and Weinberger, Kilian Q.},
  volume = 	 {48},
  series = 	 {Proceedings of Machine Learning Research},
  address = 	 {New York, New York, USA},
  month = 	 {20--22 Jun},
  publisher =    {PMLR},
  pdf = 	 {http://proceedings.mlr.press/v48/hardt16.pdf},
  url = 	 {https://proceedings.mlr.press/v48/hardt16.html},
  abstract = 	 {We show that parametric models trained by a stochastic gradient method (SGM) with few iterations have vanishing generalization error. We prove our results by arguing that SGM is algorithmically stable in the sense of Bousquet and Elisseeff. Our analysis only employs elementary tools from convex and continuous optimization. We derive stability bounds for both convex and non-convex optimization under standard Lipschitz and smoothness assumptions. Applying our results to the convex case, we provide new insights for why multiple epochs of stochastic gradient methods generalize well in practice. In the non-convex case, we give a new interpretation of common practices in neural networks, and formally show that popular techniques for training large deep models are indeed stability-promoting. Our findings conceptually underscore the importance of reducing training time beyond its obvious benefit.}
}

@misc{bassily2020stabilitystochasticgradientdescent,
      title={Stability of Stochastic Gradient Descent on Nonsmooth Convex Losses}, 
      author={Raef Bassily and Vitaly Feldman and Cristóbal Guzmán and Kunal Talwar},
      year={2020},
      eprint={2006.06914},
      archivePrefix={arXiv},
      primaryClass={cs.LG},
      url={https://arxiv.org/abs/2006.06914}, 
}

@inproceedings{NEURIPS2021_483101a6,
 author = {Richards, Dominic and Kuzborskij, Ilja},
 booktitle = {Advances in Neural Information Processing Systems},
 editor = {M. Ranzato and A. Beygelzimer and Y. Dauphin and P.S. Liang and J. Wortman Vaughan},
 pages = {8609--8621},
 publisher = {Curran Associates, Inc.},
 title = {Stability \&amp; Generalisation of Gradient Descent for Shallow Neural Networks without the Neural Tangent Kernel},
 url = {https://proceedings.neurips.cc/paper_files/paper/2021/file/483101a6bc4e6c46a86222eb65fbcb6a-Paper.pdf},
 volume = {34},
 year = {2021}
}

@inproceedings{NEURIPS2022_fb8fe6b7,
 author = {Lei, Yunwen and Jin, Rong and Ying, Yiming},
 booktitle = {Advances in Neural Information Processing Systems},
 editor = {S. Koyejo and S. Mohamed and A. Agarwal and D. Belgrave and K. Cho and A. Oh},
 pages = {38557--38570},
 publisher = {Curran Associates, Inc.},
 title = {Stability and Generalization Analysis of Gradient Methods for Shallow Neural Networks},
 url = {https://proceedings.neurips.cc/paper_files/paper/2022/file/fb8fe6b79288f3d83696a5d276f4fc9d-Paper-Conference.pdf},
 volume = {35},
 year = {2022}
}

@article{DBLP:journals/corr/abs-2010-11929,
  author       = {Alexey Dosovitskiy and
                  Lucas Beyer and
                  Alexander Kolesnikov and
                  Dirk Weissenborn and
                  Xiaohua Zhai and
                  Thomas Unterthiner and
                  Mostafa Dehghani and
                  Matthias Minderer and
                  Georg Heigold and
                  Sylvain Gelly and
                  Jakob Uszkoreit and
                  Neil Houlsby},
  title        = {An Image is Worth 16x16 Words: Transformers for Image Recognition
                  at Scale},
  journal      = {CoRR},
  volume       = {abs/2010.11929},
  year         = {2020},
  url          = {https://arxiv.org/abs/2010.11929},
  eprinttype    = {arXiv},
  eprint       = {2010.11929},
  timestamp    = {Fri, 20 Nov 2020 14:04:05 +0100},
  biburl       = {https://dblp.org/rec/journals/corr/abs-2010-11929.bib},
  bibsource    = {dblp computer science bibliography, https://dblp.org}
}

@misc{ebrahimpourboroojeny2025amunadversarialmachineunlearning,
      title={AMUN: Adversarial Machine UNlearning}, 
      author={Ali Ebrahimpour-Boroojeny and Hari Sundaram and Varun Chandrasekaran},
      year={2025},
      eprint={2503.00917},
      archivePrefix={arXiv},
      primaryClass={cs.LG},
      url={https://arxiv.org/abs/2503.00917}, 
}

@misc{he2022exploringlimitsdifferentiallyprivate,
      title={Exploring the Limits of Differentially Private Deep Learning with Group-wise Clipping}, 
      author={Jiyan He and Xuechen Li and Da Yu and Huishuai Zhang and Janardhan Kulkarni and Yin Tat Lee and Arturs Backurs and Nenghai Yu and Jiang Bian},
      year={2022},
      eprint={2212.01539},
      archivePrefix={arXiv},
      primaryClass={cs.LG},
      url={https://arxiv.org/abs/2212.01539}, 
}

@misc{nguyen2023batchclippingadaptivelayerwise,
      title={Batch Clipping and Adaptive Layerwise Clipping for Differential Private Stochastic Gradient Descent}, 
      author={Toan N. Nguyen and Phuong Ha Nguyen and Lam M. Nguyen and Marten Van Dijk},
      year={2023},
      eprint={2307.11939},
      archivePrefix={arXiv},
      primaryClass={cs.LG},
      url={https://arxiv.org/abs/2307.11939}, 
}

@misc{rw2019timm,
  author = {Ross Wightman},
  title = {PyTorch Image Models},
  year = {2019},
  publisher = {GitHub},
  journal = {GitHub repository},
  doi = {10.5281/zenodo.4414861},
  howpublished = {\url{https://github.com/rwightman/pytorch-image-models}}
}

\newpage
\appendix
\onecolumn

\section{Related work}

\paragraph{Certified Machine Unlearning.} Certified machine unlearning has emerged as a practical alternative to full retraining, offering rigorous data-removal guarantees at much lower cost. 
Classical approaches estimate the retrained solution via single–step Newton and influence surrogates (\cite{pmlr-v119-guo20c,SekhariAKS21,zhang2024towards}), or use projected/perturbed gradient methods (\cite{pmlr-v132-neel21a, chien2024langevin}), coupled with randomized mechanisms to ensure statistical indistinguishability. Many of these certificates are explicitly \emph{DP-inspired}, applying noisy updates and privacy amplification ideas from differential privacy \cite{10.1561/0400000042, pmlr-v108-balle20a}. However, recent methods typically rely on strong assumptions (e.g., smoothness/strong convexity, knowledge of Hessian eigenvalues, or even a unique minimizer), which limit applicability to modern deep networks \cite{zhang2024towards, liu2023certified,allouah2025the}. In this work we remain within the certified DP line, but we specifically adapt the noisy fine-tuning framework of \cite{koloskova2025certified} to preserve certificates \emph{without} convexity or uniqueness assumptions while mitigating the accuracy loss usually observed in DP-based unlearning.

\paragraph{Group-wise and layer-wise DP optimization.}
Our construction is related to group-wise and layer-wise DP-SGD methods \citep{he2022exploringlimitsdifferentiallyprivate, nguyen2023batchclippingadaptivelayerwise}, which split parameters or gradients into groups to reduce clipping bias or exploit layer heterogeneity in private training. The setting is different: these methods train privately from a clean initialization, whereas certified unlearning starts from a model already trained on the forget set and must certify closeness to retained-data retraining. Thus, we use block structure not only as an optimization device, but as a sequential certified-unlearning schedule whose privacy analysis composes across blocks while covering the full parameter space.

\paragraph{Empirical unlearning in Image Classification.} Empirical (approximate) unlearning methods aim to remove the influence of the forget set without formal certificates and are therefore evaluated using strong auditing attacks alongside simple utility/efficiency metrics. Representative methods include sparsity-driven approaches \citep{fan2024salun,jia2023model}, fine-tuning on the retain set \citep{WarPirWreRie20}, gradient ascent on the forget set \citep{thudi2022unrollingsgdunderstandingfactors}, and adversarial-example-based unlearning \citep{ebrahimpour-boroojeny2025not}, among others. Efficacy is typically assessed with unlearning-specific membership-inference attacks (MIAs) \citep{kurmanji2023towards,jia2023model, ebrahimpour-boroojeny2025not}, together with standard model-performance metrics. Although these methods provide no formal guarantees, they constitute strong practical baselines and useful evaluation tools for auditing certified approaches.

\section{Code availability}\label{app:code}

We release our implementation at \url{https://github.com/mlolab/blockwise-noisy-fine-tuning}. The repository includes the code needed to reproduce our experiments.

\section{Thought experiment formalization}

\subsection{Proof of Proposition~\ref{prop:thought-exp}.}
\label{app:dp-perf}


\begin{proof}
    
In the worst-case, initialization-agnostic certification, for any measurable set $O\subseteq\mathbb{R}^d$ and any two initializations (in particular, $\hat{\mathbf{x}}$ and $\mathbf{x}_{\mathrm{init}}$), we have
\begin{equation}
\label{eq:init-agnostic}
\begin{aligned}
\Pr\!\big[\mathcal{U}(\hat{\mathbf{x}})\in O\big] &\le e^{\varepsilon}\Pr\!\big[\mathcal{U}(\mathbf{x}_{\mathrm{init}})\in O\big] + \delta,\\
\Pr\!\big[\mathcal{U}(\mathbf{x}_{\mathrm{init}})\in O\big] &\le e^{\varepsilon}\Pr\!\big[\mathcal{U}(\hat{\mathbf{x}})\in O\big] + \delta.
\end{aligned}
\end{equation}

Applying the second inequality in~\eqref{eq:init-agnostic} to $O=\mathcal{E}_{\mathrm{fail}}$ yields
\[
\Pr\!\big[\mathcal{U}(\mathbf{x}_{\mathrm{init}})\in \mathcal{E}_{\mathrm{fail}}\big]
\le e^{\varepsilon}\Pr\!\big[\mathcal{U}(\hat{\mathbf{x}})\in \mathcal{E}_{\mathrm{fail}}\big] + \delta .
\]

Let $\mathcal{E}_{\mathrm{full}}$ denote the set of possible outcomes of $\mathcal{U}(\hat{\textbf{x}})$.
Applying the second inequality in~\eqref{eq:init-agnostic} to $O = \mathbb{R}^d \setminus \mathcal{E}_{\mathrm{full}}$ yields
\[
\Pr\!\big[\mathcal{U}(\mathbf{x}_{\mathrm{init}})\in \mathbb{R}^d \setminus \mathcal{E}_{\mathrm{full}}\big]
\le e^{\varepsilon}\Pr\!\big[\mathcal{U}(\hat{\mathbf{x}})\in \mathbb{R}^d \setminus \mathcal{E}_{\mathrm{full}}\big] + \delta
= \delta.
\]

Therefore, the probability of obtaining bad performance (below the guarantee~$\alpha$) can be bounded by
\[
e^{\varepsilon}\Pr\!\big[\mathcal{U}(\hat{\mathbf{x}})\in \mathcal{E}_{\mathrm{fail}}\big] + 2\delta,
\]
which is almost $\Pr\!\big[\mathcal{U}(\hat{\mathbf{x}})\in \mathcal{E}_{\mathrm{fail}}\big]$ when $(\varepsilon,\delta)$ is tight.
\end{proof}

\subsection{Limitations and extensions}
\label{app:dp-perf-2}

\paragraph{Thought experiment limitations.}
The argumentation of our thought experiment implicitly relies on the assumption that the number of steps required to retrain from scratch to the original performance guarantee $(\alpha,p)$ is comparable to the number of steps required to reach the slightly weaker guarantee $(\alpha,\, 1 - e^{\varepsilon}(1-p) - 2\delta)$. While this is a natural assumption when $(\varepsilon,\delta)$ are small and the certified guarantee is tight, it need not hold universally. In particular, for some models or training regimes, small degradations in success probability may correspond to substantially different retraining times. Our thought experiment should therefore be interpreted as highlighting a structural tension induced by worst-case, initialization-agnostic guarantees, rather than as a strict impossibility result under all settings.

\paragraph{Application to other training-agnostic algorithms.}
Note that the argument uses only that the unlearning procedure is training-agnostic and that the number of steps $T$ is fixed in advance, i.e., it does not depend on the model weights.

We stated the argument in terms of step counts $T$ and $T_{\mathrm{retrain}}$, although the per-step computational cost may differ across procedures. The same reasoning can be reformulated in terms of wall-clock time or total compute, provided the algorithm's computation cost does not depend on the model weights.

\section{Theoretical bounds on $\Delta(\rho)$ via argument stability}
\label{app:delta-upper-bound}

The key quantity in our certificates is the discrepancy $\Delta(\rho)$ between the full retrain and the retraining-from-scratch. While obtaining a tight bound on $\Delta(\rho)$ for modern deep models in full generality is challenging, this question is well-studied in the literature on \emph{argument stability} (also known as parameter stability). Below we summarize regimes where such bounds are known or can be derived under standard assumptions.

Algorithmic stability has long been used to analyze how sensitive
learning algorithms are to changes in the training data. \cite{pmlr-v70-liu17c} formalized \emph{argument stability}, which upper-bounds the parameter deviation between hypotheses trained on neighboring datasets. Several subsequent works derive explicit bounds on this deviation under standard assumptions.

\paragraph{Smooth and Lipschitz losses.}
\cite{pmlr-v48-hardt16} provide a trajectory-level recursion for SGD under
$\beta$-smooth and $L$-Lipschitz losses in three regimes:
(i) non-convex,  
(ii) convex,  
(iii) $\gamma$-strongly convex.  
When the replaced index is known --- as in our full-vs-retained setting --- their
recursion yields a closed-form upper bound on
\[
\|\theta^{(t)} - \theta'^{(t)}\|,
\]
and hence on our proximity $\Delta(\rho)$.

\paragraph{Example (strongly convex case).}
For completeness, we briefly illustrate how a bound on $\Delta(\rho)$
follows from the recursion of \cite{pmlr-v48-hardt16}. Suppose the loss is $\gamma$-strongly convex and $\beta$-smooth, and SGD uses a constant stepsize $\alpha \le 1/\beta$. Let $\theta^{(t)}$ and $\theta'^{(t)}$ denote the SGD iterates obtained from the full and retained datasets, respectively, and let $\delta_t = \|\theta^{(t)} - \theta'^{(t)}\|$. Since both runs start from the same initialization, we have $\delta_0 = 0$.

\cite{pmlr-v48-hardt16} give the following one-step inequalities:

\begin{enumerate}
    \item If the minibatches coincide:
    \[\delta_{t+1} \;\le\; (1 - \alpha\gamma)\,\delta_t.\]
    \item If the minibatches differ: 
    \[\delta_{t+1} 
    \;\le\; (1 - \alpha\gamma)\,\delta_t + 2\alpha L.\]
\end{enumerate}

Unrolling this recursion yields the explicit bound
\[
\delta_T 
\;\le\; 2\alpha L 
\sum_{k \in B} (1 - \alpha\gamma)^k,
\]
where $B$ is the set of iteration indices for which the minibatches differ. This provides a closed-form upper bound on $\delta_T$, and therefore on $\Delta(\rho)$ in this regime.

\paragraph{Nonsmooth convex losses.}
\cite{bassily2020stabilitystochasticgradientdescent} prove argument stability for SGD without requiring smoothness. Using the monotonicity of subgradients together with $L$-Lipschitzness, they bound the deviation $\|\theta^{(t)} - \theta'^{(t)}\|$ at every iteration, directly giving an
upper bound on $\Delta(\rho)$.

\paragraph{Neural networks.}
For certain neural architectures, stability of gradient-based methods has also been established. The work of \cite{NEURIPS2022_fb8fe6b7} proves stability-based generalization bounds for shallow ReLU networks. Complementarily, \cite{NEURIPS2021_483101a6} analyze the dynamics of gradient descent and obtain stability and generalization guarantees for shallow networks beyond the NTK regime. In both cases, the analysis controls how the learned parameters change under small perturbations of the training data, providing architecture-specific upper bounds on the parameter deviation and hence on $\Delta(\rho)$ for these models.

Together, these results show that $\Delta(\rho)$ can be bounded in several well-understood regimes -- including smooth non-convex objectives, nonsmooth convex objectives, and specific neural-network architectures for which parameter-stability analyses exist. A general closed-form bound is not available for all training pipelines and architectures, but many practical settings already fall into one of the regimes above.

\section{Notations}

We summarize the main notation used throughout the paper in Table~\ref{table:notations}.

\begin{table}[H]
\caption{Notation used throughout the paper}
\label{table:notations}
\begin{center}
\begin{tabular}{ll}
\multicolumn{1}{c}{\bf Symbol}  &\multicolumn{1}{c}{\bf Description}
\\ \hline \\
$\mathcal{A}$            & Learning algorithm mapping a dataset to model parameters \\
$\mathcal{D}$            & Training dataset
\\
$d$                       & Dimension of the parameter (weight) space $\mathbb{R}^d$ \\
$\hat{\mathbf{x}}$       & Model parameters obtained from $\mathcal{A}(\mathcal{D})$ \\
$\mathcal{D}_f$          & Subset of $\mathcal{D}$ to be forgotten \\
$\mathcal{D}_r$          & Retained dataset, $\mathcal{D} \setminus \mathcal{D}_f$ \\
$\mathcal{U}$            & Unlearning mechanism updating parameters after deletion request \\
$\tilde{\mathbf{x}}$     & Model parameters output by $\mathcal{U}$ \\
$\bar{\mathcal{A}}$      & Certifying algorithm in the definition of $(\varepsilon,\delta)$-unlearning \\
$\varepsilon$            & Privacy/unlearning parameter controlling multiplicative slack \\
$\delta$                 & Privacy/unlearning parameter controlling additive slack \\
$\mathbf{x}_0$           & Model parameters after initial clipping of $\hat{\mathbf{x}}$ with radius $C_0$ \\
$\mathbf{x}_t$           & Model parameters at iteration $t$ of noisy fine-tuning \\
$g_t$                    & Gradient at step $t$ computed on the retained data $\mathcal{D}_r$ \\
$\boldsymbol{\xi}_{t}$   & Gaussian noise $\sim \mathcal{N}(0,\sigma^2 I_d)$ added at step $t$ \\
$\Pi_{C}$                & Clipping operator with radius $C$, $\Pi_C(\mathbf{v}) = \mathbf{v}\cdot\min\{\tfrac{C}{\|\mathbf{v}\|},1\}$ \\
$\hat{\mathbf{x}}'$      & Model parameters obtained by training on the retained dataset $\mathcal{D}_r$ \\
$\mathbf{x}'_t$          & Iterates produced by unlearning updates when initialized at $\hat{\mathbf{x}}'$ \\
$\sigma^2$               & Variance of the Gaussian noise in noisy fine-tuning \\
$T_{\mathrm{retrain}}$   & Minimal number of retraining steps required to reach good accuracy on $\mathcal{D}_r$\\
$T$                      & Number of unlearning steps in NFT \\
$\alpha$                 & Target accuracy level in the thought experiment \\
$p$                      & Probability of achieving accuracy at least $\alpha$ after $T$ steps \\
$\mathbf{x}_{\mathrm{init}}$ & Randomly initialized model parameters \\
$\Delta(\rho)$           & High-probability bound on the distance between $\hat{\mathbf{x}}$ and $\hat{\mathbf{x}}'$ \\
$\rho$                   & Failure probability in the definition of $\Delta(\rho)$ \\
$q$                        & Order of Rényi Differential Privacy \\
$\varepsilon^{\text{r\'enyi}}$ & Privacy loss at order $q$ in Rényi DP \\
$\sigma^2$                 & Variance of Gaussian perturbations in noisy fine-tuning \\
$T(\sigma^2)$              & Minimal number of unlearning steps required for a given noise variance $\sigma^2$ \\
$\sigma^2_{\min}$          & Minimal feasible noise variance ensuring $(\varepsilon,\delta)$-unlearning \\
$k$                 & Number of subspaces (blocks) in the partition of $\mathbb{R}^d$ \\
$r_i$               & Dimension of the $i$-th subspace, with $\sum_{i=1}^k r_i = d$ \\
$A_i \in \mathbb{R}^{d \times r_i}$ & Matrix with orthonormal columns spanning subspace $V_i$ \\
$A$                 & Concatenation $[A_1 \cdots A_k] \in \mathbb{R}^{d \times d}$ with $A^\top A = I_d$ \\
$B_i$               & Coordinate vector in $\mathbb{R}^{r_i}$ corresponding to $V_i$ in the decomposition of $W$ \\
$\mathcal{G}(W,W')$       & Decomposition gap between two weight vectors \\
$z^{(1)} \preceq z^{(2)}$ & Coordinate-wise inequality between two vectors in $\mathbb{R}^k$ \\
$W_d(\mu,\mu')$           & Decomposed Wasserstein distance between distributions $\mu,\mu'$ \\
$D_q^{(z)}(\mu \,\|\, \nu)$ & Decomposed shifted Rényi divergence with shift vector $z$ and order $q$ \\
$e_i$                     & $i$-th standard basis vector in $\mathbb{R}^k$ \\

\end{tabular}
\end{center}
\end{table}

\section{Proofs}

\label{app:theory-minor}

\begin{proof}[Proof of uniqueness of the decomposition]
Since $A = [A_1 \;\cdots\; A_k]$ is orthogonal, we have $A^\top A = I_d$. 
For any $W \in \mathbb{R}^{d}$ set $B = A^\top W$ 
and partition it into blocks $B^\top = [B_1^\top, \dots, B_k^\top]$. 
Then
\[
AB = AA^\top W = W,
\]
which expands to $W = \sum_{i=1}^k A_i B_i$. 
Uniqueness follows from orthogonality: if $\sum_i A_i B_i = \sum_i A_i B_i'$, then $A^\top ( \sum_i A_i(B_i - B_i')) = \sum_i B_i - B_i' = 0$, hence $B_i = B_i'$.
\end{proof}

\subsection{Proof of Theorem \ref{thm:noise-lb} \label{app:lower-bound}}

We first recall the shifted R\'enyi divergence \citep{Feldman_2018}:

\begin{definition}[Rényi divergence]\label{def:renyi}
Let $q > 0$, $q \ne 1$. The $q$-Rényi divergence between two probability distributions $\mu$ and $\nu$ is defined as
\[
D_q(\mu \,\|\, \nu) := \frac{1}{q - 1} \log \mathbb{E}_{X \sim \nu} \left( \frac{\mu(X)}{\nu(X)} \right)^q .
\]
\end{definition}

Let us consider the reasoning presented in \citep{koloskova2025certified}. Their proof contains a minor indexing mismatch in the recursion expansion, which did not affect the final result for their asymptotics, but it will be important for us. We corrected the indices in the product $p_t$ accordingly.

Let us revisit the reasoning in \citep{koloskova2025certified}. In their proof, the transition from equation (23) to equation (24) involves a minor computational error in solving the recursion. While this did not affect their asymptotic conclusions, it becomes relevant for our analysis. We correct this step by adjusting the indices in the product $p_t$ accordingly. The corrected parts are highlighted in \textcolor{green!60!black}{green}.

\begin{theorem}[Koloskova A.9, fixed indices in recursion]
    Let $T,q \geq 1$, $\gamma_0,\ldots,\gamma_{T-1} \geq 0$, 
    $\sigma_0,\ldots,\sigma_{T-1} > 0$, $\lambda \geq 0$ and consider the two sequences 
    $\{x_t\}_{0 \leq t \leq T}, \{x'_t\}_{0 \leq t \leq T}$ as defined above. 
    Denote by $D_q$ the R\'enyi divergence of order $q$. 
    Assume that for every $t \in \{0,\ldots,T-1\}$, $\gamma_t \lambda < 1$. 
    Denote for every $t \in \{0,\ldots,T-1\}$,
    \begin{equation}        
        s_t := 2 \gamma_t C_1, 
        \qquad 
        \rho_t := 1 - \gamma_t \lambda.
    \end{equation}
    If there exist $a_0,\ldots,a_{T-1} \in\mathbb{R}_{\ge 0}$ such that
    \begin{equation}        
        \sum_{t=0}^{T-1} 
        \Biggl( \prod_{k=\textcolor{green!60!black}{t + 1}}^{\textcolor{green!60!black}{T-1}} \rho_k \Biggr) a_t
        =
        \Biggl( \prod_{t=0}^{T-1} \rho_t \Biggr) 2 C_0
        + \sum_{t=0}^{T-1} 
        \Biggl( \prod_{k=\textcolor{green!60!black}{t + 1}}^{\textcolor{green!60!black}{T - 1}} \rho_k \Biggr) s_t,
    \end{equation}
    then
    \begin{equation}  
        D_q(x_T \,\|\, x'_T) \;\;\leq\;\;
        \sum_{t=0}^{T-1} \frac{q a_t^2}{2 \sigma_t^2}.
    \end{equation}
    \end{theorem}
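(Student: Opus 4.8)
The plan is to reproduce the ``privacy amplification by iteration'' argument of \citet{Feldman_2018} in the shifted-R\'enyi form used by \citet{koloskova2025certified}, run along the coupled trajectories $\{\mathbf{x}_t\}$ and $\{\mathbf{x}'_t\}$, and to redo the telescoping of the shift recursion with the corrected indices. Recall the (scalar) shifted R\'enyi divergence $D_q^{(z)}(\mu\,\|\,\nu):=\inf_{\mu'\,:\,W_\infty(\mu',\mu)\le z}D_q(\mu'\,\|\,\nu)$: it is non-increasing in $z\ge 0$, coincides with $D_q$ at $z=0$, satisfies the ordinary data-processing inequality, and obeys the Shift Reduction Lemma for Gaussian noise, $D_q^{(z)}(X+\xi\,\|\,Y+\xi')\le D_q^{(z+a)}(X\,\|\,Y)+\tfrac{qa^2}{2\sigma^2}$ whenever $\xi,\xi'\sim\mathcal N(0,\sigma^2 I_d)$ and $a\ge 0$.

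First I would rewrite one noisy-fine-tuning step as $\mathbf{x}_{t+1}=\phi_t(\mathbf{x}_t)+\boldsymbol{\xi}_{t+1}$, where $\phi_t(\mathbf{x}):=\rho_t\mathbf{x}-\gamma_t\,\Pi_{C_1}(g_t(\mathbf{x}))$ with $\rho_t=1-\gamma_t\lambda\in(0,1]$ and $g_t(\cdot)$ the (shared) step-$t$ gradient map on $\mathcal D_r$; the retrained trajectory satisfies $\mathbf{x}'_{t+1}=\phi_t(\mathbf{x}'_t)+\boldsymbol{\xi}'_{t+1}$ with the \emph{same} map $\phi_t$. Because $\phi_t$ is the sum of the linear contraction $\mathbf{x}\mapsto\rho_t\mathbf{x}$ and a bias term of norm at most $\gamma_t C_1$, it satisfies $\|\phi_t(\mathbf{x})-\phi_t(\mathbf{x}')\|\le\rho_t\|\mathbf{x}-\mathbf{x}'\|+s_t$ with $s_t=2\gamma_t C_1$. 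A short coupling argument then turns this into a contraction-with-bias bound for the divergence: for any $z\ge 0$, $D_q^{(\rho_t z+s_t)}\big(\phi_{t,*}\mu\,\|\,\phi_{t,*}\nu\big)\le D_q^{(z)}(\mu\,\|\,\nu)$ — take a near-optimal $\mu^{\star}$ at Wasserstein shift $z$ from $\mu$, observe that pushing it through $\phi_t$ moves it by at most $\rho_t z+s_t$, and apply the R\'enyi data-processing inequality to $\phi_{t,*}$.

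Combining the Shift Reduction Lemma with this contraction bound gives, for non-negative shifts, the one-step estimate $D_q^{(z_{t+1})}(\mathbf{x}_{t+1}\,\|\,\mathbf{x}'_{t+1})\le D_q^{(z_t)}(\mathbf{x}_t\,\|\,\mathbf{x}'_t)+\tfrac{q a_t^2}{2\sigma_t^2}$ provided $z_t:=\rho_t^{-1}(z_{t+1}+a_t-s_t)$. Starting from $z_T=0$ and unrolling this recursion, the coefficient multiplying $(a_t-s_t)$ in $z_0$ is $\big(\prod_{j=0}^{t}\rho_j\big)^{-1}$, equivalently $\big(\prod_{k=0}^{T-1}\rho_k\big)^{-1}\prod_{k=t+1}^{T-1}\rho_k$ — this is exactly the point at which the product must run over $k=t+1,\dots,T-1$, correcting the off-by-one slip in the original write-up. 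Hence $z_0=\big(\prod_{k=0}^{T-1}\rho_k\big)^{-1}\sum_{t=0}^{T-1}\big(\prod_{k=t+1}^{T-1}\rho_k\big)(a_t-s_t)$, and the hypothesis of the theorem says precisely that $z_0=2C_0$. Chaining the one-step estimates over $t=0,\dots,T-1$ yields $D_q(\mathbf{x}_T\,\|\,\mathbf{x}'_T)\le D_q^{(z_0)}(\mathbf{x}_0\,\|\,\mathbf{x}'_0)+\sum_{t=0}^{T-1}\tfrac{q a_t^2}{2\sigma_t^2}$, and since the clipped initializations $\mathbf{x}_0=\Pi_{C_0}(\hat{\mathbf{x}})$ and $\mathbf{x}'_0=\Pi_{C_0}(\hat{\mathbf{x}}')$ both lie in the ball of radius $C_0$, we have $\|\mathbf{x}_0-\mathbf{x}'_0\|\le 2C_0=z_0$, so $D_q^{(z_0)}(\mathbf{x}_0\,\|\,\mathbf{x}'_0)=0$ and the claim follows.

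The main obstacle is bookkeeping rather than a single hard estimate: keeping the direction of every shifted-divergence inequality correct, ensuring the intermediate shifts $z_t$ stay non-negative so the shifted divergence is defined at each step, and — the reason for restating the result — re-deriving the telescoped coefficients so that the product runs precisely over $k=t+1,\dots,T-1$. A secondary point to record is that the contraction-with-bias step only needs $\phi_t$ to be a fixed map applied to both trajectories (full-batch gradients, or minibatches drawn under the shared coupling), so that it genuinely qualifies as post-processing; no smoothness or convexity of the loss enters anywhere.
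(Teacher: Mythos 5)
Your reconstruction tracks the cited argument of \citet{koloskova2025certified} faithfully and, since the paper itself does not reproduce a proof of this theorem (it restates the prior result with corrected indices and defers to the original), your proposal supplies exactly the derivation that the paper only gestures at. The key ingredients are all in place: rewriting one noisy step as $\mathbf{x}_{t+1}=\phi_t(\mathbf{x}_t)+\boldsymbol{\xi}_{t+1}$ with $\phi_t(\mathbf{x})=\rho_t\mathbf{x}-\gamma_t\Pi_{C_1}(g_t(\mathbf{x}))$ and the same map applied to both trajectories under the shared coupling; the contraction-with-bias estimate $\|\phi_t(\mathbf{x})-\phi_t(\mathbf{x}')\|\le\rho_t\|\mathbf{x}-\mathbf{x}'\|+s_t$ from clipping; the scalar shifted-R\'enyi Shift Reduction Lemma; and the one-step recursion $z_{t+1}=\rho_t z_t+s_t-a_t$. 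Unrolling from $z_T=0$ to $z_0=2C_0$ yields the coefficient $\bigl(\prod_{j=0}^{t}\rho_j\bigr)^{-1}=\bigl(\prod_{k=0}^{T-1}\rho_k\bigr)^{-1}\prod_{k=t+1}^{T-1}\rho_k$ on $(a_t-s_t)$, which after clearing $\prod_{k=0}^{T-1}\rho_k$ gives precisely the hypothesis with products over $k=t+1,\dots,T-1$ --- so your telescoping independently confirms the index correction the paper highlights in green.

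The one point you flag but do not resolve is worth spelling out: the chain of one-step estimates requires each intermediate shift $z_1,\dots,z_{T-1}$ to be non-negative, since $D_q^{(z)}$ is defined via an infimum over $W_\infty$-couplings of radius $z\ge 0$. The theorem's hypothesis fixes the linear combination determining $z_0$ and $z_T$, but a given choice of $a_t\ge 0$ satisfying the equality does not automatically keep the interior $z_t$ non-negative (the $s_t$ term can push them up and a large $a_t$ can overshoot below zero). This is not a gap you introduce --- it is inherited from the cited framework --- but a complete proof should either restrict attention to $a_t$ for which the induced $z_t$ stay non-negative (which is the case for the Cauchy--Schwarz-proportional choice used downstream in the proof of Theorem~\ref{thm:noise-lb}), or argue that one may without loss of generality assume this. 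Apart from this technicality, the argument is sound and matches the approach on which the paper relies.
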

    Let us attempt to find the optimal parameters. We will search for a solution under the following constraints:
\begin{itemize}

    \item Suppose there exists a maximum allowable noise level at any given step, $\sigma \ge \sigma_i$.

    \item Suppose the weight decay $\lambda$ and learning rate $\gamma$ are the same for all steps.

    \item We will also seek a solution for fixed $C_0$, $C_1$, and RDP-budget $(\varepsilon^{\text{r\'enyi}}, q)$.

\end{itemize}

For simplicity, let us denote $\alpha_t = \prod_{k=t + 1}^{T-1} \rho_k$, then the condition on $a$ can be rewritten as

    \begin{equation}
        \sum_{t=0}^{T-1} 
        \alpha_t a_t
        =
        \rho_0 \alpha_0 2 C_0
        + \sum_{t=0}^{T-1} 
        \alpha_t s_t,
    \label{eq:unlearning-condition}
    \end{equation}

  In fact, determining $T$ is equivalent to finding the first moment when the left-hand side exceeds the right-hand side:
  \begin{equation}
        \sum_{t=0}^{T-1} 
        \alpha_t a_t
        \;\;\ge\;\;
        \rho_0 \alpha_0 2 C_0
        + \sum_{t=0}^{T-1} 
        \alpha_t s_t.
        \label{eq:T-condition}
    \end{equation}

Moreover, the unlearning budget $\varepsilon$ is computed directly from the constraint on the Rényi divergence:
\begin{equation}
    D_q(x_T \,\|\, x'_T) 
    \;\;\leq\;\;
    \sum_{t=0}^{T-1} \frac{q a_t^2}{2 \sigma_t^2} 
    \;\;\le\;\; \varepsilon^{\text{r\'enyi}}.
    \label{eq:renyi-budget}
\end{equation}

Let $y$ be the vector with coordinates $y_i = \tfrac{a_i}{\sigma_i}$. By the constraint, we are searching for a solution such that
\begin{equation}
    \|y\|^2_2 \;\le\; \tfrac{2\varepsilon^{\text{r\'enyi}}}{q}.
    \label{eq:y-constraint}
\end{equation}

Observe that by proportionally increasing $a_i$ until the inequality becomes an equality, we do not increase the minimal number of epochs $T$ required for condition~\eqref{eq:T-condition}. Hence, at the optimum we may assume
\begin{equation}
    \|y\|_2 = \sqrt{\tfrac{2\varepsilon^{\text{r\'enyi}}}{q}}.
    \label{eq:y-optimal}
\end{equation}

Next, consider the optimal choice of $a_i$ for our condition. By the Cauchy--Schwarz inequality,  
\begin{equation}
    \sum_{t=0}^{T-1} 
    \alpha_t a_t 
    \;\;\le\;\; 
    \sum_{t=0}^{T-1} 
    (\alpha_t \sigma_t)\,\Bigl(\tfrac{a_t}{\sigma_t}\Bigr) 
    \;\;\le\;\; 
    \|\alpha \sigma\| \cdot \|y\| 
    \;\;\le\;\; 
    \sqrt{\tfrac{2\varepsilon^{\text{r\'enyi}}}{q}} 
    \sqrt{\sum_{i=0}^{T-1} \alpha_i^2 \sigma_i^2}.
    \label{eq:cauchy-bound}
\end{equation}
Equality holds when the two vectors are proportional, which determines the optimal values of $a_i$.

Rewriting inequality~\eqref{eq:T-condition}, we obtain
\begin{equation}
    \rho_0 \alpha_0 2 C_0 
    + \sum_{t=0}^{T-1} \alpha_t s_t
    \;\;\le\;\;
    \sqrt{\tfrac{2\varepsilon^{\text{r\'enyi}}}{q}} 
    \sqrt{\sum_{i=0}^{T-1} \alpha_i^2 \sigma_i^2}
    \;\;\le\;\;
    \sqrt{\tfrac{2\varepsilon^{\text{r\'enyi}}\sigma^2}{q}} 
    \sqrt{\sum_{i=0}^{T-1} \alpha_i^2}.
    \label{eq:final-inequality}
\end{equation}

Thus, $T$ does not decrease if we set $\sigma_i = \sigma$ for every step.

For convenience, define
\begin{equation}
    C_b := \sqrt{\tfrac{2\varepsilon^{\text{r\'enyi}}\sigma^2}{q}}.
    \label{eq:Cb}
\end{equation}

We now expand the inequality we aim to obtain, using the fact that
\begin{equation}
    \alpha_t = \prod_{k=t+1}^{T-1} \rho_k = (1 - \gamma \lambda)^{T - 1 - t}.
\end{equation}

The left-hand side is
\begin{equation}
\begin{aligned}
    &\; 2 C_0 (1 - \gamma \lambda)^T 
    + 2 \gamma C_1 \sum_{t=0}^{T-1} (1 - \gamma \lambda)^{T - 1 - t} \\
    &= 2 C_0 (1 - \gamma \lambda)^T 
    + 2 \gamma C_1 \tfrac{1 - (1 - \gamma \lambda)^T}{\gamma \lambda},
\end{aligned}
\label{eq:LHS}
\end{equation}
while the right-hand side is
\begin{equation}
    C_b \sqrt{\sum_{t=0}^{T-1} (1 - \gamma \lambda)^{2T - 2 - 2t}}
    \;=\;
    C_b \sqrt{\tfrac{1 - (1 - \gamma \lambda)^{2T}}{1 - (1 - \gamma \lambda)^2}}.
    \label{eq:RHS}
\end{equation}
Observe that both sides of the inequality are positive, hence it suffices to require that the square of the left-hand side exceeds the square of the right-hand side.

Introduce the variable
\begin{equation}
    x = (1 - \gamma \lambda)^T.
    \label{eq:x-def}
\end{equation}
Then $T$ can be recovered from $x$ as
\begin{equation}
    T = \tfrac{\log(x)}{\log(1 - \gamma \lambda)}.
    \label{eq:T-from-x}
\end{equation}
Thus, the problem of finding the minimal $T$ is equivalent to finding the \textit{maximal} $x$, subject to the constraint $0 < x \le 1$.

By rewriting the difference of squares of the left- and right-hand sides, the problem reduces to finding the maximal root in the half-interval $(0,1]$ of the equation
\begin{equation}
    \Bigl( 2 C_0 x + 2 \gamma C_1 \tfrac{1 - x}{\gamma \lambda} \Bigr)^2 
    - C_b^2 \cdot \tfrac{1 - x^2}{1 - (1 - \gamma \lambda)^2} = 0.
    \label{eq:root-equation-1}
\end{equation}

Equivalently,
\begin{equation}
    \Biggl( 
        \Bigl(\tfrac{2C_0}{C_b} - \tfrac{2 C_1}{\lambda C_b}\Bigr) x 
        + \tfrac{2 C_1}{\lambda C_b} 
    \Biggr)^2
    - \tfrac{1 - x^2}{1 - (1 - \gamma \lambda)^2} = 0.
    \label{eq:root-equation-2}
\end{equation}

Define
\begin{equation}
    \zeta := \tfrac{1}{1 - (1 - \gamma \lambda)^2}, 
    \qquad 
    \beta_0 := \tfrac{2C_1}{ \lambda C_b}, 
    \qquad 
    \beta_1 := \tfrac{2C_0}{C_b} - \tfrac{2 C_1}{\lambda C_b} = \beta_0\big(\tfrac{\lambda C_0}{C_1} - 1\big).
    \label{eq:beta-defs}
\end{equation}

Then the quadratic equation can be written as
\begin{equation}
    (\beta_1^2 + \zeta)x^2 + 2 \beta_0 \beta_1 x + (\beta_0^2 - \zeta) = 0.
    \label{eq:quadratic}
\end{equation}

Its discriminant is
\begin{equation}
\begin{aligned}
    \Delta 
    &= 4 \beta_0^2 \beta_1^2 
       - 4 (\beta_1^2+\zeta)(\beta_0^2-\zeta) \\
    &= 4\zeta(\zeta + \beta_1^2 - \beta_0^2).
\end{aligned}
\label{eq:discriminant}
\end{equation}

The largest root of the quadratic is given by
\begin{equation}
    x_{\max} = 
    \frac{-2 \beta_0 \beta_1 + \sqrt{\,4\zeta(\zeta + \beta_1^2 - \beta_0^2)\,}}
         {2(\beta_1^2+\zeta)}.
    \label{eq:x-max}
\end{equation}

We now state the condition for the existence of a root:
\begin{equation}
    \zeta \;\ge\; \beta_0^2 - \beta_1^2 
    = \beta_0^2 - \Bigl(\tfrac{2C_0}{C_b} - \beta_0\Bigr)^2
    = \tfrac{8C_0C_1}{C_b^2 \lambda} - \tfrac{4C_0^2}{C_b^2}.
    \label{eq:root-condition-1}
\end{equation}

Equivalently,
\begin{equation}
    \tfrac{1}{1 - (1 - \gamma \lambda)^2} 
    \;\ge\;
    \tfrac{8C_0C_1}{C_b^2 \lambda} - \tfrac{4C_0^2}{C_b^2}
    = \tfrac{2q}{\sigma^2 \varepsilon^{\text{r\'enyi}}} 
      \Bigl(\tfrac{2}{\lambda} C_0 C_1 - C_0^2 \Bigr).
    \label{eq:root-condition-2}
\end{equation}

This implies
\begin{equation}
    \sigma^2 \;\ge\; \gamma(2 - \gamma \lambda) \cdot \tfrac{2 q}{\varepsilon^{\text{r\'enyi}}}
    \cdot \Bigl(2 - \tfrac{\lambda C_0}{C_1}\Bigr) \cdot C_0 C_1,
    \label{eq:sigma-bound}
\end{equation}

In the case $\tfrac{\lambda C_0}{C_1} \in (0, 1)$, the right-hand side of the inequality is positive, which yields a bound on the minimal $\sigma$ in this regime.

To obtain the bound on $T$, it suffices to substitute all original variables into the formula \ref{eq:x-max} for $x_{\max}$.

In the case of the minimal $\sigma$, the discriminant vanishes, and the expression simplifies to
\begin{equation}
\begin{aligned}
    x_{\max} 
    &= \frac{-\beta_0 \beta_1}{\beta_1^2+\zeta} 
     = \frac{-\beta_0 \beta_1}{\beta_0^2} 
     = \tfrac{-(2\lambda C_0 - 2C_1)}{2C_1} \\
    &= 1 - \tfrac{\lambda C_0}{C_1} \in (0,1].
\end{aligned}
\label{eq:xmax-minimal-sigma}
\end{equation}

Therefore, substituting into
\begin{equation}
    T = \tfrac{\log(x)}{\log(1 - \gamma \lambda)},
    \label{eq:T-bound}
\end{equation}
we obtain the desired estimate for $T$.

Furthermore, for $\tfrac{\lambda C_0}{C_1} \in [1, 2)$, the bound on the noise obtained from the discriminant is also positive; however, in this case, the corresponding $x_{\max}$ would fall outside the interval $(0, 1)$, as required.

To analyze the regime $\tfrac{\lambda C_0}{C_1} \ge 1$, we observe that $\beta_1 \le 0$, implying that the vertex of the parabola lies at a negative coordinate. Consequently, the value at $0$ must be negative (since the value at $1$ is positive). From the inequality $\zeta > \beta_0^2$, one therefore derives a bound on $\sigma^2$. It is worth noting, however, that in the case of equality, $x_{\max}$ becomes zero, which corresponds to an infinite $T$.

\paragraph{Formula for minimal $\sigma$ with given $T$}

Given argumentation above, we can express $\sigma$ in terms of $x_{\max}$:
\begin{equation}
    \text{Let} \quad 
    z := 1 - \frac{\lambda C_0}{C_1}, 
    \qquad 
    x := x_{\max} \in (0,1].
    \label{eq:z-def}
\end{equation}

We now rewrite the equation~\ref{eq:x-max} for $x_{\max}$ in terms of $\sigma$, which leads to a quadratic equation in $s := 1/\sigma^2$:
\begin{equation}
    a s^2 + b s + c = 0,
    \label{eq:quad-s}
\end{equation}
where
\begin{equation}
\begin{aligned}
    a &= \left(\frac{2q\,C_1^2}{\varepsilon^{\text{r\'enyi}}\,\lambda^2}\right)^{2}
         z^{2}\,(1 - x z)^{2}, \\[6pt]
    b &= \left(\frac{2q\,C_1^2}{\varepsilon^{\text{r\'enyi}}\,\lambda^2}\right)
         \frac{1}{\gamma\lambda(2-\gamma\lambda)}
         \Bigl[\,1 - 2xz + (2x^{2}-1)z^{2}\Bigr], \\[6pt]
    c &= \frac{x^{2}-1}{\bigl(\gamma\lambda(2-\gamma\lambda)\bigr)^{2}}.
\end{aligned}
\label{eq:abc}
\end{equation}

Observe that $a > 0$ and $c < 0$, hence we are interested in the largest root of the quadratic.

The (positive) solution for $\sigma^2$ can then be written as
\begin{equation}
    \sigma^{2}(x)
    = \frac{2a}{-\,b+\sqrt{\,b^{2}-4ac\,}}.
    \label{eq:sigma-solution-1}
\end{equation}

Or the equivalent expression
\begin{equation}
    \sigma^{2}(x)
    = \frac{-\,b-\sqrt{\,b^{2}-4ac\,}}{2c}.
    \label{eq:sigma-solution-2}
\end{equation}

\subsection{Proof of the theorem \ref{thm:block-guarantee}}
\label{app:certification-blocks}

The proof proceeds analogously to that of the original theorem \citep{koloskova2025certified}.

We adapt Lemma A7 from the original proof, using adapted definitions of $\infty$-Wasserstein distance and shifted R\'enyi divergence.
    
\begin{lemma}[Decomposed Lemma A7 \citep{koloskova2025certified}]
Let $q \ge 1$, $z,\rho,s \ge 0$, $\psi:\mathbb{R}^d \to \mathbb{R}^d$,  
and $X,Y$ be arbitrary random variables.  
If $\psi$ satisfies, for all $\mathbf{x},\mathbf{x}' \in \mathbb{R}^d$  
(for a single component $i$, while for the others nothing changes),  
\[
    \|\psi(\mathbf{x}') - \psi(\mathbf{x})\| 
    \;\le\; \rho \|\mathbf{x}' - \mathbf{x}\| + s,
\]
then
\[
    D_q^{(z_1, \ldots, (\rho z_i + s), \ldots, z_k)}(\psi(X) \,\|\, \psi(Y))
    \;\;\le\;\;
    D_q^{(z_1, \ldots, z_i , \ldots, z_k)}(X \,\|\, Y).
\]
\end{lemma}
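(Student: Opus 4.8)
The plan is to follow the proof of Lemma~A7 of \citet{koloskova2025certified} essentially verbatim, replacing the scalar shift by the vector $z\in\mathbb{R}_+^k$ and using that here $\psi$ modifies only block $i$ (and is $(\rho,s)$-Lipschitz in that block, leaving the other blocks fixed). Fix $\eta>0$. By the definition of the decomposed shifted R\'enyi divergence, pick a distribution $\mu''$ on $\mathbb{R}^d$ with $W_d(\mu'',\mathrm{law}(X))\preceq z$ and $D_q(\mu''\,\|\,\mathrm{law}(Y))\le D_q^{(z)}(X\,\|\,Y)+\eta$ (if the infimum is attained, take $\eta=0$). Let $(U,V)$ be a coupling of $\mu''$ and $\mathrm{law}(X)$ realizing $W_d(\mu'',\mathrm{law}(X))\preceq z$, so that $\mathcal{G}(U,V)\preceq z$ almost surely; that is, $\|U_j-V_j\|\le z_j$ for every block index $j$, where $U_j,V_j$ denote the block-$j$ coordinates in the fixed block decomposition.

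I would then take $\mathrm{law}(\psi(U))$ as the competitor in the infimum defining the left-hand side. Two ingredients are needed. First, the data-processing inequality for R\'enyi divergence: pushing both arguments through the same measurable map cannot increase $D_q$, hence $D_q(\mathrm{law}(\psi(U))\,\|\,\mathrm{law}(\psi(Y)))\le D_q(\mu''\,\|\,\mathrm{law}(Y))\le D_q^{(z)}(X\,\|\,Y)+\eta$. Second, $W_d(\mathrm{law}(\psi(U)),\mathrm{law}(\psi(X)))\preceq (z_1,\dots,\rho z_i+s,\dots,z_k)$: use the coupling $(\psi(U),\psi(V))$ (which couples $\mathrm{law}(\psi(U))$ and $\mathrm{law}(\psi(X))$ since $V\sim\mathrm{law}(X)$) and check the decomposition gap blockwise. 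For $j\ne i$, $\psi$ leaves block $j$ untouched, so $\|\psi(U)_j-\psi(V)_j\|=\|U_j-V_j\|\le z_j$; for $j=i$, the hypothesis on $\psi$ gives $\|\psi(U)_i-\psi(V)_i\|\le \rho\|U_i-V_i\|+s\le \rho z_i+s$. Thus $\mathcal{G}(\psi(U),\psi(V))\preceq(z_1,\dots,\rho z_i+s,\dots,z_k)$ almost surely, as required.

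Combining the two ingredients, $\mathrm{law}(\psi(U))$ is feasible for the infimum defining $D_q^{(z_1,\dots,\rho z_i+s,\dots,z_k)}(\psi(X)\,\|\,\psi(Y))$, so this divergence is at most $D_q^{(z)}(X\,\|\,Y)+\eta$; letting $\eta\downarrow 0$ completes the argument.

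The step I expect to require the most care is organizational: pinning down the precise meaning of ``$\psi$ modifies only component $i$'' so that the stated bound $\|\psi(\mathbf{x}')-\psi(\mathbf{x})\|\le\rho\|\mathbf{x}'-\mathbf{x}\|+s$ is used exactly as the blockwise estimate $\|\psi(\mathbf{x}')_i-\psi(\mathbf{x})_i\|\le\rho\|\mathbf{x}'_i-\mathbf{x}_i\|+s$ that the decomposition gap $\mathcal{G}$ actually consumes, and verifying that pushing a witnessing coupling through $\psi$ on both coordinates yields an admissible coupling for the decomposed Wasserstein distance $W_d$ with only the block-$i$ entry of the shift vector inflated. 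Every other part of the scalar-shift proof of \citet{koloskova2025certified} transfers without change.
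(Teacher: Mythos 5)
Your proof is correct and follows the same route as the paper's: push a $W_d$-witnessing coupling through $\psi$ on both coordinates, check the decomposition gap blockwise (unchanged blocks keep their $z_j$, block $i$ picks up $\rho z_i+s$), and then invoke the data-processing inequality for R\'enyi divergence. The paper's proof is terser, only stating the Wasserstein pushforward observation and saying the rest "follows from the original argument," and it writes the shift vector as $(0,\ldots,z_i,\ldots,0)$ rather than the general $(z_1,\ldots,z_k)$; your version spells out the approximating measure $\mu''$, the explicit coupling $(\psi(U),\psi(V))$, and the general shift vector, but the underlying argument is identical.
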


\begin{proof}
To adapt the original proof, it suffices to observe that for the decomposed Wasserstein distance the following inequality holds:

from $W_d(\mu, \nu) \preceq z = (0, \ldots,z_i, \ldots,0)$ 
\[
    W_d(\psi_{\#}(\mu), \psi_{\#}(\nu)) 
    \preceq (0, \ldots,(\rho z_i + s), \ldots,0),
\]
when we modify the $i$-th component.  
This inequality indeed holds due to the assumption on $\psi$: for every pair $(x, x')$, the condition ensures the bound componentwise. 
\end{proof}

Using the lemma above and decomposed Shift Reduction Lemma (\ref{lem:shift-reduction}), we can proceed block by block. By zeroing out the coordinates sequentially, we bound the \emph{increment} in R\'enyi divergence contributed by block $i$ by a quantity $\varepsilon^{\text{r\'enyi}}_i$. Summing these contributions yields an overall

\[
(\textstyle\sum_{i=1}^k \varepsilon^{\text{r\'enyi}}_i,\, q)\text{-RDP}.
\]

Since we already have an estimate for the number of steps required for unlearning in each component, summing them gives the bound on the total number of steps.

Applying the standard conversion from R\'enyi DP to $(\varepsilon,\delta)$-DP~\citep{Mironov_2017} gives

\begin{equation}
\varepsilon(\delta)
\;=\;
\sum_{i=1}^{k} \varepsilon^{\text{r\'enyi}}_i \;+\; \frac{\log(1/\delta)}{q-1}
\;=\;
\sum_{i=1}^{k} \varepsilon_i \;-\; (k-1)\frac{\log(1/\delta)}{q-1},
\end{equation}

where we define $\varepsilon_i := \varepsilon^{\text{r\'enyi}}_i + \tfrac{\log(1/\delta)}{q-1}$ for notational convenience.

\emph{We emphasize that we do not claim each block is itself an $(\varepsilon_i,\delta)$-mechanism; the equality above is an algebraic rewriting of the single $(\sum_i \varepsilon^{\text{r\'enyi}}_i,q)$-RDP bound after conversion.}

\paragraph{Equal blocks corollary.}

To guarantee the same budget as the baseline algorithm, it suffices to set 
\[
    \varepsilon^{\mathrm{renyi}}_i = \frac{\varepsilon^{\mathrm{renyi}}}{k}.
\]
Moreover, due to the randomness in the distribution of the model weights and gradients, the group norms are approximately equal. Hence we may treat 
\[
    \frac{1}{\sqrt{k}} C_0 \quad \text{and} \quad \frac{1}{\sqrt{k}} C_1
\]
as the individual clipping bounds for $B_i$.  

Keeping the same noise level (not necessarily minimal) does not change the number of steps $T$ for each group, since the factors of $\sqrt{k}$ compensate. Thus, the total cost is exactly $kT$ steps for the whole algorithm. Finally, we may choose a larger noise level, thereby reducing $T$. Since in each step we add substantially less noise to the model than in the baseline algorithm, we have the flexibility to increase the noise.

\section{Experiments} \label{app:experiments}

\subsection{Parameters and implementation details}

\paragraph{Compute resources.}
Experiments were run on NVIDIA A100 and NVIDIA T4 GPUs. The runtime-efficiency (RTE) values reported in the tables correspond to wall-clock time per unlearning run on the reported hardware.

\paragraph{Existing assets and licenses.}
We use standard public benchmark datasets, including MNIST~\citep{lecun1998mnist} and CIFAR-10~\citep{krizhevsky2009cifar10}, only for evaluation and do not redistribute them. MNIST is distributed under the Creative Commons Attribution-Share Alike 3.0 license; the original CIFAR-10 release does not specify a formal license and requests citation of the dataset creators. For reproduced empirical baselines, we use the official SaLUN implementation~\citep{fan2024salun} under its MIT license. When pretrained CIFAR models from \texttt{pytorch-cifar-models} are used, we follow its BSD-3-Clause license. All external assets are credited to their original authors and used according to their stated terms.

\label{model:linear-arch}
\begin{lstlisting}[language=Python, caption={Architecture of the model used for MNIST.}, label={app:linearnet}]
import torch.nn as nn
import torch.nn.functional as F

class LinearNet(nn.Module):
    def __init__(self, num_classes: int = 10):
        super().__init__()
        self.flatten = nn.Flatten()
        self.fc1 = nn.Linear(28 * 28, 2048)
        self.fc2 = nn.Linear(2048, 1024)
        self.fc3 = nn.Linear(1024, 512)
        self.fc4 = nn.Linear(512, 256)
        self.fc5 = nn.Linear(256, num_classes)

    def forward(self, x):
        x = self.flatten(x)
        x = F.relu(self.fc1(x))
        x = F.relu(self.fc2(x))
        x = F.relu(self.fc3(x))
        x = F.relu(self.fc4(x))
        x = self.fc5(x)
        return x
\end{lstlisting}

\label{app:hyperparams}

\begin{table}[H]
\centering
\footnotesize
\label{tab:mnist-train-hparams}
\caption{\textit{Training} hyperparameters for the fully trained MNIST model and the retrain baseline (identical settings).}
\fbox{
\begin{tabular}{@{}lcc@{}}
\textbf{Parameter} & \textbf{Fully trained} & \textbf{Retrain baseline} \\
\hline
Optimizer                  & SGD           & SGD \\
Learning rate              & 0.005          & 0.005 \\
Momentum                   & 0.9           & 0.9 \\
Weight decay               & $1\times10^{-5}$ & $1\times10^{-5}$ \\
Batch size                 & 64            & 64 \\
\end{tabular}
}
\end{table}

\begin{table}[H]
\centering
\footnotesize
\label{tab:vit-train-hparams}
\caption{Training hyperparameters for the ViT-Tiny model and the retrain baseline (identical settings). The model is instantiated via \texttt{timm.create\_model("vit\_tiny\_patch16\_224", pretrained=True)}}~\citep{rw2019timm}.
\fbox{
\begin{tabular}{@{}lcc@{}}
\textbf{Parameter} & \textbf{Pretrained ViT-Tiny} & \textbf{Retrain baseline} \\
\hline
Model architecture         & ViT-Tiny (patch 16) & same \\
Pretrained initialization  & Yes                 & Yes \\
Optimizer                  & AdamW               & AdamW \\
Learning rate              & $3\times 10^{-4}$   & $3\times 10^{-4}$ \\
Weight decay               & 0.05                & 0.05 \\
Batch size                 & 64                  & 64 \\
Epochs                     & 50                  & 50 \\
LR scheduler               & CosineAnnealingLR ($T_{\max}=50$) & same \\
Warmup                     & none                & none \\
Image resolution           & 224 (ViT input)     & 224 \\
Data augmentation          & Resize(224) + RandomHorizontalFlip + ToTensor + Normalize$(\mu,\sigma)$ & same \\
\end{tabular}
}
\end{table}

\begin{table}[H]
\centering
\footnotesize
\label{tab:cifar10-train-hparams}
\caption{Training hyperparameters for the fully trained CIFAR-10 model and the retrain baseline (identical settings).}
\fbox{
\begin{tabular}{@{}lcc@{}}
\textbf{Parameter} & \textbf{Fully trained} & \textbf{Retrain baseline} \\
\hline
Optimizer                  & SGD                & SGD \\
Learning rate              & 0.1                & 0.1 \\
Momentum                   & 0.9                & 0.9 \\
Weight decay               & $5\times10^{-4}$   & $5\times10^{-4}$ \\
Batch size                 & 256                & 256 \\
LR scheduler               & MultiStep(91, 136) $\times 0.1$; 182 epochs & same \\
Data augmentation          & Crop(32,pad=4)+Flip+Norm($\mu,\sigma$)      & same \\
\end{tabular}
}

\end{table}

\begin{table}[H]
\centering
\footnotesize
\label{tab:hparams-mnist-final}
\caption{Unlearning hyperparameters used in experiments on \textsc{MNIST}.}
\fbox{
\begin{tabular}{@{}lcc@{}}
\textbf{Parameter} & \textbf{Random 10\% deletion} & \textbf{Classwise deletion} \\
\hline
Initial distance bound $\Delta(\rho)$                & 0.5              & 3. \\
Gradient clipping $C_1$ & 5000.             & 900. \\
Unlearning step size $\eta$                  & $1\times10^{-4}$  & $1\times10^{-3}$ \\
Weight decay $\lambda$ (unlearning)          & 100                & 30 \\
      & 134 \\
Failure probability $\delta$                 & $10^{-5}$         & $10^{-4}$ \\
number of steps $T$ (per block)  & 2           & 2 \\
Fine-tuning learning rate                    & $1\times10^{-3}$  & $1\times10^{-3}$ \\
Fine-tuning weight decay                     & $1\times10^{-5}$  & $1\times10^{-5}$ \\
Fine-tuning momentum                         & 0.9               & 0.9 \\

\end{tabular}
}

\end{table}

\begin{table}[H]
\centering
\footnotesize
\label{tab:vit-blockwise-hparams}
\caption{Unlearning hyperparameters for Block-wise NFT and NFT used in experiments on ViT-Tiny.}
\fbox{
\begin{tabular}{@{}lcc@{}}

\textbf{Parameter} & \textbf{Random 10\% deletion} & \textbf{Classwise deletion} \\
\hline
Initial distance bound $\Delta(\rho)$      & 0.25               & 0.5 \\
Gradient clipping $C_1$             & 33                 & 33 \\
Unlearning step size $\eta$        & 0.002              & 0.002 \\
Weight decay $\lambda$ (unlearning)& 50                 & 50 \\
Failure probability $\delta$         & $10^{-5}$          & $10^{-5}$ \\
number of steps $T$ (per block)  & 2           & 2 \\
Fine-tuning optimizer                  & AdamW               & AdamW \\
Fine-tuning learning rate            & $5\times10^{-3}$   & $5\times10^{-3}$ \\
Fine-tuning weight decay             & 0                  & 0
\end{tabular}
}
\end{table}

\begin{table}[H]
\centering
\footnotesize
\label{tab:hparams-final}
\caption{Unlearning hyperparameters used in experiments on \textsc{CIFAR-10}.}
\fbox{
\begin{tabular}{@{}lcc@{}}

\textbf{Parameter} & \textbf{Random 10\% deletion} & \textbf{Classwise deletion} \\

\hline
Initial distance bound $\Delta(\rho)$                     & 50.            & 50.\\
Gradient clipping $C_1$             & 55                 & 55 \\
Unlearning step size $\eta$                       & $1\times10^{-4}$ & $1\times10^{-3}$ \\
Weight decay $\lambda$ (unlearning)               & 30               & 3 \\
Total privacy budget $\varepsilon$                & 165000                & 165000 \\
Failure probability $\delta$                      & $10^{-5}$        & $10^{-3}$ \\
number of steps $T$ (per block)  & 2           & 2 \\
Fine-tuning learning rate                         & $1\times10^{-3}$ & $1\times10^{-3}$ \\
Fine-tuning weight decay                          & $5\times10^{-4}$ & $5\times10^{-4}$ \\
Fine-tuning momentum                              & 0.9              & 0.9 \\

\end{tabular}
}
\end{table}

\subsection{Theory-motivated hyperparameter initialization}
\label{app:hyperparameter-selection}

For NFT-based methods, the certificate suggests a principled initialization for the noise scale and clipping radius. For fixed step budget $T$, learning rate $\gamma$, weight decay $\lambda$, proximity parameter $\Delta(\rho)$, and target $(\varepsilon,\delta)$, one can choose the R\'enyi parameters $(q,\varepsilon^{\mathrm{renyi}})$ to minimize the induced noise term under the conversion
\[
\varepsilon
=
\varepsilon^{\mathrm{renyi}}
+
\frac{\log(1/\delta)}{q-1}.
\]
This yields
\[
\sigma^2
=
\lambda\gamma(2-\lambda\gamma)
\frac{1 + (1-\lambda\gamma)^T}{1-(1-\lambda\gamma)^T}
\frac{8q}{\varepsilon^{\mathrm{renyi}}}
\Delta(\rho)^2,
\qquad
C_1
=
\frac{2\lambda\Delta(\rho)}{1-(1-\lambda\gamma)^T}.
\]
These expressions are not a universal rule, but a certificate-compatible warm start that can narrow the search space. 

\subsection{Definition of the MIA Metric Used in Table~\ref{tab:emp-comparison}}~\label{app:mia}

In Table~\ref{tab:emp-comparison} we report the MIA metric exactly as defined by \citet{jia2023model} in Appendix~C.3.

\paragraph{Definition.}
Following their protocol, an MIA model is first trained on
(a) a balanced subset of the retained dataset and
(b) a separate test dataset (disjoint from the forget set).
The trained MIA predictor is then evaluated on the forget set $D_f$
of the unlearned model $\theta_u$.

Let $\mathrm{TN}$ be the number of forgotten samples that the MIA
predictor classifies as \emph{non-members}.  
The reported metric is

\[
\text{MIA-Efficacy}
\;=\;
\frac{\mathrm{TN}}{|D_f|}.
\]

\paragraph{Interpretation.}
MIA-Efficacy measures the fraction of forgotten samples that the
attacker fails to recognize as training points.  
Thus, in this convention:
\begin{itemize}
    \item higher values indicate better unlearning quality;
    \item $\text{MIA-Efficacy} = 100\%$ means that all forgotten samples
    are predicted as non-members.
\end{itemize}

\subsection{Experiments with different block structure}

We further compare 4 constructions of the block mixing matrix $A$ (see Section~\ref{subsec:applications}):

\begin{itemize}
    \item a random $A$ with equal-sized blocks;
    \item a permutation-matrix $A$ with equal-sized blocks;
    \item a cyclic layer-wise grouping into $k$ blocks, assigning layer $\ell$ to block $\ell \bmod k$
    \item a two-block partition on head and the rest of the model (only for ViT-Tiny).
\end{itemize}

We provide results of experiments for both datasets (\textsc{MNIST} and \textsc{CIFAR-10}). For forget set $\mathcal{D}_f$, we use random 10\% deletion task.

\paragraph{Results on \textsc{MNIST}.}

Figure~\ref{fig:mnist_diff_types} shows \textsc{MNIST} trajectories at privacy budgets $\varepsilon\in\{30., 60.\}$, $\delta=10^{-5}$ and number of blocks $k = 2$. All three schemes exhibit very similar behavior during unlearning and subsequent fine-tuning; the main difference is the magnitude of the initial accuracy drop, which is slightly smaller for the odd-even split than for random splitting.

\begin{figure}[H]
    \centering
    \includegraphics[width=1.\linewidth]{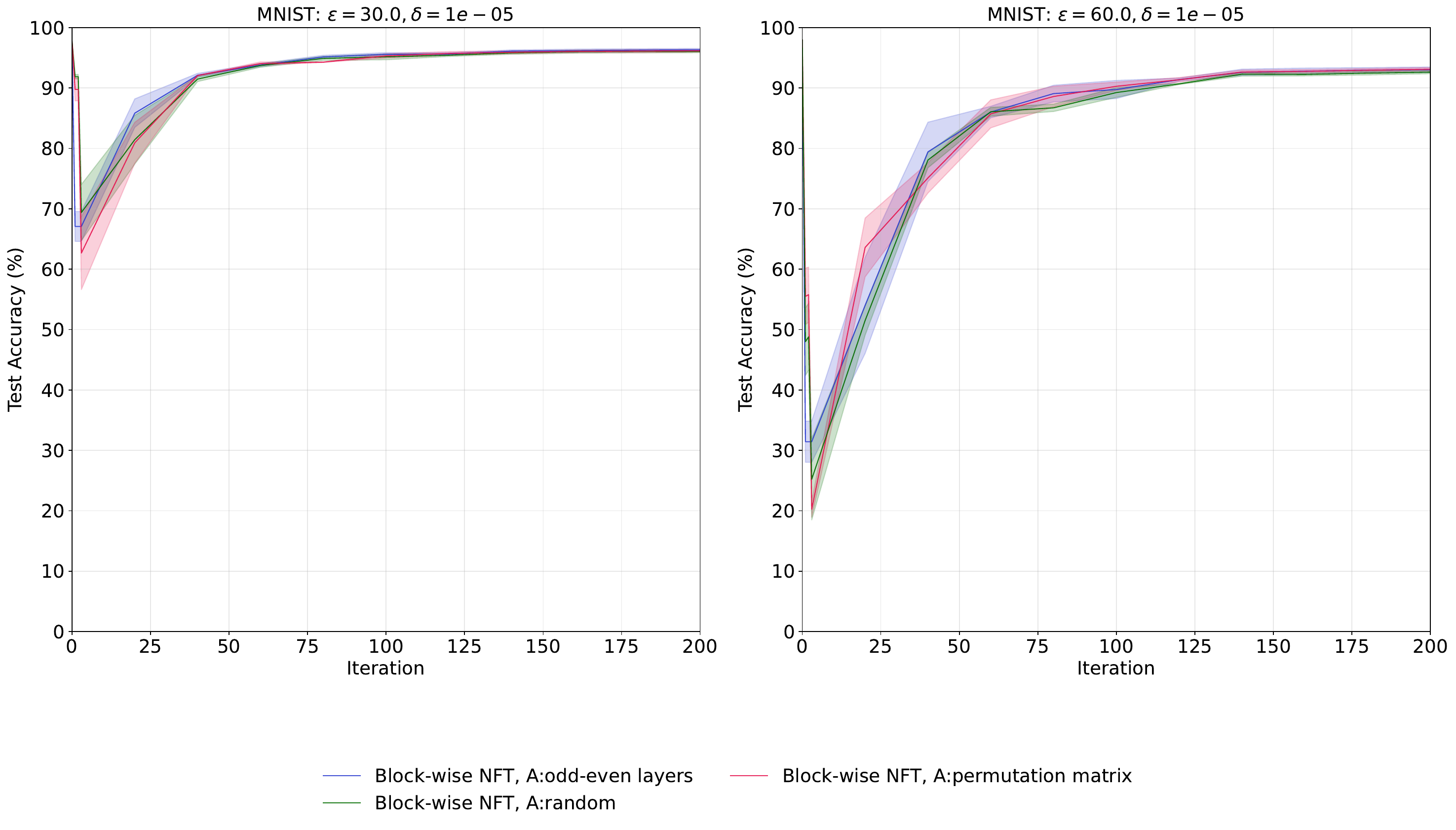}
    \caption{\textbf{Block-construction schemes} for Block-wise NFT on \textsc{MNIST} at $\varepsilon\in\{30, 60\}$ and $\delta=10^{-5}$.}
    \label{fig:mnist_diff_types}
\end{figure}

\paragraph{Results on \textsc{ViT-Tiny}.}

Figure~\ref{fig:vit-tiny_diff_types} shows \textsc{CIFAR-10} trajectories at privacy budgets $\varepsilon=50$, $\delta=10^{-5}$ and number of blocks $k = 4$. All schemes demonstrate comparable stability during unlearning and recovery under fine-tuning.

\begin{figure}[H]
    \centering
    \includegraphics[width=0.6\linewidth]{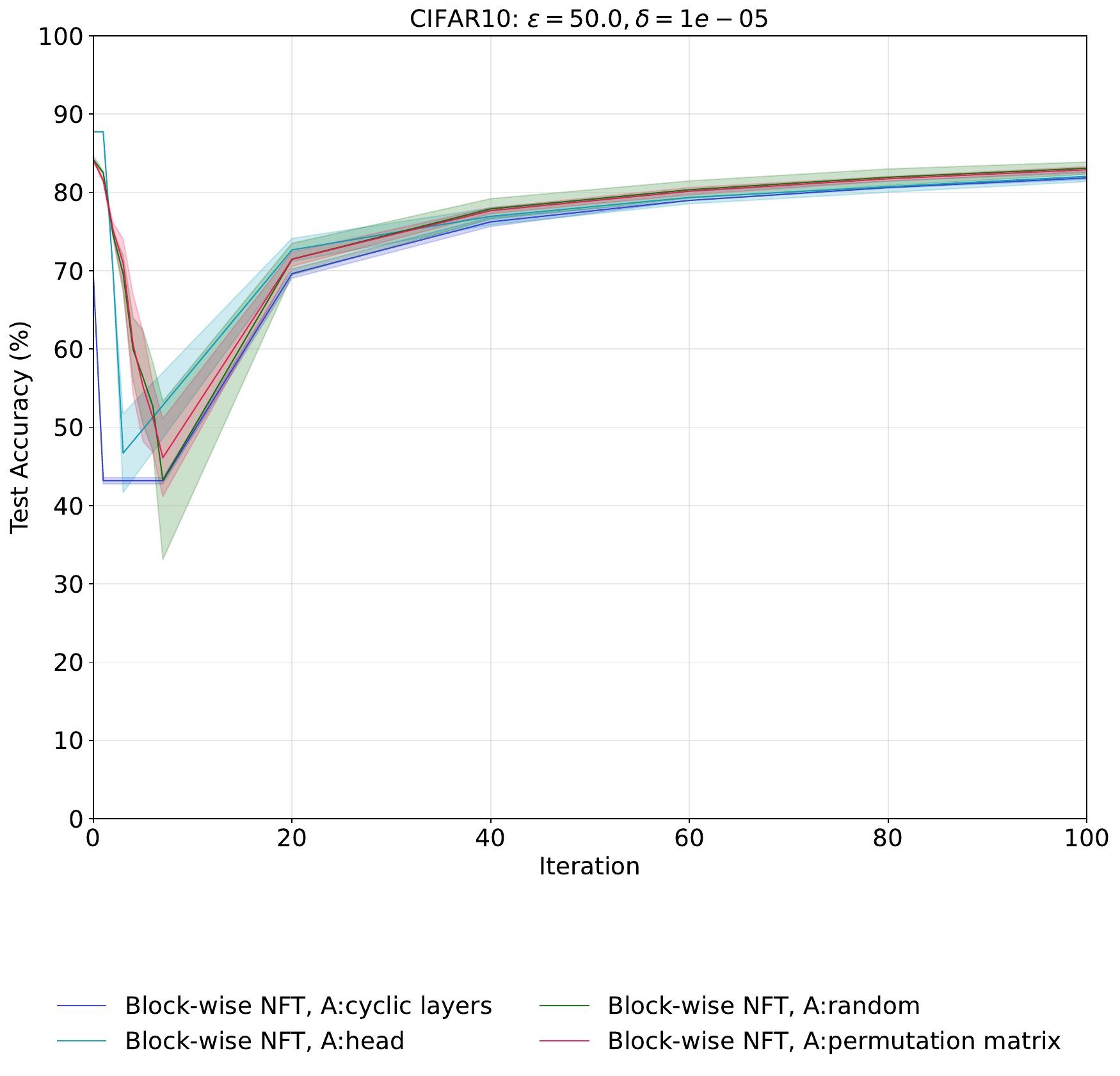}
    \caption{\textbf{Block-construction schemes} for Block-wise NFT on \textsc{CIFAR-10} for \textsc{ViT-Tiny} at $\varepsilon=50$ and $\delta=10^{-5}$.}
    \label{fig:vit-tiny_diff_types}
\end{figure}

\paragraph{Results on \textsc{ResNet-18}.}

Figure~\ref{fig:cifar_diff_types} shows \textsc{CIFAR-10} trajectories for ResNet-18 with privacy budget $\varepsilon = 165000.0$ and $\delta=10^{-5}$, with the number of blocks equals to 4 for the first 3 methods (partition head/body does not allow any other block number).

\begin{figure}[H]
    \centering
    \includegraphics[width=0.6\linewidth]{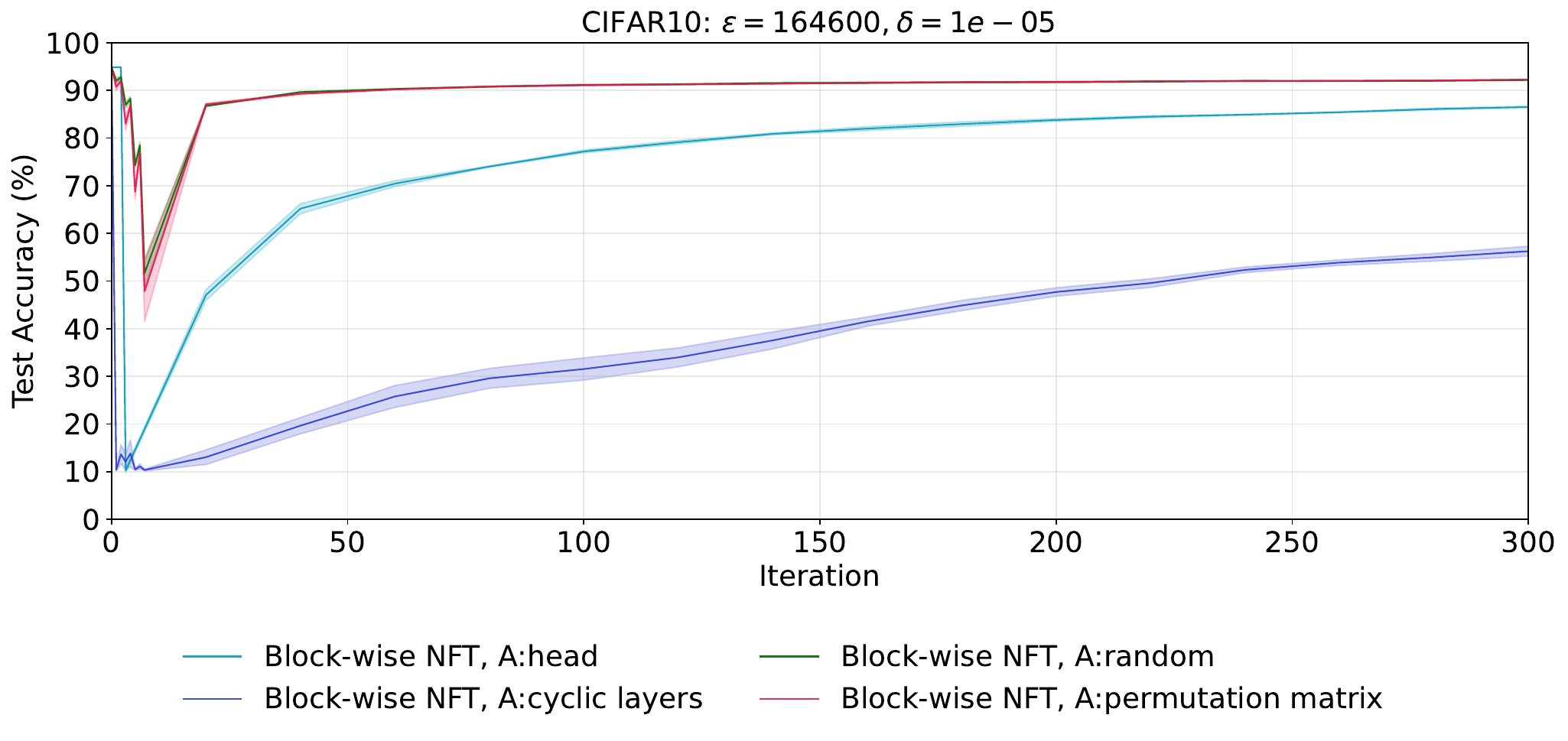}
    \caption{Block-construction schemes for Block-wise NFT on \textsc{CIFAR-10} at $\varepsilon=164600$ and $\delta=10^{-5}$.}
    \label{fig:cifar_diff_types}
\end{figure}

\subsection{Ablations}~\label{app:ablations}

\begin{table}[H]
\caption{Effect of the number of blocks $k$ on ViT-Tiny/CIFAR-10 for privacy budget $\varepsilon=50$. Min Acc. = minimum accuracy during unlearning before post-fine-tuning; Final Acc. = accuracy after post-fine-tuning; RTE is reported in \textit{seconds}; RTE (blocks only) excludes post-fine-tuning.}
\label{tab:k_ablation}
\centering
\small
\begin{tabular}{cccccc}
\hline
$k$ & Min Acc. & Final Acc. & RTE & RTE (blocks only) & $\varepsilon$ \\
\hline
2  & 37.6$\pm$7.71 & 89.69$\pm$0.49 & 90.77$\pm$0.93  & 1.46$\pm$0.3  & 50 \\
4  & 41.69$\pm$6.06 & 89.89$\pm$0.56 & 91.37$\pm$0.51  & 2.33$\pm$0.34 & 50 \\
8  & 40.12$\pm$1.09 & 89.74$\pm$0.2  & 92.01$\pm$0.67  & 3.97$\pm$0.07 & 50 \\
16 & 34.25$\pm$8.2  & 89.58$\pm$0.66 & 95.27$\pm$0.55  & 9.01$\pm$0.21 & 50 \\
32 & 34.38$\pm$10.11 & 89.62$\pm$0.17 & 108.18$\pm$0.32 & 24.66$\pm$0.11 & 50 \\
\hline
\end{tabular}
\end{table}

\begin{table}[H]
\caption{Effect of the number of blocks $k$ on ViT-Tiny/CIFAR-10 for privacy budget $\varepsilon=30$. Min Acc. = minimum accuracy during unlearning before post-fine-tuning; Final Acc. = accuracy after post-fine-tuning; RTE is reported in \textit{seconds}; RTE (blocks only) excludes post-fine-tuning.}
\label{tab:k_ablation_eps30}
\centering
\small
\begin{tabular}{cccccc}
\hline
$k$ & Min Acc. & Final Acc. & RTE & RTE (blocks only) & $\varepsilon$ \\
\hline
2  & 21.77$\pm$3.93 & 88.45$\pm$1.1  & 91.9$\pm$0.51   & 1.45$\pm$0.3  & 30 \\
4  & 20.62$\pm$5.63 & 89.27$\pm$0.68 & 91.4$\pm$0.58   & 2.03$\pm$0.06 & 30 \\
8  & 24.34$\pm$9.09 & 89.02$\pm$0.33 & 92.81$\pm$0.32  & 3.97$\pm$0.11 & 30 \\
16 & 13.94$\pm$4.01 & 88.52$\pm$1.29 & 95.87$\pm$0.76  & 9.01$\pm$0.08 & 30 \\
32 & 13.47$\pm$0.43 & 88.74$\pm$0.55 & 108.65$\pm$0.33 & 24.59$\pm$0.03 & 30 \\
\hline
\end{tabular}
\end{table}

\begin{table}[H]
\centering
\small
\caption{Effect of block construction and the number of blocks $k$ on ViT-Tiny/CIFAR-10 at privacy budget $\varepsilon=50$. Min Acc. = minimum accuracy during unlearning before post-fine-tuning; Final Acc. = accuracy after post-fine-tuning; RTE is reported in \textit{seconds}; RTE (blocks only) excludes post-fine-tuning.}
\label{tab:block_mode_ablation}
\begin{tabular}{cccccc}
\hline
$k$ & Block mode & Min Acc. & Final Acc. & RTE & RTE (blocks only) \\
\hline
2 & random & 36.39$\pm$15.99 & 89.81$\pm$0.52 & 92.37$\pm$0.32 & 1.28$\pm$0.02 \\
2 & random permutation & 39.74$\pm$3.3 & 89.53$\pm$0.29 & 90.98$\pm$0.75 & 0.7$\pm$0.03 \\
2 & layer-wise & 38.94$\pm$1.06 & 89.9$\pm$0.22 & 91.25$\pm$0.69 & 0.8$\pm$0.29 \\
\hline
4 & random & 41.69$\pm$6.06 & 89.89$\pm$0.56 & 92.38$\pm$0.18 & 2.15$\pm$0.13 \\
4 & random permutation & 36.35$\pm$1.46 & 89.67$\pm$0.33 & 91.37$\pm$0.58 & 1.39$\pm$0.02 \\
4 & layer-wise & 35.54$\pm$5.11 & 89.73$\pm$0.16 & 91.52$\pm$0.32 & 1.3$\pm$0.01 \\
\hline
8 & random & 40.12$\pm$1.09 & 89.74$\pm$0.2 & 93.42$\pm$0.14 & 3.95$\pm$0.06 \\
8 & random permutation & 44.41$\pm$2.19 & 89.98$\pm$0.19 & 93.03$\pm$0.19 & 3.3$\pm$0.19 \\
8 & layer-wise & 34.34$\pm$6.56 & 89.73$\pm$0.12 & 92.05$\pm$0.34 & 2.89$\pm$0.02 \\
\hline
\end{tabular}
\end{table}

\begin{table}[H]
\caption{Sensitivity to randomness in block partitioning for $k=2$ on ViT-Tiny/CIFAR-10, reported for different privacy budgets $\varepsilon$. Min Acc. = minimum accuracy during unlearning before post-fine-tuning; Final Acc. = accuracy after post-fine-tuning.}
\label{tab:block_partition_randomness}
\centering
\small
\begin{tabular}{cccc}
\hline
$k$ & $\varepsilon$ & Min Acc. & Final Acc. \\
\hline
2 & 50 & 38.92$\pm$7.1  & 89.87$\pm$0.49 \\
2 & 70 & 51.58$\pm$2.37 & 90.09$\pm$0.41 \\
2 & 90 & 57.56$\pm$0.0  & 90.52$\pm$0.0 \\
\hline
\end{tabular}
\end{table}

\subsection{Class-wise experiments}

We evaluate class-wise forgetting on \textsc{MNIST}, where we forget class 5 entirely. 

Table~\ref{tab:comparison-classwise} reports UA/RA/TA/MIA for $\varepsilon \in \{75, 90\}$: Block-wise NFT attains UA=100 and MIA=100 at both budgets while preserving higher RA/TA than NFT.

Figure~\ref{fig:classwise-mnist} shows the corresponding learning dynamics, with a smaller transient drop and clearer recovery under fine-tuning.

\begin{figure}[H]
    \centering
    \includegraphics[width=1.\linewidth]{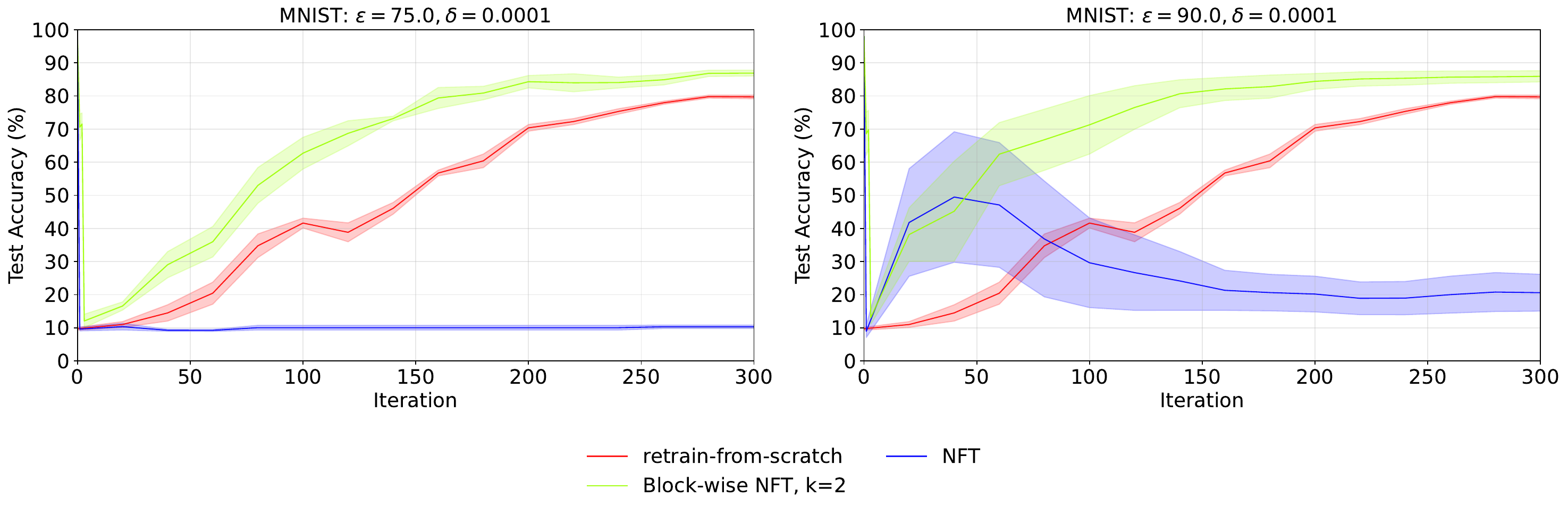}
    \caption{\textsc{MNIST}, class-wise forgetting. Evaluating unlearning metrics for NFT and Block-wise NFT across $\varepsilon\in\{75, 90\}.$ for unlearning class 5.}
    \label{fig:classwise-mnist}
\end{figure}

\begin{table}[H]
\caption{\textbf{\textsc{MNIST}, class-wise forgetting.} Evaluating unlearning metrics for NFT and Block-wise NFT for $\varepsilon\in\{75, 90\}$. Block-wise NFT attains the best possible score for UA and MIA while preserving RA/TA better than NFT and close to the retrained model.}
\label{tab:comparison-classwise}
\begin{center}
\begin{tabular}{lccccc}
\multicolumn{1}{c}{\bf Method} & \multicolumn{1}{c}{\bf UA} & \multicolumn{1}{c}{\bf RA} & \multicolumn{1}{c}{\bf TA} & \multicolumn{1}{c}{\bf MIA}
\\ \hline \\
Retrain & 100.00 & 99.83 & 89.33 & 100.00\\ 
\hline \\
NFT (75) & 100.00 & 15.76 & 14.22 & 100.00 & \\ 
Block-wise NFT (75) & 100 & 97.47 & 88.06 & 100.00 & \\
NFT (90) & 99.96 & 38.16 & 34.82 & 100.00 & \\ 
Block-wise NFT (90) & 100 & 98.09 & 88.53 & 100.00 & \\ 
\end{tabular}
\end{center}

\end{table}

\subsection{Additional experiments}

We also report results for a more challenging setup in which 50\% of the training data are selected uniformly at random for removal.

\begin{figure}[H]
    \centering
    \includegraphics[width=1.\linewidth]{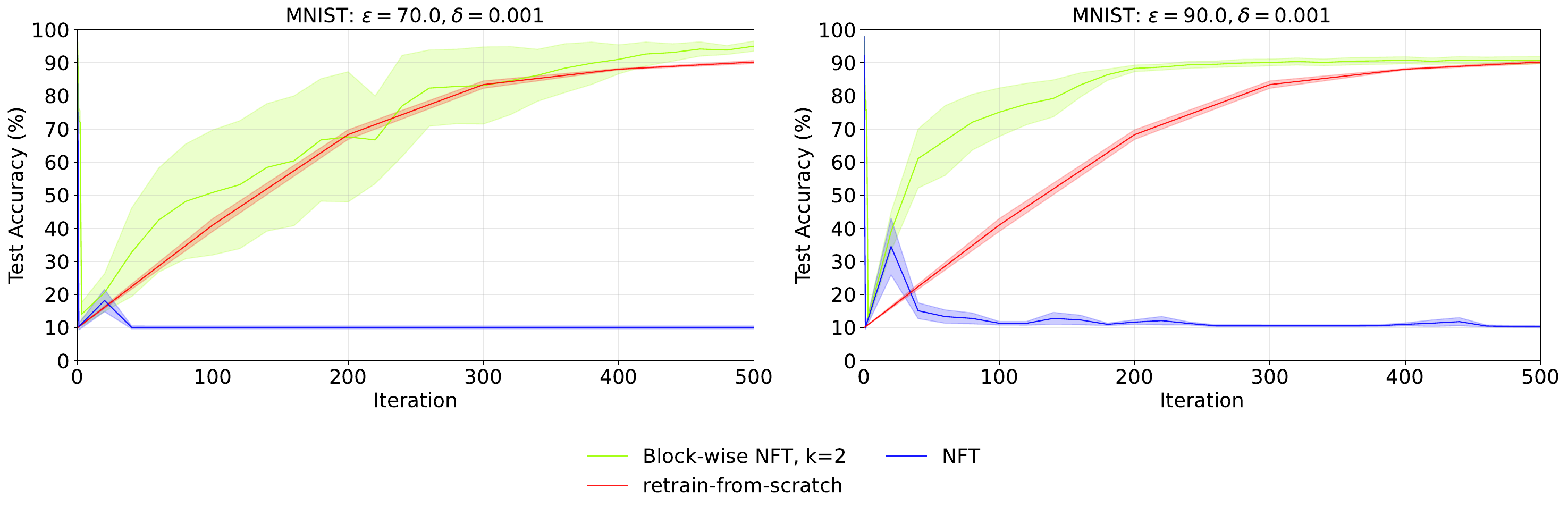}
    \caption{\textbf{Random 50\% deletion.} We compare standard NFT with Block-wise NFT with two blocks, and show the final accuracy of retraining from scratch for reference. Despite the larger forget set, Block-wise NFT achieves accuracy recovery.}
    \label{fig:placeholder}
\end{figure}

\subsection{Extended experiments for CIFAR-10}
\label{app:classwise-exps}

In this section we provide the full per-class unlearning results for
\textsc{CIFAR-10}.  
For each of the 10 classes, we delete the class entirely, retrain the baseline model from scratch on the retained data, and apply Block-wise NFT under the same setting. The results are presented in Table~\ref{tab:cifar10-perclass}.

\paragraph{Note on runtime.}
The RTE values reported in Table~\ref{tab:emp-comparison} were obtained in a different runtime session than the experiments in the main paper, resulting in a slight systematic shift in wall-clock time. Since RTE is used only as a relative measure within each experiment, this does not affect any comparisons or conclusions.

\begin{table}[H]
\centering
\caption{\textbf{Per-class deletion results on \textsc{CIFAR-10}.}
For each deleted class, we report results for retraining-from-scratch
and for Block-wise NFT (our method). Metrics: unlearned accuracy (UA),
retain accuracy (RA), test accuracy (TA), membership-inference score (MIA),
and relative training effort (RTE, minutes).}
\label{tab:cifar10-perclass}
\scalebox{0.9}{
\begin{tabular}{ccccccccccc}
\toprule
& \multicolumn{5}{c}{\bf Retrain} 
& \multicolumn{5}{c}{\bf Block-wise NFT} \\
\bf Class 
& UA & RA & TA & MIA & RTE 
& UA & RA & TA & MIA & RTE \\
\cmidrule(r){2-6} \cmidrule(l){7-11}
\textbf{0} 
& 100.00 & 100.00 & 85.21 & 100.00 & 44.35 
& 100.00 & 97.17 & 83.57 & 100.00 & 0.80 \\
\textbf{1} 
& 100.00 & 100.00 & 84.98 & 100.00 & 44.29
& 100.00 & 96.89 & 82.93 & 100.00 & 0.81 \\
\textbf{2} 
& 100.00 & 100.00 & 85.56 & 100.00 & 44.29
& 100.00 & 97.24 & 83.64 & 100.00 & 0.80 \\
\textbf{3} 
& 100.00 & 100.00 & 86.47 & 100.00 & 44.26 
& 100.00 & 97.72 & 84.99 & 100.00 & 0.81 \\
\textbf{4} 
& 100.00 & 100.00 & 85.00 & 100.00 & 44.53
& 100.00 & 97.06 & 83.73 & 100.00 & 0.81 \\
\textbf{5} 
& 100.00 & 100.00 & 86.14 & 100.00 & 44.35
& 100.00 & 96.18 & 84.33 & 100.00 & 0.80 \\
\textbf{6} 
& 100.00 & 100.00 & 85.06 & 100.00 & 44.35 
& 100.00 & 97.05 & 82.87 & 100.00 & 0.81 \\
\textbf{7} 
& 100.00 & 100.00 & 84.83 & 100.00 & 44.07 
& 100.00 & 96.96 & 83.32 & 100.00 & 0.80 \\
\textbf{8} 
& 100.00 & 100.00 & 85.03 & 100.00 & 44.47 
& 100.00 & 96.97 & 83.21 & 100.00 & 0.80 \\
\textbf{9} 
& 100.00 & 100.00 & 84.9 & 100.00 & 44.29 
& 100.00 & 97.10 & 83.32 & 100.00 & 0.81 \\
\cmidrule(r){2-6} \cmidrule(l){7-11}
\textbf{Mean} 
& 100.00 & 100.00 & 85.32 & 100.00 & 44.33 
& 100.00 & 96.88 & 83.79 & 100.00 & 0.80 \\
\textbf{Std} 
& 0 & 0 & 0.53 & 0 & 0.11 
& 0 & 0.47 & 0.63 & 0 & 0.01 \\
\bottomrule
\end{tabular}
}
\end{table}

\section{Recalculation of $\Delta(\rho)$}~\label{app:budget-recalculation}

For each block, the NFT algorithm is parameterized by
$\lambda, \gamma, \Delta(\rho), C_1$ and the budget parameters
$\varepsilon_{\text{r\'enyi}}$ and $q$.

Given these parameters, the noise variance $\sigma^2$ and the number of unlearning steps $T$ are determined by the algorithm \ref{alg:block-noisy-finetuning}.

Importantly, only the parameters $(\lambda, \gamma, C_1, \sigma^2, T)$ affect the optimization trajectory of the algorithm. In particular, if two NFT instantiations share the same values of $(\lambda, \gamma, C_1, \sigma^2, T)$, then they generate identical parameter updates (with shared random seed) and therefore follow the same optimization path.

This observation allows us to state the following result.

\begin{proposition}
    Consider an NFT algorithm with parameters $\lambda, \gamma, \Delta(\rho), C_1$ and budget parameter $B := q / \varepsilon_{\text{r\'enyi}}$, with noise variance $\sigma^2$ and number of unlearning steps $T$. Define
    \[
        \Delta^{\text{new}}(\rho) := m \cdot \Delta(\rho),
        \qquad
        B^{\text{new}} := \frac{1}{m^2} \cdot B ,
    \]
    for some $m \ge 1$.

    Then if we run the algorithm \ref{alg:block-noisy-finetuning} with parameters $\lambda, \gamma, \Delta^{new}(\rho), C_1, B^{new}$, \emph{while keeping the same values of $\sigma^2$ and $T$}, then the resulting algorithm follows exactly the same optimization trajectory and is certified with a \emph{different budget}, correspondent to $B^{new}$.
\end{proposition}

\begin{proof}
In Algorithm~\ref{alg:block-noisy-finetuning}, the noise variance and the number of unlearning steps are computed according to
    \[
        \sigma^2 =
        \gamma(2 - \gamma \lambda)  \tfrac{q} {\varepsilon^{\text{r\'enyi}}} 
        \Bigl(2 - \tfrac{\lambda \Delta(\rho)}{2C_1}\Bigr) \Delta(\rho)C_1,
    \]
and
    \[
      T(\sigma^2_{\min})
      \;=\;
      \frac{\log\!\bigl(1 - \tfrac{\lambda \Delta(\rho)}{2C_1}\bigr)}{\log(1-\gamma\lambda)}.
    \]

Under the transformation $\Delta(\rho) \to \Delta^{\text{new}}(\rho)$, $C_1 \to m \cdot C_1$, and $B \to B^{\text{new}}$, the values of $\sigma^2$ and $T$ remain unchanged.

The resulting algorithm is therefore parameterized by
$(\lambda, \gamma, m C_1, \sigma^2, T)$ and admits a valid certification. In the certification proof \citep{koloskova2025certified}, the only property required of the gradient clipping constant is that the norm of the clipped gradient does not exceed this constant. For this reason, for the algorithm with the stricter clipping threshold $C_1$, the certification may safely use the larger value $m C_1$.

Consequently, an algorithm parameterized by $(\lambda, \gamma, C_1, \sigma^2, T)$ is certified with budget correspondent to $B^{\text{new}}$, and its optimization trajectory -- fully determined by $(\lambda, \gamma, C_1, \sigma^2, T)$ -- coincides exactly with the original trajectory.

\end{proof}

What does this mean in practice? Throughout our discussion, certification has always been understood as being conditional on the chosen value of $\Delta(\rho)$.

The proposition above shows that even if the selected $\Delta(\rho)$ turns out to be inaccurate, this does not invalidate the certification of the algorithm. The algorithm itself remains certified; however, the resulting privacy budget may differ from the one originally anticipated. In such a case, the budget can be recalculated using the updated estimate of $\Delta(\rho)$.

Importantly, this recalculation does not alter the optimization trajectory of the algorithm. The sequence of updates remains unchanged. Consequently, the certification budget can be viewed as a function of the chosen estimate of $\Delta(\rho)$, while the underlying algorithmic behavior stays fixed.

\subsection{How to calculate a new budget}

To decrease $\dfrac{1}{B}$ by $m^2$ times, we need to understand connection between $\dfrac{1}{B}$ and unlearning budget $(\varepsilon, \delta)$. 

For given $(\varepsilon, \delta)$, the pair $\varepsilon_{\text{r\'enyi}}, q$ is calculated to minimize $\sigma^2$, or, equally, $B$.
\[
B = \dfrac{q}{\varepsilon_{\text{r\'enyi}}} \to \text{min}, \;\;\; \text{where} \;\;\; 
\varepsilon  = \varepsilon^{\text{r\'enyi}} + \tfrac{\log(1/\delta)}{q-1} 
\]
Denote $a := \log(1/\delta)$. From the constraint,
\[
\varepsilon_{\text{r\'enyi}}(q)=\varepsilon-\frac{a}{q-1},
\qquad
B(q)=\frac{q}{\varepsilon-\frac{a}{q-1}},
\qquad
q>1+\frac{a}{\varepsilon}.
\]
Let $t=q-1$. Minimizing $B$ is equivalent to minimizing
\[
B(t)=\frac{t(t+1)}{\varepsilon t-a},\qquad t>\frac{a}{\varepsilon}.
\]
The unique minimizer satisfies
\[
\varepsilon t^2-2at-a=0
\quad\Rightarrow\quad
t^\star=\frac{a+\sqrt{a(a+\varepsilon)}}{\varepsilon}.
\]
Hence
\[
q^\star
=1+t^\star
=1+\frac{a+\sqrt{a(a+\varepsilon)}}{\varepsilon},
\]
and
\[
\varepsilon_{\text{r\'enyi}}^\star
=\varepsilon-\frac{a}{t^\star}
=\varepsilon+a-\sqrt{a(a+\varepsilon)}.
\]
Therefore,
\[
q^\star = 1+\frac{\log(1/\delta)+\sqrt{\log(1/\delta)\bigl(\log(1/\delta)+\varepsilon\bigr)}}{\varepsilon},
\qquad
\varepsilon_{\text{r\'enyi}}^\star
=\varepsilon+\log(1/\delta)-\sqrt{\log(1/\delta)\bigl(\log(1/\delta)+\varepsilon\bigr)}.
\]

The graph \ref{fig:B-and-varepsilon-connection-1000} illustrates dependency $\varepsilon$ and $1/B$ with fixed $\delta = 10^{-5}$.

\begin{figure}[H]
    \centering
    \includegraphics[width=1.\linewidth]{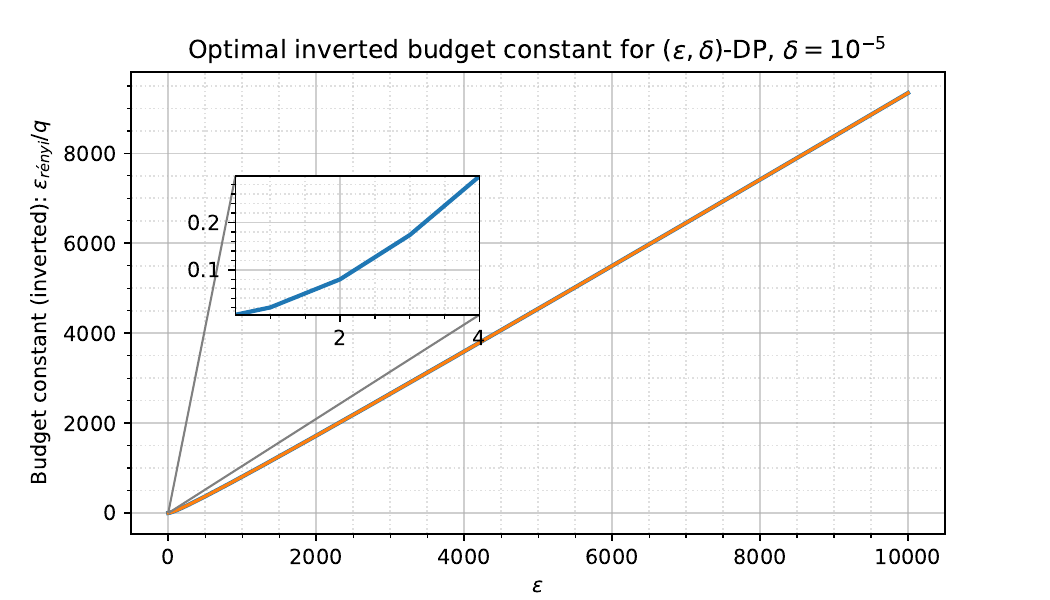}
    \caption{Budget constant connection to budget $\varepsilon$}
    \label{fig:B-and-varepsilon-connection-1000}
\end{figure}

\section{Delta estimation}~\label{app:delta-estimation}

\paragraph{Empirical sensitivity estimation for $\Delta(\rho)$.}
To instantiate $\Delta(\rho)$ in our experiments, we perform an architecture-level sensitivity estimation procedure \emph{prior} to receiving any unlearning request. The goal is to quantify how far the trained parameters can move under small, controlled changes of the training set, under a coupled comparison of training trajectories.

\paragraph{Coupled training protocol.}
We run training multiple times (10 runs in our experiments) under fixed training randomness: we fix all seeds and the data order, and keep the optimization hyperparameters unchanged. We then construct a new training stream by applying a controlled dataset modification while preserving the same mini-batch structure as much as possible. Concretely, for \emph{random $10\%$ deletion} we remove a randomly selected $10\%$ subset of training examples; for \emph{classwise deletion} we remove all training examples belonging to a randomly selected target class. The removed samples are replaced by retained samples so that the overall training pipeline remains coupled across the two runs.

\paragraph{Distance measurement and aggregation.}
For each run, we train on the original dataset and on its perturbed counterpart, and measure the resulting parameter distance. We aggregate these distances across the 10 runs and use the resulting value as an empirical upper estimate for $\Delta(\rho)$ for the considered architecture and deletion setting. 

As illustrated by the calibration plots, the \emph{time evolution} of the proximity -- i.e., how the distance between coupled trajectories grows and stabilizes over training -- is often consistent across runs under fixed coupling. This suggests that, for a given architecture and training pipeline, one can characterize this trajectory in advance and use it as a practical guide (e.g., for selecting conservative instantiations of $\Delta(\rho)$), while trading a fully distribution-free bound for a fast, architecture-specific calibration.

In addition, storing intermediate checkpoints during training can be beneficial: in practice, later checkpoints with comparable accuracy may exhibit smaller estimated proximity, and initializing unlearning from such a checkpoint can reduce the resulting certified budget for the same unlearning trajectory.

Figures~\ref{fig:delta-traj-mnist-10p-lr5e-3}--\ref{fig:delta-traj-cifar-rn18-class-lr1e-1} visualize the parameter distance $\Delta$ (right axis) together with test accuracy (left axis) over training steps for different deletion types and architectures. Here $\Delta:=\Delta(1)$ denotes the $\ell_2$ distance between the parameters of the two coupled runs. Notably, these trajectories often exhibit a regime where accuracy is already high while $\Delta$ is still relatively small, motivating the use of earlier checkpoints when a tighter proximity (and thus a smaller certified budget) is desired. On MNIST, varying the learning rate can noticeably change the magnitude of $\Delta$ even when the final accuracy is similar (Figures~\ref{fig:delta-traj-mnist-10p-lr5e-3} and~\ref{fig:delta-traj-mnist-10p-lr1e-2}).

\begin{figure}[H]
    \centering
    \includegraphics[width=0.8\linewidth]{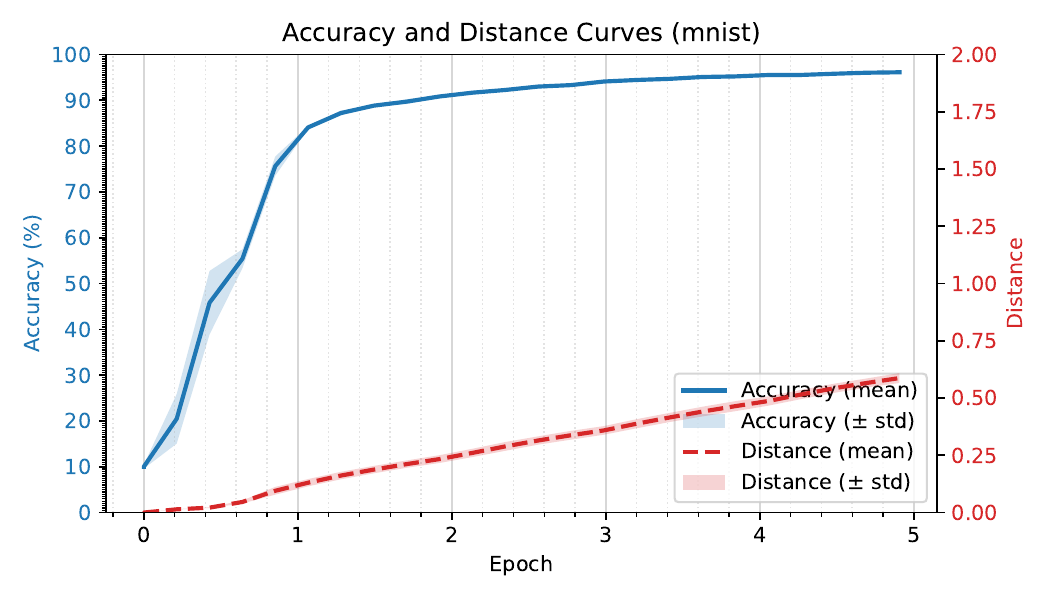}
    \caption{\textbf{MNIST (4-layer MLP), random 10\% deletion, lr=0.005.}
    Test accuracy (left) and parameter distance $\Delta$ (right) over training steps.}
    \label{fig:delta-traj-mnist-10p-lr5e-3}
\end{figure}

\begin{figure}[H]
    \centering
    \includegraphics[width=0.8\linewidth]{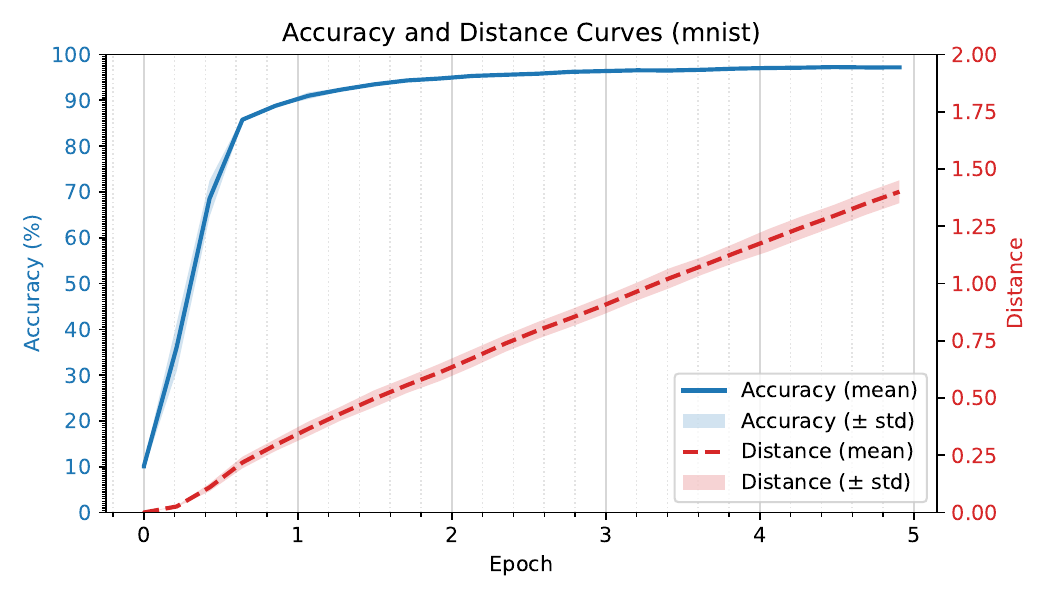}
    \caption{\textbf{MNIST (4-layer MLP), random 10\% deletion, lr=0.01.}
    Test accuracy (left) and parameter distance $\Delta$ (right) over training steps.}
    \label{fig:delta-traj-mnist-10p-lr1e-2}
\end{figure}

\begin{figure}[H]
    \centering
    \includegraphics[width=0.8\linewidth]{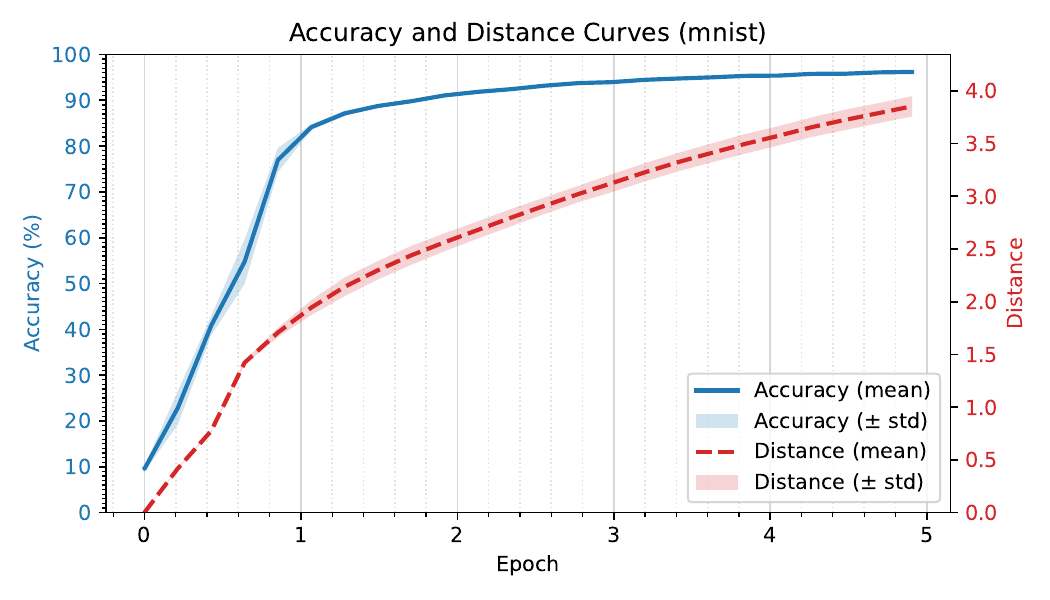}
    \caption{\textbf{MNIST (4-layer MLP), class deletion, lr=0.005.}
    Test accuracy (left) and parameter distance $\Delta$ (right) over training steps.}
    \label{fig:delta-traj-mnist-class-lr5e-3}
\end{figure}

\begin{figure}[H]
    \centering
    \includegraphics[width=0.8\linewidth]{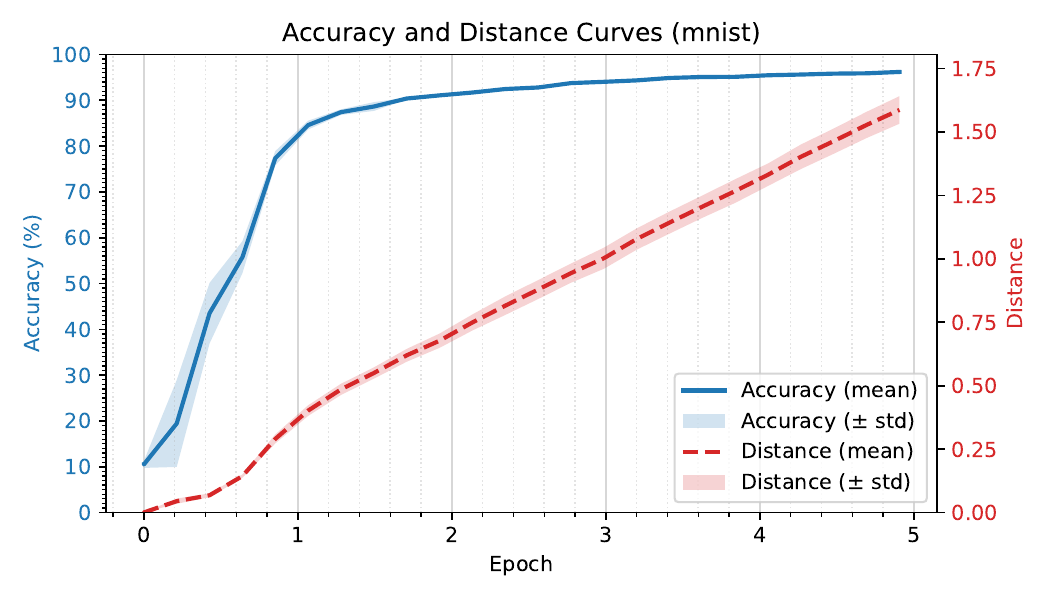}
    \caption{\textbf{MNIST (4-layer MLP), random 50\% deletion, lr=0.005.}
    Test accuracy (left) and parameter distance $\Delta$ (right) over training steps.}
    \label{fig:delta-traj-mnist-50p-lr5e-3}
\end{figure}

\begin{figure}[H]
    \centering
    \includegraphics[width=0.8\linewidth]{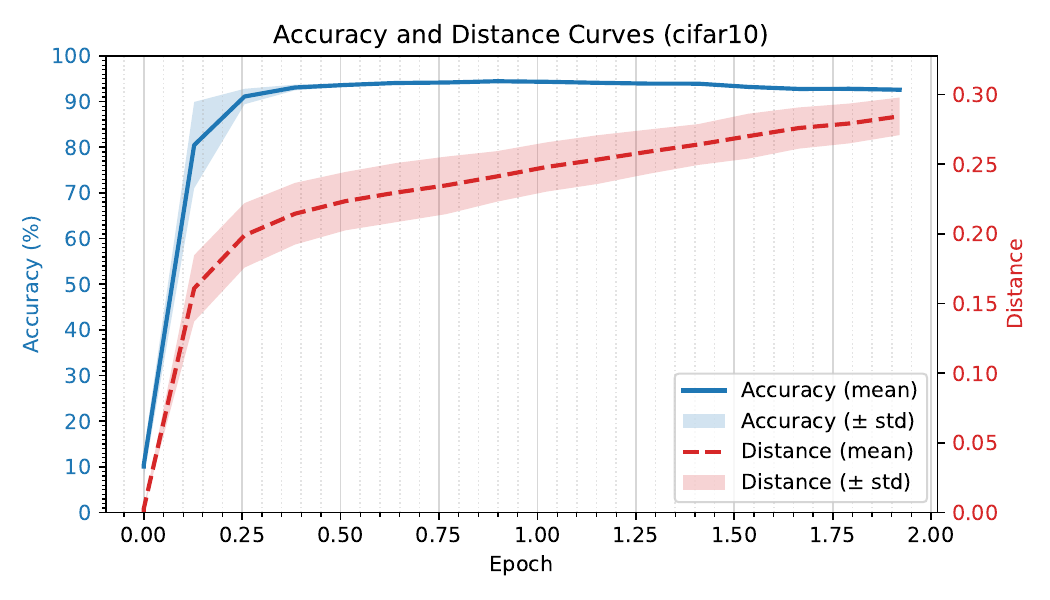}
    \caption{\textbf{CIFAR-10 (ViT-Tiny), random 10\% deletion, lr=3e-4.}
    Test accuracy (left) and parameter distance $\Delta$ (right) over training steps.}
    \label{fig:delta-traj-cifar-vit-10p-lr3e-4}
\end{figure}

\begin{figure}[H]
    \centering
    \includegraphics[width=0.8\linewidth]{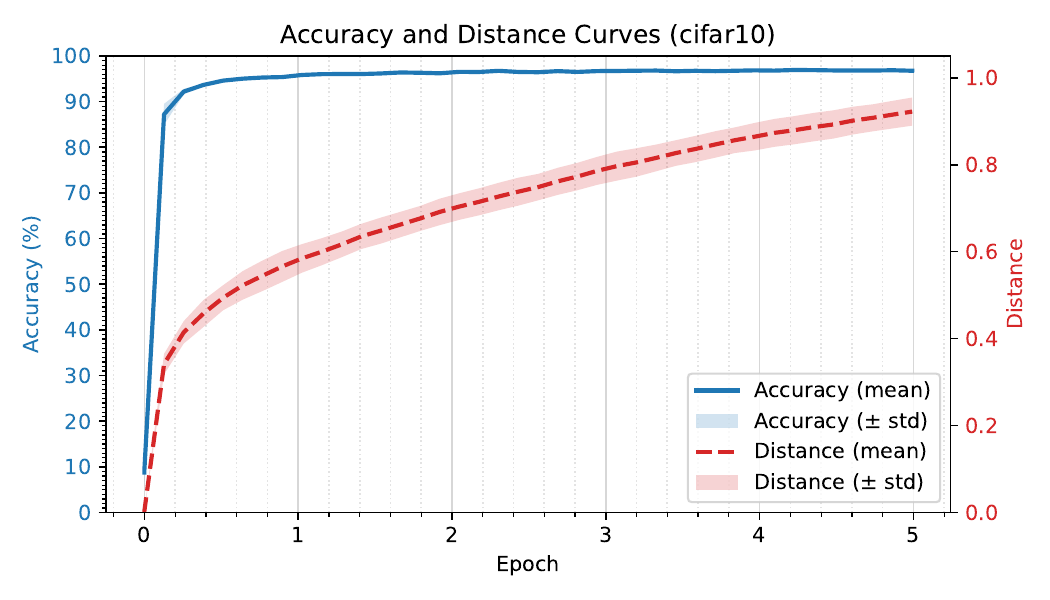}
    \caption{\textbf{CIFAR-10 (ViT-Tiny), class deletion, lr=3e-4.}
    Test accuracy (left) and parameter distance $\Delta$ (right) over training steps.}
    \label{fig:delta-traj-cifar-vit-class-lr3e-4}
\end{figure}

\begin{figure}[H]
    \centering
    \includegraphics[width=0.8\linewidth]{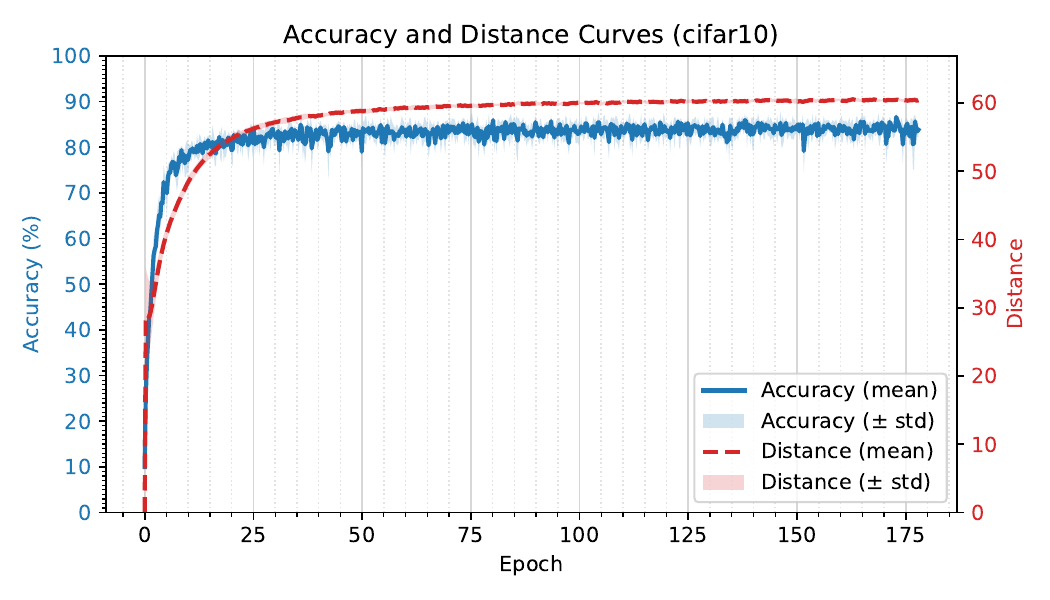}
    \caption{\textbf{CIFAR-10 (ResNet-18), random 10\% deletion, lr=0.1.}
    Test accuracy (left) and parameter distance $\Delta$ (right) over training steps.}
    \label{fig:delta-traj-cifar-rn18-10p-lr1e-1}
\end{figure}

\begin{figure}[H]
    \centering
    \includegraphics[width=0.8\linewidth]{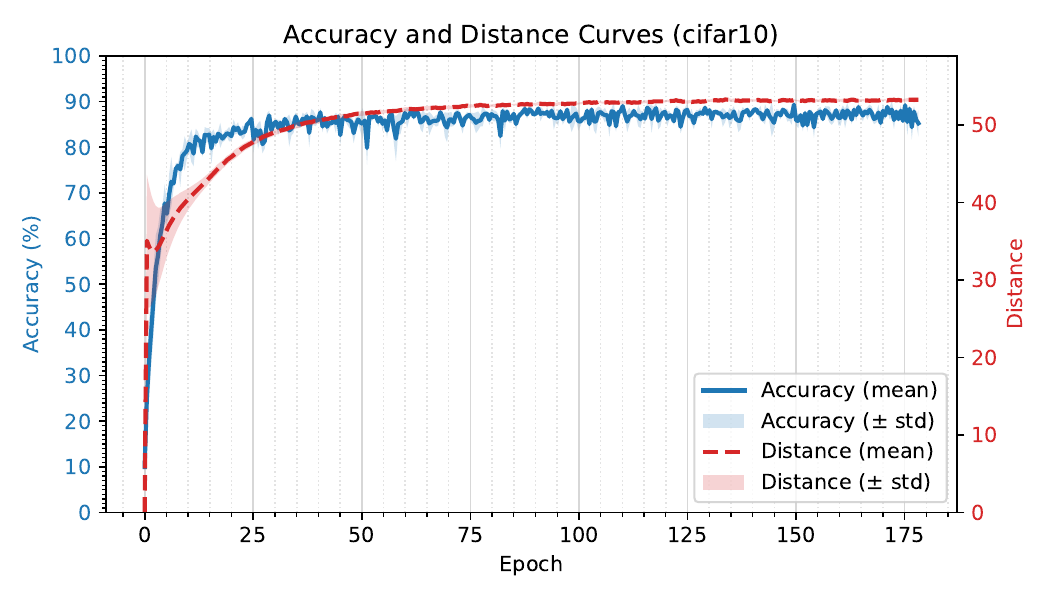}
    \caption{\textbf{CIFAR-10 (ResNet-18), class deletion, lr=0.1.}
    Test accuracy (left) and parameter distance $\Delta$ (right) over training steps.}
    \label{fig:delta-traj-cifar-rn18-class-lr1e-1}
\end{figure}

\subsection{Sensitivity to proximity calibration}
\label{app:delta-rho-sensitivity}

We additionally study how sensitive Block-wise NFT is to the instantiated value
of $\Delta(\rho)$. Recall that $\Delta(\rho)$ enters only through the certified
noise calibration: larger values correspond to a more conservative estimate of
the initial discrepancy between the full-data and retained-data training runs.

Figure~\ref{fig:delta-overestimation} shows the effect of overestimating
$\Delta(\rho)$ by a multiplicative factor $m$. Moderate overestimation has
limited effect on the final model: the final accuracy remains stable for small
increases of $\Delta(\rho)$, while larger overestimation eventually makes the
certified noisy phase too conservative and degrades utility. This supports the
view that the method does not require an exactly optimal calibration, although
overly pessimistic values of $\Delta(\rho)$ can still harm performance.

Figure~\ref{fig:budget-recalculation} illustrates the complementary
re-certification perspective. If a completed run is later evaluated under a
larger value $m\Delta(\rho)$, the same trajectory can be certified with a larger
privacy/unlearning budget as described in Appendix~\ref{app:budget-recalculation}. The increase is smooth for moderate changes in
$\Delta(\rho)$, which means that small calibration corrections do not require
rerunning unlearning; they can instead be accounted for by recalculating the
certified budget.

\begin{figure}[H]
    \centering
    \includegraphics[width=0.8\linewidth]{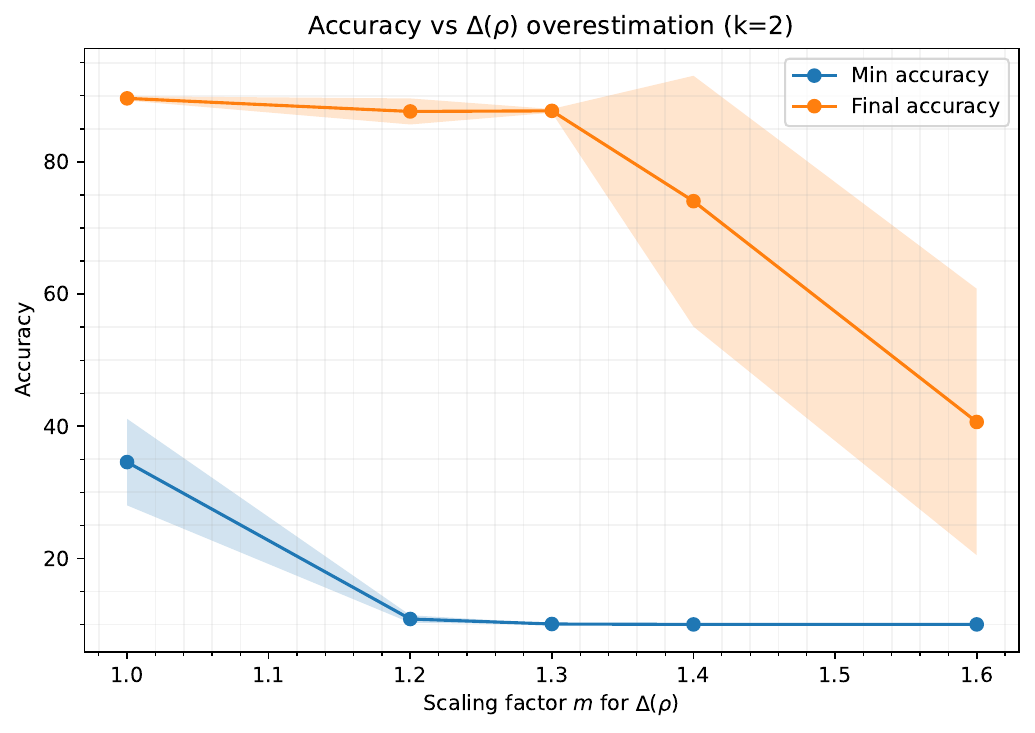}
    \caption{
    Sensitivity to $\Delta(\rho)$ overestimation ($k=2$) for ViT-Tiny/CIFAR-10. 
    The x-axis shows the factor $m$ in $\Delta(\rho) \to m\Delta(\rho)$, while the y-axis reports accuracy. 
    Here, Min accuracy denotes the accuracy immediately after the unlearning steps, before post-fine-tuning, i.e., the worst accuracy reached during the unlearning process. 
    Moderate overestimation (up to $\sim$30\%) preserves high final accuracy, while larger values lead to degradation.
    }
    \label{fig:delta-overestimation}
\end{figure}

\begin{figure}[H]
    \centering
    \includegraphics[width=1.\linewidth]{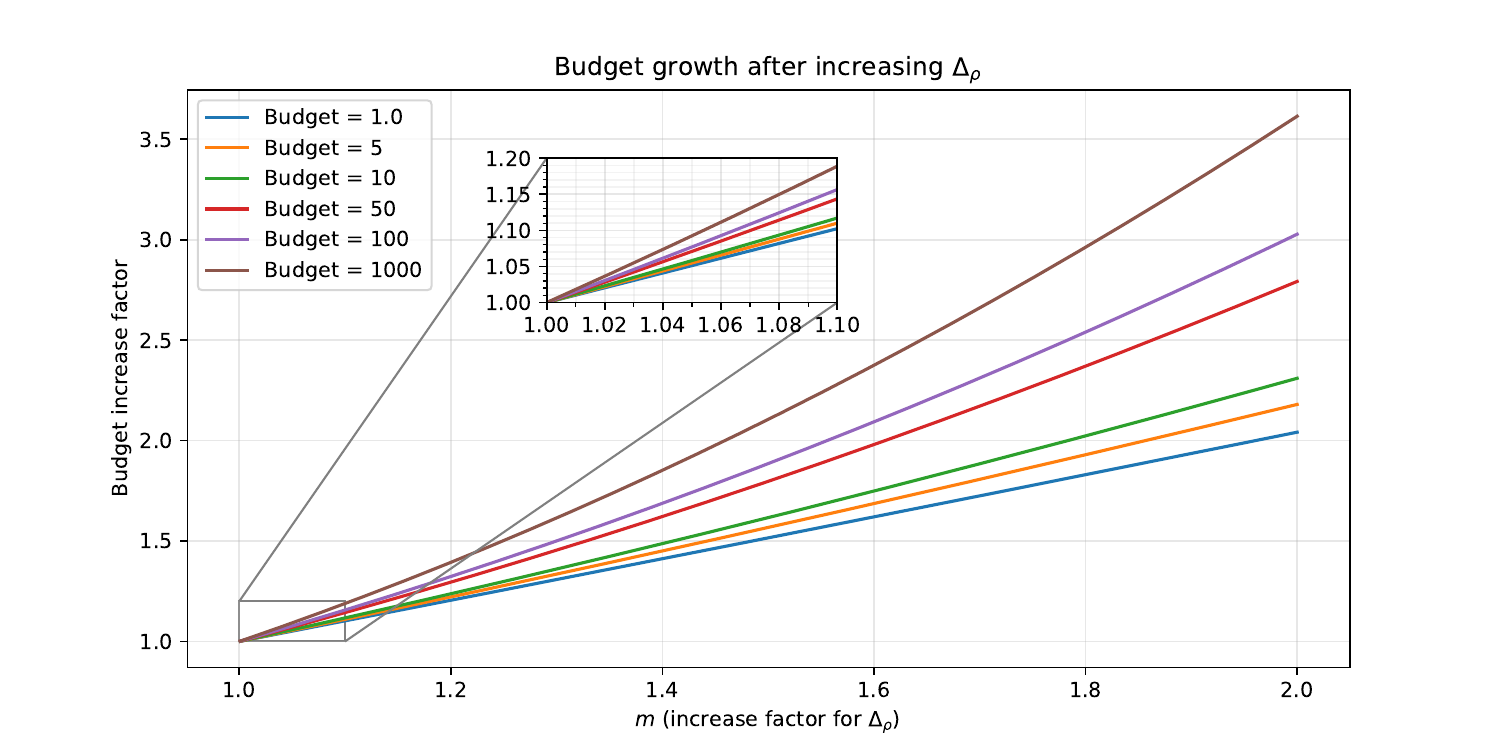}
    \caption{
    Budget recalculation under larger values of $\Delta(\rho)$. For a fixed unlearning trajectory, increasing the discrepancy estimate to
    $m\Delta(\rho)$ increases the certified budget by a smooth multiplicative factor. Thus, moderate corrections to $\Delta(\rho)$ can be handled by
    re-certification rather than rerunning unlearning.
    }
    \label{fig:budget-recalculation}
\end{figure}

\section{Broader impacts}

This work may have positive societal impact by improving practical certified unlearning methods, which can support privacy-preserving data deletion and compliance with data-removal requirements. At the same time, certified unlearning guarantees may create a false sense of complete removal if their assumptions, calibration regime, or deployment conditions are misunderstood. We therefore emphasize that our guarantees are tied to the stated certification setting and calibration procedure, and should be complemented by careful auditing in deployment.

\end{document}